\documentclass{article}

\usepackage{amsmath, amssymb, amsthm}
\usepackage{hyperref}
\usepackage{url}
\usepackage{natbib}
\usepackage[margin=1in]{geometry}  

\newtheorem{theorem}{Theorem}[section]
\newtheorem{proposition}[theorem]{Proposition}
\newtheorem{lemma}[theorem]{Lemma}
\newtheorem{corollary}[theorem]{Corollary}
\newtheorem{definition}[theorem]{Definition}
\newtheorem{assumption}[theorem]{Assumption}
\newtheorem{remark}[theorem]{Remark}


\usepackage{amsmath,amsfonts,bm}


















\def\1{\bm{1}}










\DeclareMathAlphabet{\mathsfit}{\encodingdefault}{\sfdefault}{m}{sl}
\SetMathAlphabet{\mathsfit}{bold}{\encodingdefault}{\sfdefault}{bx}{n}











\newcommand{\Var}{\mathrm{Var}}

\newcommand{\Cov}{\mathrm{Cov}}



\title{Pathway to $O(\sqrt{d})$ Complexity bound under Wasserstein metric of flow-based models}

\makeatletter
\let\save@mathaccent\mathaccent
\newcommand*\if@single[3]{%
  \setbox0\hbox{${\mathaccent"0362{#1}}^H$}%
  \setbox2\hbox{${\mathaccent"0362{\kern0pt#1}}^H$}%
  \ifdim\ht0=\ht2 #3\else #2\fi
  }
\newcommand*\rel@kern[1]{\kern#1\dimexpr\macc@kerna}
\newcommand*\widebar[1]{\@ifnextchar^{{\wide@bar{#1}{0}}}{\wide@bar{#1}{1}}}
\newcommand*\wide@bar[2]{\if@single{#1}{\wide@bar@{#1}{#2}{1}}{\wide@bar@{#1}{#2}{2}}}
\newcommand*\wide@bar@[3]{%
  \begingroup
  \def\mathaccent##1##2{%
    \let\mathaccent\save@mathaccent
    \if#32 \let\macc@nucleus\first@char \fi
    \setbox\z@\hbox{$\macc@style{\macc@nucleus}_{}$}%
    \setbox\tw@\hbox{$\macc@style{\macc@nucleus}{}_{}$}%
    \dimen@\wd\tw@
    \advance\dimen@-\wd\z@
    \divide\dimen@ 3
    \@tempdima\wd\tw@
    \advance\@tempdima-\scriptspace
    \divide\@tempdima 10
    \advance\dimen@-\@tempdima
    \ifdim\dimen@>\z@ \dimen@0pt\fi
    \rel@kern{0.6}\kern-\dimen@
    \if#31
      \overline{\rel@kern{-0.6}\kern\dimen@\macc@nucleus\rel@kern{0.4}\kern\dimen@}%
      \advance\dimen@0.4\dimexpr\macc@kerna
      \let\final@kern#2%
      \ifdim\dimen@<\z@ \let\final@kern1\fi
      \if\final@kern1 \kern-\dimen@\fi
    \else
      \overline{\rel@kern{-0.6}\kern\dimen@#1}%
    \fi
  }%
  \macc@depth\@ne
  \let\math@bgroup\@empty \let\math@egroup\macc@set@skewchar
  \mathsurround\z@ \frozen@everymath{\mathgroup\macc@group\relax}%
  \macc@set@skewchar\relax
  \let\mathaccentV\macc@nested@a
  \if#31
    \macc@nested@a\relax111{#1}%
  \else
    \def\gobble@till@marker##1\endmarker{}%
    \futurelet\first@char\gobble@till@marker#1\endmarker
    \ifcat\noexpand\first@char A\else
      \def\first@char{}%
    \fi
    \macc@nested@a\relax111{\first@char}%
  \fi
  \endgroup
}
\makeatother

\author{
\centerline{
\begin{tabular}{c c}
Xiangjun Meng\thanks{
  Division of Mathematical Sciences, School of Physical and Mathematical Sciences, Nanyang Technological University,
 Singapore.
  \texttt{xiangjun.meng@ntu.edu.sg}}
&
Zhongjian Wang\thanks{
  Division of Mathematical Sciences, School of Physical and Mathematical Sciences, Nanyang Technological University,
 Singapore.
 \texttt{zhongjian.wang@ntu.edu.sg (Corresponding)}}
\end{tabular}
}
}

\date{December 6, 2025}  

\begin{document}

\maketitle

\begin{abstract}
We provide attainable analytical tools to estimate the error of flow-based generative models under the Wasserstein metric and to establish the optimal sampling iteration complexity bound with respect to dimension as $O(\sqrt{d})$.
We show this error can be explicitly controlled by two parts: the Lipschitzness of the push-forward maps of the backward flow which scales independently of the dimension; and a local discretization error scales $O(\sqrt{d})$ in terms of dimension. The former one is related to the existence of Lipschitz changes of variables induced by the (heat) flow. The latter one consists of the regularity of the score function in both spatial and temporal directions.

These assumptions are valid in the flow-based generative model associated with the F\"{o}llmer process and $1$-rectified flow under the Gaussian tail assumption. As a consequence, we show that the sampling iteration complexity grows linearly with the square root of the trace of the covariance operator, which is related to the invariant distribution of the forward process.
\end{abstract}

\paragraph{Keywords:}
Flow-based model,\, complexity bound,\, early stopping, \,Wasserstein metric

\section{Introduction}\label{Intro}
The landscape of deep learning has been fundamentally reshaped by the emergence of powerful generative models, including Generative Adversarial Networks (GANs)~\citep{GAN,arjovsky2017wasserstein}, Variational Auto-encoders (VAEs)~\citep{VAE, kingma2019introduction}, and Normalizing Flows~\citep{NFs, wang2023scientific,wan2020vae}, which have achieved remarkable success in a wide range of applications across modalities like images, audio, and text. These models are capable of learning complex data distributions, allowing them to generate high-quality samples~\citep{achiam2023gpt,  song2020score}. 

Diffusion models (DM) are the state-of-the-art generative models, which can be analyzed via the SDE framework~\citep{song2020score}. With the same forward and backward marginal as DM, flow-based models~\citep{chen2023probability,chen2023restoration} are generative models with deterministic flow given initial distribution, offering a strong basis for statistical inference. This unique feature makes them highly effective in applications such as image and audio synthesis, as well as density estimation~\citep{cheng2024convergence}. 

Early works on DMs and flow-based models provide reverse KL guarantees~\citep{chen2023improved,benton2023nearly,conforti2025kl,li2024accelerating}. However,  for structured data, where the target typically lies on a compact sub-manifold~\citep{tenenbaum2000global,bengio2017deep}, the KL divergence between the backward process and the target is ill-defined. Therefore, one may turn to the analysis of flow-based models under the Wasserstein metric, and in this paper, we will consider the $W_2$ distance in Euclidean space, which is well-defined among distributions with finite second-order moments. One of the central difficulties in the analysis under the $W_2$ distance is the accumulation of local error in the Lyapunov-type estimate. This is in sharp contrast with KL-based analysis ~\citep{altschuler2024shifted,zhu2024inclusive,kim2012generalization} which admits the \textbf{Girsanov}’s theorem (for instance, one in \citet{chen2023improved}) showing the constant scaling of the local error. 

In light of this, \textbf{the main contribution of this paper} is to provide analytical tools that study the accumulation error along the sampling flow under the Wasserstein metric and hence ensure the optimal iteration complexity bound $O(\sqrt{d})$. More precisely, we first analyze the potential asymptotic scaling of the truncation error in terms of the temporal variable and the ambient dimension. Then we bound the accumulation of error by the Lipschitz properties of the push-forward maps of the backward flow. As a justification, we illustrate the attainability of the assumptions by showing the optimal complexity bound in F\"{o}llmer flow under the Gaussian tail assumption. Such an assumption applies to both regular and singular targets (when early stopping technique~\citep{lyu2022accelerating} is applied), extendable to infinite-dimensional settings, with further implications for Bayesian inverse problems.
\subsection{Related work}
\underline{Lipschitz changes of variables}
In the field of PDE, the Lipschitzness of transport maps was initiated by~\citet{caffarelli2000monotonicity}, who constructed such maps between log-concave probability measures. Building on this,~\citet{colombo2015lipschitz} developed global Lipschitz maps for compactly supported perturbations of log-concave measures. An alternative approach beyond optimal transport involves diffusion processes. By leveraging the maximum principle for parabolic PDEs, one can show that log-concavity is preserved along the associated diffusion semigroup~\citep{kim2012generalization}.~\citet{mikulincer2023lipschitz} obtained a sharper Lipschitz constant for measures with bounded support and Gaussian mixtures, improving Caffarelli’s result. Based on this,~\citet{dai2023lipschitz} assume a finite third moment and semi-log-convexity to construct a well-posed unit-time F\"{o}llmer flow whose terminal map is Lipschitz and pushes a Gaussian to a target measure in the unit time interval $[0,1]$.~\citet{neeman2022lipschitz} relaxed Colombo’s compact support requirement to boundedness, and~\citet{fathi2024transportation} extended it to Gaussian in $\mathbb{R}^d$ and uniform spherical measures. Furthermore,~\citet{brigati2024heat} obtained the sharpest Lipschitz bound in this setting without controlling the third-order derivative tensor of potential $\nabla^3 W$. For clarity, we summarize the assumptions on target distributions and their Lipschitz constants in Table~\ref{tab:lip_HF}, with details in Appendix~\ref{Reference}. Despite these results shed light on potential minimal assumption for the convergence guarantee of flow-based models, in later context, we will demonstrate that estimation of the time derivative of velocity field $\partial_t V$ is also crucial on the pathway of optimal complexity bounds. \\
\underline{Continuous flow-based generative models} 
Building on score-based~\citep{song2020score} and denoising diffusion models~\citep{gao2024convergence1},~\citet{salimans2022progressive} introduces stable parameterizations and a distillation method to reduce sampling steps while maintaining sample quality. Flow matching (FM)~\citep{lipman2022flow} extends continuous normalizing flows (CNFs)~\citep{chen2018neural} by training a neural ODE-parameterized vector field $v_\theta(x,t)$ to match a target velocity $v(x,t)$ along fixed probability paths, unifying diffusion and non-diffusion models for efficient and stable generation. Rectified flow~\citep{liu2022flow, rout2024semantic} learns neural ODEs that transport distributions along nearly straight paths through iterative rectification processes, yielding deterministic couplings with reduced transport cost and enabling efficient one-step simulation.
In addition, stochastic interpolants~\citep{albergo2023stochastic,albergo2022building} unify flow-based and diffusion-based methods to bridge arbitrary densities, 
\begin{equation}\label{stocha}
  X_t=I(t,X_0,X_1)+\gamma(t) z, \quad t\in[0,1],
\end{equation}
recovering the Schr\"{o}dinger bridge when the interpolant is optimized~\citep{leonard2013survey}. Recently, Flow Map Matching (FMM)~\citep{boffi2025flow} has accelerated sampling by learning the two-time flow map of generative dynamics, thereby alleviating the computational cost associated with continuous models. \citet{geng2025mean} connect one-step generative modeling to multiscale physical simulations via average velocity, achieving leading performance on ImageNet 256$\times$256 without pre-training or distillation.\\
\underline{Convergence bounds} Recent studies control the KL, $W_2$, and TV distances between the generative and target distributions to guarantee convergence and measure training discretization errors.~\citet{albergo2022building} bounded the $W_2^2$ distance by $e^{1+2K}H(\hat{v})$ under the smoothness and Lipschitz assumptions, where $H(\hat{v})$ measures discrete velocity error. ~\citet{albergo2023stochastic} derived KL-based perturbation bounds for CNF estimators, while FMM~\citep{boffi2025flow} improved $W_2$ bounds for pre-trained models via Lagrangian and Eulerian distillation losses controlling the teacher-student Wasserstein gap. The estimation error of the FM has been analyzed for typical data distributions (e.g., manifold-supported) by~\citet{benton2023error}, and a nonparametric $\mathcal{O}(n^{-1/(d+5)})$ convergence rate under early stopping~\citep{lyu2022accelerating} was established by~\citet{gao2024convergence}, where $n$ denotes the sample size. Subsequently,~\citet{cheng2024convergence} showed JKO flows reach $\mathcal{O}(\epsilon^2)$ KL error in $N\lesssim\log(1/\epsilon)$ steps, extending to non-density cases and yielding mixed KL-$W_2$ guarantees.  We summarize recent complexity results for diffusion models and flow-based models (under Wasserstein distance) in Table~\ref{tab:conver-bound}. Detailed theorems appear in Lemmas~\ref{lemma_ref11}-\ref{lemma_ref17}. In this work, we achieve an optimal
dependence of $\mathcal{O}\left(\sqrt{d}\right)$ on the data dimension $d$ without the assumption of log-concaveness of the target.
\begin{table}[ht]
\vspace{-0.4cm}
\caption{Complexity bounds for DM/flow-based models in $d$ dimensions: previous results vs.\ ours.}
\label{tab:conver-bound}
\begin{center}
\begin{tabular}{lccc}
\multicolumn{1}{c}{\bf Target $P_0$}  &\multicolumn{1}{c}{\bf Complexity}  &\multicolumn{1}{c}{\bf Result}
\\ \hline \\
$P_0$ log-concave$^*$&  $ \mathcal{O}\left(\frac{\sqrt{d}}{\epsilon_0}(\log \frac{d}{\epsilon_0})^2\right)$ & \citet{gao2024convergence1} Tab. 1 \\
G-tail Ass.$^*$ & $\mathcal{O}\left(\frac{\sqrt{d}}{\epsilon_0}\log \frac{d}{\epsilon_0^2}\right)$ & \citet{wang2024wasserstein} Cor. 3.5\\
one-side Lip+weakly log-concave$^*$ & $ \mathcal{O}\left(\frac{d^2}{\epsilon_0^2}\right)$& \citet{gentiloni2025beyond} Thm.3.5\\
weakly log-concave$^*$ & $ \mathcal{O}\left(\frac{d}{\epsilon_0^2}\right)$&  \citet{bruno2025wasserstein} Thm.3.12\\
\textbf{G-tail} Ass.~\ref{assump2} & $\mathcal{O}\left(\frac{\sqrt{d}}{\epsilon_0}\right)$ & This work Thm.\ref{thm4.2}
\end{tabular}
\end{center}
\vspace{-0.4cm}
\begin{flushleft}
\footnotesize $^*$ denotes works on diffusion models; $n$ is the sample size.
\end{flushleft}
\end{table}
\vspace{-2.0em}
\subsection{Contributions}
\begin{itemize}
\item We point out that the $W_2$-distance between the generative and target distributions is controlled by the Lipschitzness of the push-forward maps introduced by sampling flow.  By providing concrete bounds on the Lipschitz coefficient, we obtain an explicit estimate of the accumulation error. 
\item  While prior works often rely on smoothness or strict log-concavity, we adopt a general condition in applications-the Gaussian tail Assumption~\ref{assump2} to provide the well-posedness and Lipschitz regularity of F\"{o}llmer flow, with explicit, dimension-free Lipschitz bounds (Corollary~\ref{thm333} and Corollary~\ref{thm3}). 
\item By leveraging the Gaussian tail Assumption~\ref{assump2} to obtain accurate upper bounds on the time derivative of velocity field $|\partial_t V|$ (Theorem~\ref{thm1}), our framework avoids the need for end-point constraints or early stopping~\citep{lyu2022accelerating}, enabling training and sampling throughout the entire interval $t\in [0,1]$. This framework naturally extends the $\mathcal{O}\left(\sqrt{d}\right)$ complexity results of the SDE flow to the deterministic flow, achieving even better complexity than previous approaches~\citep{wang2024wasserstein}. 
\end{itemize}
\section{Flow-based Model}
We begin by introducing a unified formulation of flow-based generative models. This general framework allows the convergence analysis in Section~\ref{Main results} to apply seamlessly to both the F\"{o}llmer flow and more general sampling dynamics. Consider a continuous flow governed by a velocity field $V$ via the ODE\footnote{We used the left arrow $\overleftarrow{\cdot}$ to represent its connections to the backward process in the score based model.}
\begin{equation}\label{ODE}
 \frac{\mathrm{d}\overset{\leftarrow}{X}_t}{dt}= V\big(t,\overset{\leftarrow}{X}_t\big), \quad \overset{\leftarrow}X_{0}=x, \quad t \in [0,1].
\end{equation}
With the $N$ steps discretization in time, $0 = t_0 < t_1 < \cdots < t_N = 1$, the ODE \eqref{ODE} in each sub-interval $[t_n, t_{n+1}]$, can be interpreted as a local transport map,
\begin{equation}\label{T_n}
T_n(\overset{\leftarrow}{X}_{t_{n}}) = \overset{\leftarrow}{X}_{t_{n+1}}.
\end{equation}
The overall flow-based model $\overset{\leftarrow}{X}_1(x)$ is then obtained by the composition of transport maps
\begin{equation*}
\overset{\leftarrow}{X}_1(x) = (T_{N-1} \circ T_{N-2} \circ \cdots \circ T_0)(x).
\end{equation*}
An approximation of $\overset{\leftarrow}{X}_1(x)$ can be interpreted as approximation of $\{T_n\}_{n=0}^{N-1}$ by $\{\widetilde{T_n}\}_{n=0}^{N-1}$. 
To quantify the error of the approximation, we denote the marginal distribution of the actual state $\overset{\leftarrow}{X}_{t_{n+1}}$ by $\overset{\leftarrow}P_{t_{n+1}}$, and $\overset{\leftarrow}{Q}_{t_{n+1}}$ of the approximated state. Correspondingly we have,
\begin{equation}\label{push}
 \overset{\leftarrow}P_{t_{n+1}}=({T}{_n})_\#(\overset{\leftarrow}{P}_{t_{n}}),  \quad \overset{\leftarrow}{Q}_{t_{n+1}}=({\widetilde{T}}_n)_\#(\overset{\leftarrow}{Q}_{t_{n}}).
\end{equation}
\paragraph{F\"{o}llmer flow}For any $\varepsilon \in (0, 1)$, we consider a diffusion process $(\overset{\rightarrow}{X}_t)_{t \in [0, 1 - \varepsilon]}$  that gradually transforms the target distribution $\nu$ into a Gaussian $\mathcal{N}(0,C)$ over time by the following It\^o SDE
\begin{equation}\label{diffusion}
d\overset{\rightarrow}{X}_t = -\frac{1}{1-t}\overset{\rightarrow}{X}_t dt + \sqrt{\frac{2C}{1-t}}dW_t, \quad \overset{\rightarrow}{X}_0 \sim \nu, \quad t \in [0, 1 - \varepsilon], 
\end{equation}
where $W_t$ is a standard Brownian motion, $C$ is a symmetric, positive-definite covariance matrix. 
The transition probability distribution from $\overset{\rightarrow}{X}_0$ to $\overset{\rightarrow}{X}_t$ is given by 
\begin{equation}\label{forward}
\overset{\rightarrow}{X}_t |\overset{\rightarrow}{X}_0 = x_0 \sim \mathcal{N}\big((1 - t)x_0, t(2 - t)C\big).
\end{equation}
The marginal distribution flow $(\widebar{p}_t)_{t \in [0, 1 - \varepsilon]}$ of the forward diffusion process satisfies the Fokker-Planck-Kolmogorov (FPK) equation in an Eulerian framework
\begin{equation}\label{FP}
\partial_t \widebar{p}_t = \nabla \cdot \left(\widebar{p}_t\cdot \frac{1}{1-t}[x + C\nabla \log \widebar{p}_t (x)]\right) \quad \text{on } [0, 1 - \varepsilon] \times \mathbb{R}^d, \quad \widebar{p}_0 = \nu.
\end{equation}
Then F\"{o}llmer flow is formally defined as the backward process of such a forward diffusion~\eqref{diffusion}, while preserving the same marginal distributions in \eqref{FP}.
\begin{definition}[F\"{o}llmer flow in formal sense] \label{def1}
A F\"{o}llmer flow $(\overleftarrow{X_t})_{t \in [0,1]}$ solves the IVP
\begin{align}
\begin{cases}
\frac{\mathrm{d}\overset{\leftarrow}{X}_t}{\mathrm{d}t} = V(t,\overset{\leftarrow}{X}_t), \quad \overset{\leftarrow}{X}_0 \sim \gamma_C, \quad t \in [0,1],\label{follmer}\\
V(t,x) := \frac{1}{t}\left[x + S(t, x)\right], \quad \forall t \in (0,1]; \qquad V(0,x) := \sqrt{C} \mathbb{E}_{\nu}[X],
\end{cases}
\end{align}
$S(t,x):=C\nabla \log p_{t}(x)$ is the score function with probability density $p_{t}= \widebar{p}_{1-t}$ in forward FKP equation~\eqref{FP}. We call $V(t, x)$ a F\"{o}llmer velocity field.
\end{definition}
Following \eqref{push}, we define $\overset{\rightarrow}{P}_{t_n}$ as the marginal distribution of $\overset{\rightarrow}{X}_{t_n}$ in the forward diffusion process. Given the initial distribution $\overset{\rightarrow}{P}_{0} = P_{\rm data}$, then for all $t \in [0,1]$,
$\overset{\leftarrow}{P}_{t_n} = \overset{\rightarrow}{P}_{1-t{_n}}$.

In practice, the velocity field $V(1-t, x) = \frac{1}{1-t}\left[x + C\nabla \log \widebar{p}_t(x)\right]$ is not available since no closed form expression of $\widebar{p}_t$ is known. To this end, one approximates $V$ by a neural network $\widetilde{V}$. The network is trained by minimizing an $\mathbb{L}_2$ estimation loss,
\begin{equation}\label{loss}
\mathbb{E}_{\widebar{p}_t(x)}  \left\| \widetilde{V}(1-t, x) - \frac{1}{1-t}\left[x + C\nabla \log \widebar{p}_t(x)\right] \right\|^2.
\end{equation}
For simplicity, we introduce the notation $X_t := (1-t)X_0 + \sqrt{t(2-t)C}\,\mathcal{N}$, which shares the same marginal distribution as $\overset{\rightarrow}{X}_t$ in \eqref{forward}. Then the velocity field ${V}(1-t, x)$ can be expressed as a conditional expectation~\citep{EAJAM-15-4},
\begin{equation*}
V(1-t,X)
:=\frac{1}{1-t}\bigl[X+S(1-t,X)\bigr]
=
\mathbb{E}_{X_0|X_t}\left[
\frac{1}{1-t}X_t
-
\frac{X_t-(1-t)X_0}{(1-t)t(2-t)}
\,\bigg|\, X_t=X
\right].
\end{equation*}
With an appropriate weight of the $t$-variable, the loss in \eqref{loss} becomes an approximation of this conditional expectation via mean-squared prediction error,
\begin{equation*}
\begin{aligned}
&\mathbb{E}_{X_0,\; N\sim\mathcal{N}(0,I_d),\; t}\left[
\lambda(t)
\left\|\widetilde{V}\left(1-t,X_t\right)-\frac{1}{1-t}X_t+
\frac{\sqrt{C}\mathcal{N}}{(1-t)\sqrt{t(2-t)}}
\right\|^{2}
\right].
\end{aligned}
\end{equation*}
After training, with $\widetilde{V}(1-t, x)$, one can generate samples of the target distribution via an Euler-type discretization of the continuous-time process, starting from the Gaussian initialization $\gamma_C$,
\begin{equation}\label{discret-follmer}
\frac{\mathrm{d}\overset{\leftarrow}{Y}_t}{dt}= \widetilde{V}(t_n,\overset{\leftarrow}{Y}_{t_n}), \quad \overset{\leftarrow}{Y}_{t_0}\sim \gamma_C, \quad t \in [t_n,t_{n+1}],\quad n=0,1\ldots,N-1.
\end{equation} 
Note that \eqref{discret-follmer} defines the transport map 
$\widetilde{T}_n$ for the learned F\"{o}llmer flow, governed by the approximate velocity field $\widetilde{V}(t_n, \overset{\leftarrow}{Y}_{t_n})$ over the sub-interval $[t_n, t_{n+1}] \subset [0,1]$. Distribution of generation $\overset{\leftarrow}{Q}_t$ is then defined by \eqref{push}. 
\paragraph{Well-posedness of F\"{o}llmer flow}Under appropriate assumptions on the target distribution $\nu$, one can show the F\"{o}llmer flow being the time-reversal of the forward diffusion process~\eqref{diffusion}. For instance, under third moment (Assumption~\ref{assump1}), semi-log-convexity (Assumption~\ref{Semi-log-convexity}) and the structural assumptions (Assumption~\ref{ass:struct}) on $\nu$, 
\citet{dai2023lipschitz} studied the F\"{o}llmer flow in the case $C=I_d$, where the score function is given by
\begin{equation*}
S(t, x) := \nabla \log \int_{\mathbb{R}^d} (2\pi(1-t^2))^{-\frac{d}{2}} \exp \left( -\frac{|x-ty|^2}{2(1-t^2)} \right) \nu(dy).
\end{equation*}
It can be shown that the velocity field $V$ is Lipschitz continuous in $x$ with a well-defined initial condition $V(0,x)$. By the Cauchy-Lipschitz theory~\citep{ambrosio2014continuity}, one can define a Lagrangian flow $(X_t^*)_{t \in [0, 1]}$ governed by the well-posed ODE system,
\begin{equation*}
dX_t^* = -V(1-t, X_t^*)dt, \quad X_0^* \sim \nu, \quad t \in [0, 1],
\end{equation*}
sharing the same marginal distribution with \eqref{diffusion}.

In this work, we study the F\"{o}llmer flow with correlated Gaussian initial based on the Gaussian tail Assumption~\ref{assump2}, and retain the spatially anisotropic noise assumption ($C \neq I_d$) to allow our theory to generalize to infinite-dimensional settings requiring compactification~\citep{lim2025score}; We refer the reader to Theorem~\ref{thm1} for the regularity of the velocity field and Lemma~\ref{thm2} for the proof of well-posedness.

\paragraph{General Notations}
Let $\gamma_C$ denote the density of $\mathcal{N}(0, C)$. For an $n\times n$ matrix $A$, the operator norm $\|\cdot\|$ is defined as
\begin{equation*}
\|A\| = \sup_{v\neq 0} \frac{|Av|}{|v|} = \text{largest eigenvalue of } \sqrt{A^T A}.
\end{equation*}
For symmetric positive-definite $A$, define the weighted $\ell_2$ norm
\begin{equation*}
|x|_A^2 := (A^{-1/2}x, A^{-1/2}x),
\end{equation*}
which reduces to the standard $\ell_2$ norm $|\cdot|$ when $A=I$. For a vector (matrix)-valued function $f(x)$, 
\begin{equation*}
|f|_\infty = \sup_x |f(x)|,\quad (\|f\|_\infty = \sup_x \|f(x)\|).
\end{equation*}

\section{Main results}\label{Main results}
In this section, we present the main results. 
Our analysis begins with a general flow-based framework (not necessarily restricted to the F\"{o}llmer flow), through which we develop Wasserstein-based analytical tools that yield an optimal iteration complexity bound of $\sqrt{d}$.
We then validate the assumptions and present the complexity results for the F\"{o}llmer flow and $1$-rectified flow under the Gaussian tail assumption.
\subsection{Lipschitz changes of variables implies Wasserstein bound of
flow-based models}
For the sake of compactness, we impose the following assumption on the second-order moment.
\begin{assumption}[Second moment]\label{assump1_seconed}
The data distribution has a bounded second moment, $M_2:=\mathbb{E}_{{p}_0} |x|^2 < \infty$.
We further denote,
\begin{equation*}
    M_0 = \max\{\operatorname{Tr}(C), M_2\},
\end{equation*}
relates to the maximum second-order moment, where $C$ is a symmetric, positive-definite covariance matrix. 
\end{assumption}
We consider a general covariance matrix $C$  to cover both the identity case $C = I_d$ and the correlated case $C \neq I_d$. In the main text, we primarily focus on the former, yielding $\operatorname{Tr}(C)=d$ and thus $M_0=\mathcal{O}(d)$ with dimension $d$. At the same time, we retain $C \neq I_d$ in the derivation to extend our theory to infinite-dimensional settings~\citep{lim2025score}, with the general case further discussed in Appendix~\ref{Bayesian_part} for Bayesian inverse problems.
\\
Next, we make three assumptions, each holding with some dimension-free constants. We regard these assumptions as generally valid, and under them, our convergence result Theorem~\ref{444} can be established. 
\begin{assumption}[Lipschitzness of $T$]\label{assump111}
$\forall n=0,\ldots,N-1$, $\text{Lip}({T}_n)<\infty$,  and  $\prod_{j=0}^n \text{Lip}({T}_j)<\infty$.
\end{assumption}
We will justify the attainability of the Assumption~\ref{assump111} in Corollary~\ref{thm333} by invoking the lipschitz property of the velocity field established in Theorem~\ref{thm1}.
Similar to Assumption~\ref{assump111} which imposes Lipschitz continuity of $T$, we also assume the Lipschitz continuity of $\widetilde{T}$ as stated below.
\begin{assumption}[Lipschitzness of $\widetilde{T}$]\label{assump222}
$\forall n=0,\ldots,N-1$, $\text{Lip}(\widetilde{T}_n)<\infty$,  and  $\prod_{j=0}^n \text{Lip}(\widetilde{T}_j)<\infty$.
\end{assumption}
We will verify Assumption~\ref{assump222} in Corollary~\ref{thm3} by leveraging the lipschitz property of the learned velocity field stipulated in
Assumption~\ref{assump4}.
The final assumption concerns the local discretization error between $T$ and $\widetilde{T}$  at each time step $h$,  as described below.
\begin{assumption}[Accuracy of approximation]\label{assump333}
 There exists constants $\widebar{K},\widebar{K_1}, \widebar{K_2},\epsilon$, such that
\begin{equation*}
\sqrt{\mathbb{E}_{x\sim\overset{\leftarrow}P_{t_{n}}} |\widetilde{T}_n(x) - {T}_n(x)|^2} \leq h\left(\Big(\widebar{K}\sqrt{M_0}+\frac{\widebar{K_1}}{\sqrt{1-t_n^2}}+\widebar{K_2}\Big)h+\epsilon\right),
\end{equation*}
with time step size $h=t_{n+1}-t_n$.
\end{assumption}
This scaling follows since
\begin{equation*}
\widetilde{T}_n(x)- {T}_n(x)= h\big(V(x)-\widetilde{V}(x)\big)+\mathcal{O}(h^2),
\end{equation*}
as verified in the F\"{o}llmer case Theorem~\ref{thm4.2}. 
The term $\mathcal{O}(h)$ reflects the $\epsilon$-accuracy of the learned velocity $\widetilde{V}(x)$ (Assumption~\ref{assump3}), while the term $\mathcal{O}(h^2)$ stems from the Taylor expansion of
$T_n(x)$ over $[t_n, t_{n+1}]$ and depends on its regularity, possibly also on ambient dimension $d$ and time $t$.

Next, we outline the core proof strategy of this work. The key step is to demonstrate the Lipschitz continuity of both the original and discretized flows, which is critical for guaranteeing the convergence of flow-based generative models.
\begin{theorem}\label{444}
Assume that the target distribution satisfies Assumption~\ref{assump1_seconed} and follows Lipschitzness Assumption~\ref{assump111}, \ref{assump222}, and approximation error Assumption~\ref{assump333}. With constant step size $h$, the Wasserstein-2 distance between the target distribution $\overset{\rightarrow}P_{0}=\overset{\leftarrow}P_{1}$ and the generation $\overset{\leftarrow}Q_{1}$ is bounded as, 
\begin{equation}\label{eqn:statement of Lip theorem}
\begin{aligned}  
 \mathcal{W}_2(\overset{\leftarrow}P_{1}, \overset{\leftarrow}Q_{1})
 \leq& \left(\prod_{j=0}^{N-1}\text{Lip}(\widetilde{T}_j)\right)\mathcal{W}_2(\overset{\leftarrow}P_{0}, \overset{\leftarrow}Q_{0})\\
 &+h\sum_{k=0}^{N-2}\prod_{j=k}^{N-2}\text{Lip}(\widetilde{T}_j)\left(\Big(\widebar{K}\sqrt{M_0}+\frac{\widebar{K_1}}{\sqrt{1-t_j^2}}+\widebar{K_2}\Big)h+\epsilon\right).
\end{aligned}
\end{equation}
\end{theorem}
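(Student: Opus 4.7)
The plan is to reduce this $N$-step bound to a standard one-step-plus-discrete-Gr\"{o}nwall argument. Set $e_n := \mathcal{W}_2(\overset{\leftarrow}P_{t_n},\overset{\leftarrow}Q_{t_n})$. Using the push-forward identities \eqref{push} and inserting the auxiliary law $(\widetilde{T}_n)_\#\overset{\leftarrow}P_{t_n}$, I apply the triangle inequality in $\mathcal{W}_2$ to obtain
\[
e_{n+1} \;\le\; \mathcal{W}_2\bigl((T_n)_\#\overset{\leftarrow}P_{t_n},\,(\widetilde{T}_n)_\#\overset{\leftarrow}P_{t_n}\bigr) + \mathcal{W}_2\bigl((\widetilde{T}_n)_\#\overset{\leftarrow}P_{t_n},\,(\widetilde{T}_n)_\#\overset{\leftarrow}Q_{t_n}\bigr).
\]

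The two summands now receive different treatments. For the first, the two measures share the common source $\overset{\leftarrow}P_{t_n}$, so the diagonal coupling gives
\[
\mathcal{W}_2\bigl((T_n)_\#\overset{\leftarrow}P_{t_n},\,(\widetilde{T}_n)_\#\overset{\leftarrow}P_{t_n}\bigr) \;\le\; \sqrt{\mathbb{E}_{x\sim\overset{\leftarrow}P_{t_n}}\lvert T_n(x)-\widetilde{T}_n(x)\rvert^{2}},
\]
which by Assumption~\ref{assump333} is bounded by $\delta_n:=h\bigl((\widebar{K}\sqrt{M_0}+\widebar{K_1}/\sqrt{1-t_n^2}+\widebar{K_2})h+\epsilon\bigr)$. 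For the second, I lift an optimal coupling of $(\overset{\leftarrow}P_{t_n},\overset{\leftarrow}Q_{t_n})$ through the common Lipschitz map $\widetilde{T}_n$ and invoke Assumption~\ref{assump222} to obtain
\[
\mathcal{W}_2\bigl((\widetilde{T}_n)_\#\overset{\leftarrow}P_{t_n},\,(\widetilde{T}_n)_\#\overset{\leftarrow}Q_{t_n}\bigr) \;\le\; \text{Lip}(\widetilde{T}_n)\,e_n.
\]
Combining these produces the clean one-step recursion $e_{n+1}\le \text{Lip}(\widetilde{T}_n)\,e_n+\delta_n$, valid for every $n=0,\ldots,N-1$.

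The last step is to iterate this recursion from $n=0$ to $n=N-1$, which yields in closed form
\[
e_N \;\le\; \Bigl(\prod_{j=0}^{N-1}\text{Lip}(\widetilde{T}_j)\Bigr)\,e_0 \;+\; \sum_{k=0}^{N-1}\Bigl(\prod_{j=k+1}^{N-1}\text{Lip}(\widetilde{T}_j)\Bigr)\,\delta_k.
\]
Substituting the explicit form of $\delta_k$ from Assumption~\ref{assump333} and rewriting the empty-product term recovers the right-hand side of \eqref{eqn:statement of Lip theorem} (up to a harmless relabelling of the product/summation bounds).

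The proof itself is essentially mechanical: the triangle inequality, push-forward Lipschitz bound, and discrete Gr\"{o}nwall iteration are standard. The only two places where I would be careful are (i) justifying the push-forward contraction in $\mathcal{W}_2$, which needs only Borel measurability and Lipschitzness of $\widetilde{T}_n$, and (ii) matching the $L^2$ norm that the diagonal coupling forces against the assumed bound in Assumption~\ref{assump333}. All genuine difficulty is pushed into the assumptions themselves: verifying that $\prod_j \text{Lip}(\widetilde{T}_j)$ stays $O(1)$ uniformly in $N$ (so that error accumulation is at most $O(1)$ rather than exponential) and showing the $h^2$-consistency estimate with the stated $(\widebar K\sqrt{M_0},\widebar{K_1}/\sqrt{1-t^2},\widebar{K_2})$ scaling; these are precisely the contents of Corollaries~\ref{thm333} and~\ref{thm3}, Theorem~\ref{thm1}, and Theorem~\ref{thm4.2}, and are what drive the announced $O(\sqrt d)$ complexity.
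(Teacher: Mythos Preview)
Your proposal is correct and follows essentially the same approach as the paper: both arguments insert the auxiliary point $\widetilde{T}_n$ applied to the true law, split via the triangle inequality, bound one piece by Assumption~\ref{assump333} and the other by the Lipschitz constant of $\widetilde{T}_n$, and then iterate via discrete Gr\"{o}nwall. The only cosmetic difference is that the paper first upper-bounds $\mathcal{W}_2(\overset{\leftarrow}P_{t_{n+1}},\overset{\leftarrow}Q_{t_{n+1}})$ by the $L^2$ cost under a fixed coupling of $(\overset{\leftarrow}P_{t_n},\overset{\leftarrow}Q_{t_n})$ and applies Minkowski there, whereas you apply the triangle inequality directly in $\mathcal{W}_2$; the resulting one-step recursion and its unrolling are identical.
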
 
Proof see Appendix~\ref{AppenA.1}.

This result shows that the first term in the bound scales the initial discrepancy $\mathcal{W}_2(\overset{\leftarrow}P_{0}, \overset{\leftarrow}Q_{0})$ by the product of Lipschitz constants $\left(\prod_{j=0}^{N-1}\text{Lip}(\widetilde{T}_j)\right)$, and the second term $\left(\Big(\widebar{K}\sqrt{M_0}+\frac{\widebar{K_1}}{\sqrt{1-t_j^2}}+\widebar{K_2}\Big)h+\epsilon\right)$, captures accumulated discretization errors (Assumption~\ref{assump333})  and a local discretization error scales $\mathcal{O}(\sqrt{M_0})$,  yielding the $\mathcal{O}(\sqrt{d})$ dependence in the isotropic case $C = I_d$.
Similar results \eqref{simi1} and \eqref{simi2} are listed in Proposition~\ref{lemma_ref11} and Proposition~\ref{lemma_ref14}, while the precise scaling of the second term remains unspecified. To be noted, in the limit of $h\to0$, $ h\sum_{k=0}^{N-2}\frac{1}{\sqrt{1-t_k^2}}\to \frac{\pi}{2}$.

Notably, Theorem~\ref{444} is of general validity: it applies to all flow-based models and their discrete counterparts satisfying the relevant assumptions, and is not limited to the F\"{o}llmer case.
\subsection{Analyses of F\"{o}llmer flow under Gaussian tail assumption}
In this section, we focus on the F\"{o}llmer flow and derive the main convergence result based on Theorem~\ref{444} through Lipschitz changes of variables, which plays a central role in our analysis.
\begin{assumption}[Third moment]\label{assump1}
The data distribution has a bounded third moment, i.e. $\mathbb{E}_{{p}_0} |x|^3 < \infty$.
\end{assumption}
We note that the third-moment assumption~\ref{assump1} is only required to ensure well-posedness of F\"{o}llmer flow at $t=0$ in the proof of Lemma~\ref{thm2} (see Appendix \ref{AppenC}). For our complexity bound, the second moment Assumption~\ref{assump1_seconed} is sufficient.

Our analysis is based on the following key assumption that the tail distribution of the target is similar to a Gaussian distribution with covariance matrix $A$. 
\begin{assumption}[\textbf{G-tail}]\label{assump2}
The density of target distribution $\widebar{p}_0\in C^2(\mathbb{R}^d)$ and has the following tail decomposition:
\begin{equation*}
\widebar{p}_0(x) = \exp\left(-\frac{|x|_A^2}{2}\right)\exp(h(x)), 
\end{equation*}
where there are independent of dimension constants such that,
\begin{itemize}
    \item[(i)] $A$ is a symmetric, positive-definite matrix which can be simultaneously diagonalized with $C$, and  
    \begin{equation*}
    \|A\|< \infty,\quad \|C\|< \infty,\quad
    \|AC^{-1}\| < \infty, \quad \|CA^{-1}\| < \infty. 
    \end{equation*}
    \item[(ii)] the remainder term $ h $ follows  
    \begin{equation*}
    |\sqrt{C}\nabla h|_\infty < \infty, \quad \|C\nabla^2h\|_\infty < \infty. 
    \end{equation*}
\end{itemize}
\end{assumption}
The Gaussian tail Assumption~\ref{assump2} generalizes the log-concavity condition in~\citet{ding2023sampling,gao2024convergence} to heavier-than-sub-Gaussian tails while ensuring sufficient decay for well-posedness and convergence. Although stronger than the weak semi-log-concavity assumption of~\citet{chaintron2025propagation,bruno2025wasserstein}, it yields sharper guarantees: weak semi-log-concavity implies $O(d)$ sampling complexity, whereas the Gaussian tail assumption achieves $O(\sqrt{d})$ scaling in a non-log-concave setting and also accommodates realistic distributions such as early stopping, see \eqref{mani}.

The following theorem bounds the Lipschitz constant and the time derivative of the F\"{o}llmer velocity field in \eqref{follmer} under the Gaussian tail Assumption~\ref{assump2}, supporting the Lipschitz changes of variables in Corollary~\ref{thm333} and convergence rate in Theorem~\ref{thm4.2}.
\begin{theorem}[Regularity of the velocity field]\label{thm1}
The Gaussian tail Assumption~\ref{assump2} implies 
the F\"{o}llmer velocity field $V(t,\cdot)$ has the following regularity properties:
\begin{equation}\label{regular}
\begin{aligned}
    |V(t,x)|&\leq K_0+K_2t|x|,\quad  &\forall t \in [0,1],\\
    \|\nabla V(t,\cdot)\|_\infty&\leq (K_1+K_2)t, &\quad  \forall t \in [0,1],\\
    |\partial_tV(t,x)|&\leq K_5|x|+ \frac{K_6}{\sqrt{1-t^2}}+ K_7,&\quad  \forall t \in [0,1),
\end{aligned} 
\end{equation}
where the coefficients are dimension-free constants, given explicitly in Table \ref{tab:coe} of Appendix~\ref{AppenA}.
\end{theorem}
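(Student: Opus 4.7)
The plan is to derive a compact representation of $V(t,x)$ that cleanly separates the Gaussian-tail contribution from the $h$-perturbation, and then to control each piece via $C$-weighted operator norms so that every constant remains dimension-free. Starting from $p_t(x) = \int \gamma_{(1-t^2)C}(x - tx_0)\, \widebar{p}_0(x_0)\, dx_0$ and substituting the decomposition $\widebar{p}_0(x_0) \propto e^{-x_0^\top A^{-1} x_0/2}\, e^{h(x_0)}$ from Assumption~\ref{assump2}, I would complete the square in $x_0$ to write
\begin{equation*}
p_t(x) \;\propto\; \exp\!\bigl(-\tfrac{1}{2}\, x^\top B_t\, x\bigr)\, q_t(x), \qquad B_t := (t^2 A + (1-t^2)C)^{-1},
\end{equation*}
where $q_t(x) := \int \gamma_{\Sigma_t}(x_0 - \mu_t(x))\, e^{h(x_0)}\, dx_0$, with $\Sigma_t^{-1} := \tfrac{t^2}{1-t^2}C^{-1} + A^{-1}$ and $\mu_t(x) := \tfrac{t}{1-t^2}\Sigma_t C^{-1} x$. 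Combining the matrix identity $I - CB_t = t^2(A-C)B_t$ (immediate from simultaneous diagonalizability of $A, C$) with a Gaussian integration-by-parts against $\gamma_{\Sigma_t}$ yields the compact form
\begin{equation*}
V(t,x) \;=\; t(A-C)B_t\, x \;+\; ACB_t\, \tilde{\E}_{t,x}\!\left[\nabla h(X_0)\right],
\end{equation*}
where $\tilde{\E}_{t,x}$ denotes expectation under the Gaussian-tilted posterior $\tilde p_{t,x}(x_0) \propto \gamma_{\Sigma_t}(x_0 - \mu_t(x))\, e^{h(x_0)}$. This single identity is the backbone of all three estimates.

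The first two bounds should then follow by tracking how the matrix factors scale. Simultaneous diagonalization of $A, C$ reduces each operator norm to a scalar function of the eigenvalue pair $(a,c)$; using $a/c,\, c/a \le \max(\|AC^{-1}\|,\|CA^{-1}\|)$ and $\|A\|, \|C\| < \infty$, I would verify the dimension-free estimates $\|(A-C)B_t\| \le K_2$ and $\|ACB_t\, C^{-1/2}\| \le K_0'$, which together with $|\sqrt{C}\, \tilde{\E}_{t,x}[\nabla h]| \le |\sqrt{C}\nabla h|_\infty$ give $|V(t,x)| \le K_0 + K_2 t |x|$. For $\|\nabla V(t,\cdot)\|_\infty$, I differentiate $V$ in $x$ and invoke the covariance identity $\nabla_x \tilde{\E}_{t,x}[f] = -\widetilde{\Cov}(f,\nabla_x U_{t,x})$ with $U_{t,x}(x_0) = \tfrac{1}{2}(x_0-\mu_t)^\top \Sigma_t^{-1}(x_0-\mu_t) - h(x_0)$; applying the tilted Stein identity $\tilde{\E}[f\,\Sigma_t^{-1}(X_0-\mu_t)] = \tilde{\E}[f\nabla h] + \tilde{\E}[\nabla f]$ then converts everything into a bounded combination of $\tilde{\E}[\nabla^2 h] + \widetilde{\Var}(\nabla h)$. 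Controlling this via the $C$-weighted inequalities $\|C^{1/2}\tilde{\E}[\nabla^2 h]C^{1/2}\| \le \|C\nabla^2 h\|_\infty$ and $\|C^{1/2}\widetilde{\Var}(\nabla h)C^{1/2}\| \le |\sqrt{C}\nabla h|_\infty^2$, together with the explicit factor $t$ coming from $\Sigma_t/(1-t^2) = ACB_t$, yields $\|\nabla V(t,\cdot)\|_\infty \le (K_1+K_2)t$.

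The time-derivative estimate is where I expect the main obstacle. Differentiating $V(t,x)$ in $t$ produces: (i) derivatives of the deterministic matrices $t(A-C)B_t$ and $ACB_t$, which are bounded rational functions of $t$ on each eigenvalue and contribute $K_5|x|+K_7$; and (ii) the term $ACB_t\,\partial_t \tilde{\E}_{t,x}[\nabla h]$, which via $\partial_t \tilde{\E}[\cdot] = -\widetilde{\Cov}(\cdot,\partial_t U_{t,x})$ calls for splitting $\partial_t U_{t,x}$ into a linear part $L = -(\partial_t \mu_t)^\top \Sigma_t^{-1}(X_0-\mu_t)$ and a quadratic part $Q = \tfrac{1}{2}(X_0-\mu_t)^\top \partial_t \Sigma_t^{-1}(X_0-\mu_t)$. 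One application of the tilted Stein identity collapses $\widetilde{\Cov}(\nabla h, L)$ into $-[\tilde{\E}[\nabla^2 h]+\widetilde{\Var}(\nabla h)]\,\partial_t \mu_t$, which is bounded in $C$-weighted norm and absorbs into the $K_5|x|$ piece. The hard part is the quadratic contribution: since $\partial_t \Sigma_t^{-1} = \tfrac{2t}{(1-t^2)^2}C^{-1}$ carries a $(1-t^2)^{-2}$ singularity, a direct Cauchy--Schwarz estimate would give an $\sqrt{d}/(1-t^2)$ bound, which is both dimension-dependent and too singular. My plan here is to iterate the tilted Stein identity once more on the quadratic form $Y^\top M Y$ (with $Y := X_0 - \mu_t$, $M := \partial_t \Sigma_t^{-1}$); in the common eigenbasis of $A, C$ this produces expressions of the form $M_j [\Sigma_t]_{jj}\,\tilde{\E}[Y_j\,(\cdots)]$, where the coefficient $M_j[\Sigma_t]_{jj}$ is singular of order $(1-t^2)^{-1}$ but the surviving first-moment factor scales like $\sqrt{[\Sigma_t]_{jj}} \sim \sqrt{1-t^2}$, leaving only a $(1-t^2)^{-1/2}$ singularity. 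The $C$-weighted bounds on $\nabla h$ and $\nabla^2 h$ then absorb the sums over eigenmodes into dimension-free constants, delivering exactly the $K_6/\sqrt{1-t^2}$ term in the claimed estimate.
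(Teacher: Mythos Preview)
Your representation $V(t,x)=t(A-C)B_t\,x+ACB_t\,\tilde{\E}_{t,x}[\nabla h]$ and the Stein/integration-by-parts treatment of the first two bounds are exactly the paper's route (the paper calls the second piece $\tfrac{1}{t}\widetilde{S}(t,x)$ and the first $\tfrac{1}{t}(I-C\widebar{A}_t^{-1})x$, but after simplification they coincide with yours). No issue there.

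The substantive divergence is in the $\partial_t V$ estimate, and here your plan has a real gap. After one tilted-Stein step on the quadratic part you correctly arrive at objects of the form
\[
ACB_t\Bigl(\widetilde{\Cov}\bigl(\nabla h,\;Y^\top D\nabla h\bigr)\;+\;\tilde{\E}\bigl[\nabla^2 h\,D\,Y\bigr]\Bigr),\qquad D:=(\partial_t\Sigma_t^{-1})\Sigma_t,
\]
with $\|D\|\sim(1-t^2)^{-1}$. Your proposal is to control the remaining first moment $\tilde{\E}[Y_j(\cdots)]$ by $\sqrt{[\Sigma_t]_{jj}}\sim\sqrt{1-t^2}$ and then ``absorb the sums over eigenmodes'' into $|\sqrt{C}\nabla h|_\infty$ and $\|C\nabla^2h\|_\infty$. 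That does not close dimension-freely: a componentwise Cauchy--Schwarz bound $|\tilde{\E}[Y_j\phi_j]|\le\sqrt{\tilde{\E}[Y_j^2]}\,\sqrt{\tilde{\E}[\phi_j^2]}$ followed by summation produces a factor comparable to $\bigl(\sum_j D_{jj}^2\,\tilde{\E}[Y_j^2]\bigr)^{1/2}\sim \sqrt{\operatorname{tr}(C)}/\sqrt{1-t^2}$, i.e.\ a $\sqrt{d}$ loss. The alternative of applying Stein once more to kill the surviving $Y_j$ would need $\partial_j$ of $\nabla^2 h$, hence control of $\nabla^3 h$, which Assumption~\ref{assump2} does not provide.

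The paper handles this term by a different mechanism you do not mention. It writes $\partial_t V$ as a covariance $\Cov_{p_t(\cdot|x)}\bigl(g,\nabla_y h(\sqrt{C}y)\bigr)$ (their $g$ is your $-\partial_t U_{t,x}$), applies Cauchy--Schwarz, and then \emph{splits into two regimes} according to the size of $\|B(t)\|$ relative to $\|C\nabla^2h\|_\infty^{-1}$. In the near-$t=1$ regime the tilted posterior is strictly log-concave (Hessian $\succeq\tfrac12 B(t)^{-1}$), so the Brascamp--Lieb inequality yields
\[
\Var(g)\le \tilde{\E}\bigl[\nabla g^\top\bigl(B(t)^{-1}-C\nabla^2h\bigr)^{-1}\nabla g\bigr]
\quad\text{and}\quad
\Var\bigl(\nabla_y h\bigr)\lesssim \|C\nabla^2h\|_\infty^2\,\|B(t)\|.
\]
The extra inverse-Hessian factor $\sim B(t)$ is precisely what supplies a $(1-t^2)$ cancellation without generating a trace, and the second variance contributes the additional $\sqrt{1-t^2}$ that downgrades the singularity to $(1-t^2)^{-1/2}$. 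In the complementary regime $\|B(t)\|$ is bounded below, so $1/\sqrt{1-t^2}$ is harmless and crude second-moment bounds suffice. This two-case split together with Brascamp--Lieb is the missing ingredient in your outline; replacing it by iterated Stein alone, under only first- and second-derivative control of $h$, does not appear to yield a dimension-free estimate of the quadratic contribution.
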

To handle the blow-up of $|\partial_tV(t,x)|$ near $t = 0, 1$,~\citet{ding2023sampling} restrict $t$ to $[\delta, 1 - \delta]$. In particular,~\citet{gao2024convergence} shows Lipschitz continuity of $V$ in $t$ over $[0,1-\delta_0]$ with constant scaling as $\mathcal{O}(\delta_0^{-2})$. In contrast, under our Gaussian tail assumption, the control over the second derivative of the tail allows us to bound $|\partial_t V(t,x)|$
using techniques such as the Brascamp-Lieb inequality~\citep{brascamp1976extensions}. This analysis reveals that the term $\frac{1}{\sqrt{1-t^2}}$ is integrable on $[0,1]$, thus posing no obstacle to convergence, allowing training and sampling over the full interval $t \in [0,1]$. More importantly, this approach, to our knowledge, is the first to yield the improved $\mathcal{O}(\sqrt{d})$ complexity bound, as formally stated in Corollary~\ref{corollary1}.

Detailed proof of Theorem~\ref{thm1} is provided in Appendix~\ref{AppenA}.

\begin{remark}
Motivated by the averaged-velocity construction in MeanFlow~\citep{geng2025mean}, 
we introduce an analogous notion for the F\"{o}llmer flow and define the \emph{averaged F\"{o}llmer velocity} as
\[
\widebar{V}(x,r,t)
:= \frac{1}{t-r} \int_r^{t} V(\tau,x)\, d\tau .
\]
Under the regularity condition~\eqref{regular} satisfied by the F\"{o}llmer velocity field, 
a direct calculation gives the uniform bound
\[
\bigl|\widebar{V}(x,r,t)\bigr|
\;\le\;
K_0 \;+\; \frac{t+r}{2}\, K_2\, |x|,
\]
demonstrating that the averaged F\"{o}llmer velocity preserves the same linear growth property as the original velocity field.
\end{remark}

Under the preceding assumptions and analysis, we establish the well-posedness of the F\"{o}llmer model $(\overset{\leftarrow}{X}_t)_{t \in [0,1]}$ in the following lemma. 
\begin{lemma}[Well-posedness]\label{thm2}
Suppose that the third moment Assumption~\ref{assump1} and the Gaussian tail Assumption~\ref{assump2} hold. 
 Then the F\"{o}llmer velocity field is well-defined at the $t=0$, in the sense that
\begin{equation}\label{V_0}
 V(0,x) :=\lim_{t \rightarrow 0} V(t, x) = \lim_{t \rightarrow 0} \frac{x + S(t, x)}{t} = \sqrt{C}\mathbb{E}_{\widebar{p}_0}[X].
\end{equation}
Consequently, the F\"{o}llmer flow $(\overset{\leftarrow}{X}_t)_{t \in [0,1]}$  is a unique solution to IVP \eqref{follmer}. Moreover, the push-forward measure satisfies
\[
\gamma_C \circ (\overset{\leftarrow}{X}_1)^{-1} = \widebar{p}_0.
\]
\end{lemma}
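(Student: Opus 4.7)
The plan is to establish the three claims in order: existence and value of the limit~\eqref{V_0}, well-posedness of the IVP~\eqref{follmer}, and the pushforward identity. The regularity bounds of Theorem~\ref{thm1} do most of the work for the last two claims; the third-moment Assumption~\ref{assump1} enters only at the left endpoint $t=0$, while the Gaussian-tail Assumption~\ref{assump2} is what makes the constants dimension-free.

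For the limit at $t=0$, I would begin with a Tweedie-type identity. Writing $g(x\mid x_0;t)$ for the density of $\mathcal{N}(tx_0,(1-t^2)C)$ (the conditional law of $\overset{\rightarrow}{X}_{1-t}$ given $X_0=x_0$) and differentiating under the integral in $\nabla_x\log\int g(x\mid x_0;t)\,\nu(dx_0)$, a short calculation yields
\begin{equation*}
S(t,x)=\frac{t\,\bar x_0(t,x)-x}{1-t^2},\qquad V(t,x)=\frac{\bar x_0(t,x)-tx}{1-t^2},\quad t\in(0,1],
\end{equation*}
where $\bar x_0(t,x):=\mathbb{E}[X_0\mid\overset{\rightarrow}{X}_{1-t}=x]$. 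As $t\downarrow 0$, the conditioning variable $\overset{\rightarrow}{X}_1\sim\mathcal{N}(0,C)$ is independent of $X_0$, so the posterior collapses to the prior and $\bar x_0(t,x)\to\mathbb{E}_{\widebar{p}_0}[X]$; combined with the vanishing $-tx$ term and $1-t^2\to 1$, this reproduces the value asserted in~\eqref{V_0}. Interchanging limit and integration is justified by the third-moment Assumption~\ref{assump1} together with the Gaussian-tail bounds from Assumption~\ref{assump2} via dominated convergence.

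For well-posedness on $[0,1]$, I would invoke the Carath\'{e}odory version of Picard--Lindel\"{o}f. Theorem~\ref{thm1} supplies a uniform-in-$t$ spatial Lipschitz bound $\|\nabla V(t,\cdot)\|_\infty\le K_1+K_2$, a linear growth bound $|V(t,x)|\le K_0+K_2 t|x|$, and a time-derivative bound $|\partial_t V(t,x)|\le K_5|x|+K_6/\sqrt{1-t^2}+K_7$ whose only singular factor $1/\sqrt{1-t^2}$ is integrable on $[0,1]$. Hence $V$ is globally Lipschitz in $x$ uniformly in $t$ while $t\mapsto V(t,x)$ is absolutely continuous on $[0,1]$, with continuity at the left endpoint provided by the first step. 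These are exactly the Carath\'{e}odory conditions, which produce a unique absolutely continuous solution on the closed interval $[0,1]$ for every initial datum, in particular for $\overset{\leftarrow}{X}_0\sim\gamma_C$.

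For the pushforward identity, I would identify $\mathrm{Law}(\overset{\leftarrow}{X}_t)$ with $\widebar{p}_{1-t}$ via uniqueness of the linear continuity equation. Rewriting~\eqref{FP} in flux form shows that $-V(1-s,\cdot)$ is the probability-flow velocity associated to $\widebar{p}_s$; after the time change $t=1-s$, the backward marginals $\widebar{p}_{1-t}$ satisfy $\partial_t\widebar{p}_{1-t}+\nabla\cdot(\widebar{p}_{1-t}V(t,\cdot))=0$ with initial datum $\widebar{p}_1=\gamma_C$. The pushforward $\gamma_C\circ(\overset{\leftarrow}{X}_t)^{-1}$ solves the same equation with the same initial datum (Liouville's theorem along the Lipschitz ODE flow), so uniqueness for continuity equations with Lipschitz drifts forces the two to agree at every $t$, and evaluating at $t=1$ gives $\gamma_C\circ(\overset{\leftarrow}{X}_1)^{-1}=\widebar{p}_0$. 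The main obstacle I anticipate is the first step: controlling $\bar x_0(t,x)$ uniformly in $x$ as $t\to 0$, where the forward noise has become dominant and the dependence of the posterior on $x$ is subtle. The Gaussian-tail hypothesis is precisely what supplies the tail integrability needed for dominated convergence and a clean local expansion of the score near the Gaussian endpoint.
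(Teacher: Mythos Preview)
Your proposal is correct and takes a genuinely different route from the paper for the first (and main) step.

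For the limit at $t=0$, the paper applies L'H\^{o}pital's rule, writing $\lim_{t\to0}V(t,x)=\lim_{t\to0}\partial_tS(t,x)$ and then brute-force differentiating the convolution integral that defines $p_t$. This produces a long list of terms, some of which involve $\int|y|^3p(1,y\mid t,x)\,dy$, which is exactly where the third-moment Assumption~\ref{assump1} enters. Your Tweedie route is much cleaner: once you have $V(t,x)=(\bar x_0(t,x)-tx)/(1-t^2)$, the limit is immediate from the fact that $\overset{\rightarrow}{X}_1$ is independent of $X_0$. In fact your dominated-convergence argument for $\bar x_0(t,x)\to\mathbb{E}_\nu[X_0]$ only really needs a first moment, so the third-moment hypothesis is overkill in your version but essential in the paper's. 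What the paper's approach buys is that it stays entirely inside the integral representation already used in Theorem~\ref{thm1}, without appealing to the probabilistic (denoising posterior) interpretation.

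One small caution: your Tweedie computation yields $\lim_{t\to0}V(t,x)=\mathbb{E}_{\widebar p_0}[X]$, whereas the statement records $\sqrt{C}\,\mathbb{E}_{\widebar p_0}[X]$. The paper's derivation works throughout in the substituted variable $y=(\sqrt{C})^{-1}x_0$ (see the proof of Theorem~\ref{thm1}), so the extra $\sqrt{C}$ appears to be an artifact of that change of coordinates rather than a discrepancy in your argument; your Tweedie value in the original coordinate is the correct one.

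For well-posedness and the pushforward identity, your Carath\'{e}odory/continuity-equation arguments are more explicit than the paper's: the paper simply invokes Arzel\`{a}--Ascoli together with the regularity from Theorem~\ref{thm1} to assert existence/uniqueness on $[0,1]$, and does not spell out the pushforward step at all. Your treatment of these two parts is both standard and more complete.
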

Proof see Appendix~\ref{AppenC}.
 Under Assumption~\ref{assump2}, we now establish the Lipschitz property of the continuous flow $(\overset{\leftarrow}{X}_t)_{t \in [0,1]}$.
\begin{corollary}[Lipschitzness of continuous flow]\label{thm333}
If $\widebar{p}_0$ follows the Gaussian tail Assumption~\ref{assump2}, then the F\"{o}llmer flow $(\overset{\leftarrow}{X}_t)_{t \in [0,1]}$ is Lipschitz with a dimension-free constant, more precisely,
\begin{equation}\label{Lip_con}
\text{Lip}(\overset{\leftarrow}{X}_1(x)) \leq \|\nabla \overset{\leftarrow}{X}_1(x)\|_{op} \leq \exp \left( \frac{K_1+K_2}{2}\right).
\end{equation}
\end{corollary}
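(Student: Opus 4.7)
The plan is to obtain the Lipschitz bound by differentiating the flow with respect to its initial condition, estimating the resulting Jacobian via Gr\"onwall's inequality, and using the spatial regularity of the F\"ollmer velocity field already established in Theorem~\ref{thm1}. Since Lemma~\ref{thm2} gives well-posedness of the initial value problem \eqref{follmer} and continuity of $V$ up to $t=0$, the flow map $x \mapsto \overset{\leftarrow}{X}_t(x)$ is $C^1$ in $x$ for every $t \in [0,1]$, so its Jacobian $J_t(x) := \nabla_x \overset{\leftarrow}{X}_t(x)$ is well-defined and satisfies $J_0(x) = I_d$.

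Differentiating \eqref{follmer} with respect to $x$ yields the variational equation
\begin{equation*}
\frac{d}{dt} J_t(x) \;=\; \nabla V\bigl(t,\overset{\leftarrow}{X}_t(x)\bigr)\, J_t(x), \qquad J_0(x) = I_d.
\end{equation*}
Taking the operator norm and using sub-multiplicativity together with the bound $\|\nabla V(t,\cdot)\|_\infty \le (K_1+K_2)t$ supplied by Theorem~\ref{thm1}, I obtain
\begin{equation*}
\frac{d}{dt}\|J_t(x)\|_{op} \;\le\; \bigl\|\nabla V(t,\overset{\leftarrow}{X}_t(x))\bigr\|_{op}\,\|J_t(x)\|_{op}
\;\le\; (K_1+K_2)\, t\, \|J_t(x)\|_{op}.
\end{equation*}
Applying Gr\"onwall's inequality with $\|J_0(x)\|_{op} = 1$ and integrating the dimension-free bound gives
\begin{equation*}
\|J_1(x)\|_{op} \;\le\; \exp\!\left(\int_0^1 (K_1+K_2)\, t \, dt\right) \;=\; \exp\!\left(\frac{K_1+K_2}{2}\right),
\end{equation*}
which is exactly the claimed bound \eqref{Lip_con}. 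Taking the supremum over $x$ on the left-hand side and using the standard fact that the Lipschitz constant of a $C^1$ map is controlled by the supremum of the operator norm of its Jacobian yields $\mathrm{Lip}(\overset{\leftarrow}{X}_1) \le \exp((K_1+K_2)/2)$.

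The main obstacle I anticipate is a minor regularity point rather than a computational one: one must justify that the Jacobian ODE is actually valid all the way down to $t=0$, where $V(0,x)$ is defined only as a limit in Lemma~\ref{thm2}. This is handled by observing that the bound $\|\nabla V(t,\cdot)\|_\infty \le (K_1+K_2)t$ vanishes at $t=0$, so the variational equation has an integrable right-hand side on $[0,1]$ and the Gr\"onwall argument applies without modification. No further information on $\partial_t V$ is needed for this corollary; the temporal regularity from Theorem~\ref{thm1} will only enter later when bounding the local discretization error in Assumption~\ref{assump333}.
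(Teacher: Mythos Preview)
Your proof is correct and takes essentially the same approach as the paper: both use the bound $\|\nabla V(t,\cdot)\|_\infty \le (K_1+K_2)t$ from Theorem~\ref{thm1} and integrate the Jacobian equation via Gr\"onwall to obtain $\exp\bigl(\int_0^1 (K_1+K_2)t\,dt\bigr)$. The only cosmetic difference is that the paper invokes the Mikulincer--Shenfeld proposition (Proposition~\ref{lemma_ref5}) to encapsulate this Jacobian--Gr\"onwall step, whereas you spell it out explicitly.
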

Proof see Appendix~\ref{AppenC.1}.
Bound like \eqref{Lip_con} can also be achieved in \citet{caffarelli2000monotonicity, colombo2015lipschitz, kim2012generalization, mikulincer2023lipschitz,brigati2024heat} under various assumptions, as detailed in Appendix~\ref{Reference}. In general, the constants involved are dimension-free.

To analyze the stability and convergence of the discrete flow, we first assume the following bound on the velocity field approximation error at the discretization points. 
\begin{assumption}[Accuracy of the learned velocity field]\label{assump3}
For each time discretization point $t_n$, the accuracy of learned velocity $\widetilde{V}(t_n,x)$ approximates the true velocity field $V(t_n,x)$ with uniformly bounded error in expectation:
\begin{equation*}
\mathbb{E}_{\overset{\rightarrow}{P}_{1-t_n}} |V(t_n,x) - \widetilde{V}(t_n,x)|^2 \leq \epsilon^2.
\end{equation*}
\end{assumption}
Next, we assume that the learned velocity field inherits the regularity of the continuous flow under the Gaussian tail Assumption~\ref{assump2}.
\begin{assumption}[Regularity of the learned velocity field]\label{assump4}
Assume the learned velocity field $\widetilde{V}(t, x)$  follows
    \begin{equation*}
        \| \nabla \widetilde{V}(t_n, \cdot) \|_{\infty} \leq (K_1+K_ 2+K_8) t_n
    \end{equation*}
for some positive constant $K_8$. 
\end{assumption}
Although the bound may not be small in general, the Assumption~\ref{assump4} is essential for our theoretical analysis and remains reasonable. The learned velocity field $\widetilde{V}(t_n,x)$ is trained to approximate the true velocity field $V(t_n,x)$ in Assumption~\ref{assump3}, which satisfies the required regularity (see Theorem~\ref{thm1}); Moreover, neural networks generally inherits the smoothness and controlled growth induced by the architecture and training process, which prevents uncontrolled behavior in practice. Assumption~\ref{assump4} can further be relaxed in the temporal $t$ direction to require only that the total discrete-time sum of the score gradient is bounded; see Remark~\ref{relax} in Appendix~\ref{re}. \\
We subsequently establish the Lipschitz property of the discrete flow $(\overset{\leftarrow}{Y}_t)_{t\in[0,1]}$ under Assumption~\ref{assump4}.
\begin{corollary}[Lipschitzness of discrete flow]\label{thm3}
The regularity of learned velocity field Assumption~\ref{assump4} implies the Lipschitz property of the learned flow $(\overset{\leftarrow}{Y}_t)_{t \in [0,1]}$ with a dimension-free constant, such that
\begin{equation*}
\text{Lip}(\overset{\leftarrow}{Y}_1(x)) \leq \|\nabla \overset{\leftarrow}{Y}_1(x)\|_{op} \leq \exp \left( \frac{K_1+K_2+K_8}{2}\right),
\end{equation*}
\end{corollary}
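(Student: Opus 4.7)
The plan is to mimic the proof of Corollary~\ref{thm333} in the discrete setting. Since the ODE~\eqref{discret-follmer} freezes the velocity field at the left endpoint of each subinterval, the update on $[t_n,t_{n+1}]$ reduces to the explicit Euler map
\[
\widetilde{T}_n(y) \;=\; y + h\,\widetilde{V}(t_n,y),\qquad h = t_{n+1}-t_n,
\]
so that $\overset{\leftarrow}{Y}_1(x) = (\widetilde{T}_{N-1}\circ\cdots\circ\widetilde{T}_0)(x)$. The whole strategy is to push the Jacobian through this composition and use Assumption~\ref{assump4} step by step.

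First, I would differentiate each Euler step to obtain $\nabla\widetilde{T}_n(y)=I+h\,\nabla\widetilde{V}(t_n,y)$ and invoke Assumption~\ref{assump4} to get
\[
\|\nabla\widetilde{T}_n(y)\|_{op}\;\leq\; 1+h\,\|\nabla\widetilde{V}(t_n,\cdot)\|_{\infty}\;\leq\; 1+h(K_1+K_2+K_8)\,t_n.
\]
The chain rule and submultiplicativity of the operator norm then give
\[
\|\nabla\overset{\leftarrow}{Y}_1(x)\|_{op}\;\leq\; \prod_{n=0}^{N-1}\bigl(1+h(K_1+K_2+K_8)\,t_n\bigr),
\]
and applying $1+u\leq e^u$ turns this product into $\exp\bigl(h(K_1+K_2+K_8)\sum_{n=0}^{N-1} t_n\bigr)$. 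On the uniform grid $t_n=nh$, $Nh=1$, one has $h\sum_{n=0}^{N-1} t_n = \frac{N-1}{2N}\leq \frac{1}{2}$ (equivalently, the Riemann sum converges to $\int_0^1 t\,dt = \frac{1}{2}$), yielding the advertised constant $\exp\bigl(\frac{K_1+K_2+K_8}{2}\bigr)$. The first inequality $\text{Lip}(\overset{\leftarrow}{Y}_1)\leq \|\nabla \overset{\leftarrow}{Y}_1\|_{op}$ is the standard reduction of the Lipschitz seminorm to the supremum of operator norms of the Jacobian.

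There is no serious obstacle here; the discrete argument is robust precisely because it uses only the spatial gradient bound of Assumption~\ref{assump4}, which is linear in $t_n$ and free of any singular factor like $1/\sqrt{1-t^2}$ that appears in the time derivative controlled by Theorem~\ref{thm1}. Two minor points worth a line are: (i) $t_0=0$ contributes zero to the sum, so no initial blow-up occurs at the Gaussian initialization, and (ii) dimension-freeness is inherited entirely from the constants $K_1, K_2, K_8$, so nothing beyond Assumption~\ref{assump4} is needed, exactly paralleling the continuous case in Corollary~\ref{thm333}.
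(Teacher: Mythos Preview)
Your proposal is correct and follows essentially the same route as the paper: both bound each one-step map by $\text{Lip}(\widetilde{T}_n)\leq 1+h(K_1+K_2+K_8)t_n$, take the product, and use $1+u\leq e^u$ together with $h\sum_n t_n\leq\tfrac12$. The only cosmetic difference is that the paper differentiates $|\overset{\leftarrow}{Y}_t(x)-\overset{\leftarrow}{Y}_t(y)|^2$ in $t$ (via Cauchy--Schwarz) on each subinterval before integrating, whereas you compute the Jacobian of the explicit Euler map directly; since the velocity is frozen on $[t_n,t_{n+1}]$ these are literally the same computation, and your Jacobian formulation actually matches the middle term $\|\nabla\overset{\leftarrow}{Y}_1\|_{op}$ in the stated inequality more transparently.
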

Proof see Appendix~\ref{AppenC.2}.

\subsection{Main Convergence theories}
With the Lipschitz properties of the flow established (see Corollary~\ref{thm333} and Corollary~\ref{thm3}), we next quantify how these bounds propagate through the discrete dynamics. Building on Theorem~\ref{444}, the following theorem provides a convergence result in F\"{o}llmer flow case.
\begin{theorem}\label{thm4.2}
Suppose that the third moment Assumption~\ref{assump1}, the Gaussian tail Assumption~\ref{assump2}, the accuracy and regularity assumptions~\ref{assump3}-~\ref{assump4} on the learned velocity field hold. Using the Euler method for the F\"{o}llmer flow with uniform step size $h = t_{n+1} - t_n \leq 1$ ensures $\sqrt{M_0} $ convergence between the target data distribution and the generated distribution:
\begin{equation}\label{con}
\mathcal{W}_2(\overset{\rightarrow}{P}_0, \overset{\leftarrow}{Q}_{1})  \leq \exp\big( \frac{K_1+K_2+K_8}{2}\big)\Bigg(\sqrt{3}\left(K_5\sqrt{M_0}+ K_9\right)h+2\epsilon\Bigg).
\end{equation}
where $K_1, K_2,\ldots,K_9$ are dimensionless constants defined in Theorem~\ref{thm1} and Assumption~\ref{assump4}, with explicit expressions given in Table~\ref{tab:coe}. Furthermore, with the covariance of base distribution $C=I_d$ in the Assumption~\ref{assump1_seconed}, 
$
\mathcal{W}_2(\overset{\rightarrow}{P}_0, \overset{\leftarrow}{Q}_{1}) =\mathcal{O}(\sqrt{d}\,h+\epsilon)$.
\end{theorem}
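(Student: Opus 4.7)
The plan is to apply Theorem~\ref{444} to the Föllmer discretization~\eqref{discret-follmer}. First, observe that both the exact backward flow $\overset{\leftarrow}{X}_t$ and its Euler counterpart $\overset{\leftarrow}{Y}_t$ are initialized from $\gamma_C$, so $\mathcal{W}_2(\overset{\leftarrow}{P}_0, \overset{\leftarrow}{Q}_0) = 0$ and the first term on the right-hand side of~\eqref{eqn:statement of Lip theorem} drops out entirely. The task then reduces to verifying the three structural hypotheses of Theorem~\ref{444} in the Föllmer setting with explicit, dimension-free constants, and then summing the resulting telescoping bound.

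For the two Lipschitz hypotheses (Assumptions~\ref{assump111}--\ref{assump222}), I would invoke Corollary~\ref{thm333} (applied piecewise to each sub-interval $[t_n,t_{n+1}]$) to control $\text{Lip}(T_n)$, and Corollary~\ref{thm3} to control $\text{Lip}(\widetilde{T}_n)$. Since both corollaries are obtained via a Grönwall argument on the first variation of the ODE, the sub-interval Lipschitz constants telescope multiplicatively, yielding
\[
\prod_{j=0}^{N-1}\text{Lip}(\widetilde{T}_j) \;\leq\; \exp\!\bigl((K_1+K_2+K_8)/2\bigr),
\]
which is exactly the prefactor appearing in~\eqref{con}, and the same uniform bound controls every tail product $\prod_{j=k}^{N-2}\text{Lip}(\widetilde{T}_j)$.

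The main obstacle, and the step that carries the $\sqrt{d}$ scaling, is verifying the per-step accuracy Assumption~\ref{assump333}. I would decompose
\[
\widetilde{T}_n(x) - T_n(x) \;=\; \bigl[T_n^{\text{Euler}}(x) - T_n(x)\bigr] \;+\; \bigl[\widetilde{T}_n(x) - T_n^{\text{Euler}}(x)\bigr],
\]
where $T_n^{\text{Euler}}(x) := x + h\,V(t_n,x)$ is the Euler step with the \emph{exact} velocity. The second bracket is a pure velocity-approximation error and contributes $h\,\epsilon$ in $L^2(\overset{\leftarrow}{P}_{t_n})$ via Assumption~\ref{assump3}. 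The first bracket is the Taylor-expansion residual along the exact flow, whose leading order is $\tfrac{h^2}{2}\bigl(\partial_t V + \nabla V\cdot V\bigr)(t_n,\cdot)$. Substituting the three pointwise bounds of Theorem~\ref{thm1} separates the residual into an $|x|$-linear piece, a $(1-t_n^2)^{-1/2}$ piece, and a bounded piece. The conditional Gaussian structure~\eqref{forward} combined with Assumption~\ref{assump1_seconed} gives the uniform moment bound $\mathbb{E}_{\overset{\leftarrow}{P}_{t_n}}|x|^2 \leq M_0$, and Minkowski's inequality then reassembles the three pieces into the additive form $h^2\bigl(\widebar{K}\sqrt{M_0} + \widebar{K_1}/\sqrt{1-t_n^2} + \widebar{K_2}\bigr)$ required by Assumption~\ref{assump333}; the factor $\sqrt{3}$ in~\eqref{con} will emerge here from $(a+b+c)^2 \leq 3(a^2+b^2+c^2)$.

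With all hypotheses of Theorem~\ref{444} in place, the bound~\eqref{eqn:statement of Lip theorem} specializes to an explicit sum that I would evaluate directly. Using $Nh = 1$, the contributions $h\sum_k \widebar{K}\sqrt{M_0}\,h$ and $h\sum_k \widebar{K_2}\,h$ collapse to $O(h)$, while the integrable singular piece is controlled by a Riemann-sum estimate $h\sum_{k=0}^{N-2}(1-t_k^2)^{-1/2} \leq \int_0^1 (1-s^2)^{-1/2}\,ds = \pi/2$, so no blow-up occurs despite training and sampling over the full interval $[0,1]$. Pulling out the uniform Lipschitz prefactor and absorbing the $\widebar{K_1}\,\pi/2$ and $\widebar{K_2}$ contributions into the combined constant $K_9$ of Table~\ref{tab:coe} yields~\eqref{con}. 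The stated $\mathcal{O}(\sqrt{d}\,h+\epsilon)$ complexity in the isotropic case $C = I_d$ is then immediate from $M_0 = d$.
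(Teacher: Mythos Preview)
Your proposal is correct and follows essentially the same route as the paper: verify the Lipschitz hypotheses via Corollaries~\ref{thm333} and~\ref{thm3}, establish the per-step accuracy Assumption~\ref{assump333} by Taylor-expanding the exact flow and invoking the moment bound $\mathbb{E}_{\overset{\leftarrow}{P}_{t_n}}|x|^2\le M_0$ together with the regularity estimates of Theorem~\ref{thm1}, and then apply Theorem~\ref{444} with vanishing initial discrepancy and the Riemann-sum control $h\sum_k(1-t_k^2)^{-1/2}\le\pi/2$. Your identification of the Taylor residual as $\tfrac{h^2}{2}(\partial_t V+\nabla V\cdot V)$ is in fact slightly more careful than the paper's own write-up, which bounds the remainder invoking only the $|\partial_t V|$ estimate, but the approaches are otherwise identical.
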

Proof see Appendix~\ref{AppenDD.1}.
Note that the first term in Theorem~\ref{444}, stemming from the time-propagating discrepancy of the semigroup maps, vanishes in Theorem~\ref{thm4.2} because the F\"ollmer flow $(\overset{\leftarrow}{X}_t)_{t \in [0,1]}$ is well-posed at $t=0$, giving
$\mathcal{W}_2(\overset{\leftarrow}{P}_{0}, \overset{\leftarrow}{Q}_{0})=0$.
Thus, only the accumulated discretization error remains, corresponding to
the second term in Theorem~\ref{444}. 

\begin{corollary}\label{corollary1}
To reach a distribution $\overset{\leftarrow}{Q}_{1}$ such that $\mathcal{W}_2(\overset{\rightarrow}{P}_0, \overset{\leftarrow}{Q}_{1}) =\mathcal{O}(\varepsilon_0)$  with uniform step size $h = t_{n+1} - t_n \leq 1$ requires at most:
\begin{equation*}
h=\mathcal{O}\left(\frac{\epsilon_0}{\sqrt{M_0}}\right),\quad
N=\frac{1}{h}=\mathcal{O}\left(\frac{\sqrt{M_0}}{\epsilon_0}\right),
\end{equation*}
and Assumption~\ref{assump3} to hold with $
\epsilon=\mathcal{O}(\epsilon_0).$
Furthermore, $N=\mathcal{O}\left(\frac{\sqrt{d}}{\epsilon_0}\right)$
under the Assumption~\ref{assump1_seconed} with $C=I_d$.
\end{corollary}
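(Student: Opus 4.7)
The plan is to read off the required step size and training tolerance directly from Theorem~\ref{thm4.2}. Writing $C_0 := \exp\bigl(\tfrac{K_1+K_2+K_8}{2}\bigr)$ for the dimension-free Lipschitz prefactor that appears in \eqref{con}, the guarantee provided by the theorem reads
\begin{equation*}
\mathcal{W}_2(\overset{\rightarrow}{P}_0,\overset{\leftarrow}{Q}_1) \;\leq\; C_0\Bigl(\sqrt{3}\,(K_5\sqrt{M_0}+K_9)\,h \;+\; 2\epsilon\Bigr).
\end{equation*}
The two summands depend on independent parameters ($h$ controls the discretization error, $\epsilon$ the approximation error), so I would balance each of them against $\epsilon_0$ separately.

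For the discretization contribution, the constants $C_0, K_5, K_9$ are all dimension-free, and $K_5\sqrt{M_0}+K_9 \lesssim \sqrt{M_0}$ whenever $M_0$ is bounded below (which holds in any non-degenerate regime). Solving $C_0\sqrt{3}\,(K_5\sqrt{M_0}+K_9)\,h \lesssim \epsilon_0$ therefore gives
\begin{equation*}
h \;=\; \mathcal{O}\!\left(\frac{\epsilon_0}{\sqrt{M_0}}\right),\qquad N \;=\; \frac{1}{h} \;=\; \mathcal{O}\!\left(\frac{\sqrt{M_0}}{\epsilon_0}\right).
\end{equation*}
For the approximation contribution, Assumption~\ref{assump3} with $\epsilon = \mathcal{O}(\epsilon_0)$ immediately yields $2C_0\epsilon = \mathcal{O}(\epsilon_0)$. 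Summing the two estimates produces the claimed bound $\mathcal{W}_2(\overset{\rightarrow}{P}_0,\overset{\leftarrow}{Q}_1) = \mathcal{O}(\epsilon_0)$.

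The isotropic specialization is then a purely bookkeeping step. When $C=I_d$, the definition $M_0 = \max\{\operatorname{Tr}(C),M_2\}$ from Assumption~\ref{assump1_seconed} reduces to $M_0 = \max\{d,M_2\}$, and the standing observation made immediately below Assumption~\ref{assump1_seconed} (that $M_2$ is at most of order $d$ for the targets considered) gives $M_0 = \mathcal{O}(d)$. Substituting into the previous display yields the advertised $N = \mathcal{O}(\sqrt{d}/\epsilon_0)$ iteration complexity.

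The argument itself is essentially arithmetic; the only point that warrants care is verifying that the constants $K_1, K_2, K_5, K_8, K_9$ tabulated in Table~\ref{tab:coe} are genuinely independent of $d$ under the Gaussian tail Assumption~\ref{assump2}. This is precisely what Theorem~\ref{thm1} and Corollaries~\ref{thm333}--\ref{thm3} were engineered to establish, so no new obstacle arises here: all of the dimensional content of the complexity has already been compressed into the single $\sqrt{M_0}$ factor inherited from the $|\partial_t V|$ bound in \eqref{regular}, and the corollary is simply the quantitative consequence of letting each side of Theorem~\ref{thm4.2} shrink at the same rate.
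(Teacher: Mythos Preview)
Your proposal is correct and matches the paper's approach: the paper presents Corollary~\ref{corollary1} as an immediate consequence of Theorem~\ref{thm4.2} without a separate proof, and your argument---reading off $h$ and $\epsilon$ by balancing each summand of \eqref{con} against $\epsilon_0$, then specializing $M_0=\mathcal{O}(d)$ via the remark below Assumption~\ref{assump1_seconed}---is exactly the intended derivation.
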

The complexity bound established in Corollary~\ref{corollary1} grows linearly with the square root of the trace of the forward process’s covariance operator, independent of dimension, and thus extends naturally to infinite-dimensional generative models. An illustrative case is provided in Appendix~\ref{Bayesian_part}, where we consider Bayesian inverse problems in function spaces. Proposition~6 in \citet{gao2025wasserstein} establishes that for the standard Gaussian as target distribution, $\mathcal{O}(\sqrt{d})$ complexity bound is optimal. This indicates that our $\sqrt{d}$ dependence stems from intrinsic Gaussian concentration, making the dimensional scaling fundamental rather than algorithm-induced. Notably, in efforts to obtain complexity bounds under assumptions more general than log-concavity, recent works \citep{bruno2025wasserstein} derived an $\mathcal{O}(d)$ bound using the weakly log-concave assumption \citep{conforti2024weak,conforti2023projected}, while \citep{gentiloni2025beyond} obtained an $\mathcal{O}(d^2)$ bound under the similar assumption. These related works are summarized in Table~\ref{tab:conver-bound}.
  
Since the probabilistic ODE (Prob ODE)~\citep{song2020score,gao2024convergence1} can be viewed as a time-rescaled F\"{o}llmer flow, the result of Corollary~\ref{corollary1} also implies that our method improves the computational complexity of the Prob ODE compared to \citet{wang2024wasserstein}. We will provide a detailed discussion in Appendix~\ref{sec:Prob_ODE}.

We further verified that our method extends to the $1$-rectified flow setting~\citep{liu2022flow, rout2024semantic}.
In particular, it applies to the interpolation paths used in the flow built by the first step rectification over independent coupling prior to the recursive construction, and retains the same $\mathcal{O}(\sqrt{d})$ complexity stated in Corollary~\ref{corollary1}. The proof is deferred to Appendix~\ref{Interpolation}. 

\subsection{Convergence under bounded-support assumption}\label{bound}
Real-world data often lie on low-dimensional manifolds, where the distribution is not absolutely continuous with respect to Lebesgue measure in the ambient dimension, and therefore KL bounds may diverge~\citep{pidstrigach2022score}. This motivates the adoption and study of the manifold assumption~\citep{de2022convergence, EAJAM-15-4}, which, under compactness, entails the following bounded-support assumption. 
\begin{assumption}[Bounded-support assumption]\label{Bounded support}
Suppose distribution $p_0$ has compact support with 
$\mathrm{Diam}(\mathrm{Supp}(p_0)) \leq R$ for some constant $R > 0$.
\end{assumption}
Let $q_\sigma = \exp\left(-\frac{|x|^2}{2\sigma^2}\right) * q_0 $, where $q_0$ satisfies the bounded-support Assumption~\ref{Bounded support}. Consider $g(x) = \log q_\sigma(x) + \frac{|x|^2}{2\sigma^2}$, inspired by similar results in~\citep{de2022convergence,mooney2024global,wang2024wasserstein},  we have
\begin{equation}\label{mani}
|\nabla g|_\infty \leq \frac{R}{\sigma^2}, \quad \|\nabla^2 g\|_\infty \leq \frac{2R^2}{\sigma^4}.
\end{equation}
Set $0=t_0< t_1< \cdots < t_N = 1-\delta$ as the discretization points, where the early stopping~\citep{lyu2022accelerating} coefficient $\delta \ll 1$.
By expressing the distribution of the forward process of F\"{o}llmer flow at stopping time $\delta$ in the form $q_\sigma$,  we obtain the correspondences
\begin{equation*}
\sigma^2\longleftrightarrow 1 - (1-\delta)^2, \quad q_0(x)\longleftrightarrow\frac{1}{1-\delta}\overset{\rightarrow}{P}_{0}(\frac{1}{1-\delta}x).
\end{equation*}
Then by Theorem~\ref{thm1}, we get the following Lipschitz bound of the velocity field under Assumption~\ref{Bounded support}.

\begin{corollary}\label{corollary2}
Suppose that the bounded-support Assumption~\ref{Bounded support} holds. Taking 
$C = I_d$ in \eqref{follmer}, and $\ A = (1 - (1-\delta)^2) I_d$ in Assumption~\ref{assump2}, then for all $ t \in[0,1-\delta]$,
\begin{equation*}
\begin{aligned}
    |V(t,x)|&\leq K_0^{*}+K_2^{*}t|x|,\\
    \|\nabla V(t,\cdot)\|_\infty&\leq (K_1^{*}+K_2^{*})t,\\
    |\partial_tV(t,x)|&\leq K_5^{*}|x|+ \frac{K_6^{*}}{\sqrt{1-t^2}}+ K_7^{*},
\end{aligned} 
\end{equation*}
where coefficients are defined in Table~\ref{tab:coemani} of Appendix~\ref{AppenA}. 
\end{corollary}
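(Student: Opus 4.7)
The plan is to verify that, under the bounded-support Assumption~\ref{Bounded support}, the smoothed density $\widebar{p}_\delta$ produced by the forward process \eqref{forward} satisfies the Gaussian tail Assumption~\ref{assump2} with $A=(1-(1-\delta)^2)I_d$ and $C=I_d$, so that Theorem~\ref{thm1} applies and yields the claimed regularity bounds, with explicit constants $K_i^{*}$ obtained by substitution.

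First I would exploit the convolution structure of \eqref{forward}. At forward time $\delta$, $\overset{\rightarrow}{X}_\delta=(1-\delta)\overset{\rightarrow}{X}_0+\sqrt{\delta(2-\delta)}\,Z$ with $Z\sim\mathcal{N}(0,I_d)$, so $\widebar{p}_\delta = \phi_\sigma * q_0$ where $\sigma^2 = \delta(2-\delta) = 1-(1-\delta)^2$ and $q_0$ is the $(1-\delta)$-rescaled copy of $\overset{\rightarrow}{P}_0$, inheriting bounded support with diameter controlled by $R/(1-\delta)$. Feeding this into \eqref{mani} yields the decomposition
\begin{equation*}
\log \widebar{p}_\delta(x) \;=\; -\frac{|x|^2}{2\sigma^2} + g(x), \qquad |\nabla g|_\infty \leq \frac{R}{(1-\delta)\sigma^2}, \qquad \|\nabla^2 g\|_\infty \leq \frac{2R^2}{(1-\delta)^2\sigma^4},
\end{equation*}
which verifies Assumption~\ref{assump2} with $A=\sigma^2 I_d$, $C=I_d$, $h=g$. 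All structural norms $\|A\|,\|A^{-1}\|,|\sqrt{C}\nabla h|_\infty,\|C\nabla^2 h\|_\infty$ are finite and depend only on $R$ and $\delta$, not on the ambient dimension $d$.

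Second, I would invoke Theorem~\ref{thm1} applied to the F\"{o}llmer flow viewed with $\widebar{p}_\delta$ as the effective target, which is precisely where the early-stopped backward flow terminates. The three regularity bounds of Theorem~\ref{thm1} transfer verbatim, and each coefficient $K_i^{*}$ is obtained by substituting the above values of $\|A\|,\|A^{-1}\|,|\sqrt{C}\nabla h|_\infty,\|C\nabla^2 h\|_\infty$ into the corresponding formula for $K_i$ listed in Table~\ref{tab:coe}. The resulting $K_i^{*}$ depend only on $R$ and $\delta$ and match the entries tabulated in Table~\ref{tab:coemani}.

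The main obstacle is reconciling the original F\"{o}llmer flow, which targets the possibly singular $\widebar{p}_0$ over $[0,1]$ and is truncated at $1-\delta$, with the effective flow targeting the smoothed $\widebar{p}_\delta$. One clean route is to reparameterize time so that $[0,1-\delta]$ plays the role of $[0,1]$ in Theorem~\ref{thm1}; an alternative is to observe that every intermediate density $\widebar{p}_s$ with $s\in[\delta,1]$ inherits a Gaussian tail decomposition whose effective covariance is at least $\sigma^2 I_d$, so that the Brascamp--Lieb type estimates underpinning Theorem~\ref{thm1} for $\nabla\log\widebar{p}_s$ and $\partial_t\nabla\log\widebar{p}_s$ carry through uniformly over $s\in[\delta,1]$. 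Either approach absorbs the $\delta$-dependence into the $K_i^{*}$ and delivers the stated bounds over the full interval $t\in[0,1-\delta]$.
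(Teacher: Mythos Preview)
Your proposal is essentially the same route the paper takes: the text immediately preceding the corollary sets up exactly the decomposition you describe (writing $\widebar p_\delta$ as a Gaussian convolution of the rescaled data law, reading off the bounds \eqref{mani} on $\nabla g$ and $\nabla^2 g$, and then invoking Theorem~\ref{thm1}), and the paper's proof is literally the single sentence ``The proof parallels the corollary in~\citet{wang2024wasserstein}.'' Your identification of the time-reparameterisation issue as the main bookkeeping obstacle is also on point; the paper absorbs this by tabulating $\widebar A_t=(1-t^2)I_d$ and taking suprema over $t\le 1-\delta$ directly, rather than rescaling to a unit interval.

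One small slip: the density $q_0$ of $(1-\delta)\overset{\rightarrow}{X}_0$ has support diameter $(1-\delta)R\le R$, not $R/(1-\delta)$. This is why the paper's Table~\ref{tab:coemani} records $K_0^{*}=R/\sigma^2$ rather than your $R/((1-\delta)\sigma^2)$; your version is still a valid upper bound, just looser by a harmless factor.
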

The proof parallels the corollary in~\citet{wang2024wasserstein}. 
Using the Lipschitz bound from Corollary~\ref{corollary2}, we obtain a bounded-support-version $W_2$ bound by tracking the constants in  Theorem~\ref{thm4.2}.
\begin{theorem}\label{mainfold}
Suppose that the bounded-support Assumption~\ref{Bounded support} and the accuracy and regularity Assumptions~\ref{assump3}, \ref{assump4} hold. Take $C = I_d$, $\delta \ll 1$,  then we have
\begin{equation*}
   \mathcal{W}_2(\overset{\rightarrow}{P}_\delta, \overset{\leftarrow}{Q}_{1-\delta}) \leq\exp\big(\frac{3R^2}{2\delta^2}+\frac{1}{2\delta}+\frac{K_8}{2}\big)
\Bigg(\sqrt{3}\left(K_5^{*}\sqrt{M_0}+ K_9^{*}\right)h+2\epsilon\Bigg),
\end{equation*}
where $K_5^*$ and $K_9^{*}$ are dimension-free constants, whose explicit forms given in Table~\ref{tab:coemani}, and the constant $K_8$ is defined in Assumption~\ref{assump4}.
\end{theorem}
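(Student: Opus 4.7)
The plan is to reduce Theorem~\ref{mainfold} to the already-established Theorem~\ref{thm4.2} by interpreting the bounded-support setup as a Gaussian-tail instance with early stopping, and then replaying the proof of Theorem~\ref{thm4.2} with the $\delta$-dependent constants from Corollary~\ref{corollary2}. The starting observation is that, under the bounded-support Assumption~\ref{Bounded support}, the forward-diffusion marginal at time $\delta$ is exactly of the smoothed form $q_\sigma = \exp(-|x|^2/(2\sigma^2))*q_0$ with $\sigma^2 = 1-(1-\delta)^2$, and the correspondence recorded after \eqref{mani} identifies this with a density satisfying the Gaussian-tail Assumption~\ref{assump2} (with $C=I_d$, $A = \sigma^2 I_d$, and remainder $h = g$). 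Consequently the effective target of the reverse flow is the \emph{smoothed} distribution $\overset{\rightarrow}{P}_\delta$, and the reverse dynamics need only be run over the truncated interval $[0, 1-\delta]$.

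Given this reduction, I would first invoke Corollary~\ref{corollary2}, which upgrades Theorem~\ref{thm1} to the bounded-support regime and provides the regularity of $V$ with the starred constants $K_0^{*},\ldots,K_7^{*}$ listed in Table~\ref{tab:coemani}; these are obtained by substituting the bounds $|\nabla g|_\infty \leq R/\sigma^2$ and $\|\nabla^2 g\|_\infty \leq 2R^2/\sigma^4$ from \eqref{mani} into the formulae of Theorem~\ref{thm1}. Next, I would replay the proof of Theorem~\ref{thm4.2} verbatim, with two substitutions: in the Lipschitz-propagation step (Corollary~\ref{thm3} combined with Assumption~\ref{assump4}) the constants $K_1,K_2$ are replaced by $K_1^{*},K_2^{*}$, yielding $\prod_j \text{Lip}(\widetilde T_j) \leq \exp((K_1^{*}+K_2^{*}+K_8)/2)$; in the approximation step (Assumption~\ref{assump333}) the triple $(\widebar K, \widebar K_1, \widebar K_2)$ becomes $(K_5^{*}, K_6^{*}, K_7^{*})$, so summing over $N \leq 1/h$ steps and using the uniform bound $h\sum_{k} (1-t_k^2)^{-1/2} \leq \pi/2$ produces the discretization term $\sqrt{3}(K_5^{*}\sqrt{M_0} + K_9^{*})h + 2\epsilon$ exactly as in \eqref{con}. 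Since $\overset{\leftarrow}{Q}_0 \sim \gamma_C$ matches $\overset{\leftarrow}{P}_0$ by construction, the first term in Theorem~\ref{444} vanishes, and plugging everything into \eqref{eqn:statement of Lip theorem} yields the stated bound.

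The one piece of real work is verifying the exponent: one must show $(K_1^{*}+K_2^{*})/2 \leq 3R^2/(2\delta^2) + 1/(2\delta)$. Using $\sigma^2 = 2\delta - \delta^2 \leq 2\delta$, the explicit expressions in Table~\ref{tab:coemani} give $K_1^{*} = O(R^2/\sigma^4) = O(R^2/\delta^2)$ arising from $\|\nabla^2 g\|_\infty$, and $K_2^{*} = O(1/\sigma^2) = O(1/\delta)$ arising from $\|C A^{-1}\| = 1/\sigma^2$; a careful bookkeeping against Table~\ref{tab:coemani} shows the sharp constants $3R^2/\delta^2$ and $1/\delta$ respectively. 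This constant-tracking, together with the observation that the third-moment Assumption~\ref{assump1} (used only for well-posedness at $t=0$ in Lemma~\ref{thm2}) is no longer needed because $\overset{\rightarrow}{P}_\delta$ is already smooth with finite moments by the convolution with a Gaussian, are the only nontrivial ingredients. Everything else is a direct transcription of the proof of Theorem~\ref{thm4.2}.
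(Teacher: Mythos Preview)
Your proposal is correct and follows exactly the route the paper takes: the paper's ``proof'' of Theorem~\ref{mainfold} is the single sentence ``Using the Lipschitz bound from Corollary~\ref{corollary2}, we obtain a bounded-support-version $W_2$ bound by tracking the constants in Theorem~\ref{thm4.2},'' and your plan spells out precisely that constant-tracking. One small correction: the constants $3R^2/\delta^2$ and $1/\delta$ are not sharp as you claim---since $1-(1-\delta)^2 = 2\delta-\delta^2 \geq \delta$ for $\delta\in[0,1]$, Table~\ref{tab:coemani} gives $(K_1^*+K_2^*)/2 = 3R^2/(2(2\delta-\delta^2)^2) + 1/(2(2\delta-\delta^2))$, which is only \emph{bounded above} by $3R^2/(2\delta^2)+1/(2\delta)$; this suffices for the stated inequality but is not tight.
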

With the result in Theorem~\ref{mainfold}, we can directly compute the complexity bound under the bounded-support assumption with early stopping technique.
\begin{corollary}\label{corollary3}
With $R$ and $\delta$ fixed, achieving a distribution $\overset{\leftarrow}{Q}_{1-\delta}$ such that $\mathcal{W}_2(\overset{\rightarrow}{P}_\delta, \overset{\leftarrow}{Q}_{1-\delta}) = \mathcal{O}(\epsilon_0)$ requires at most: $N=\mathcal{O}\left(\frac{\sqrt{d}}{\epsilon_0}\right)$, and Assumption~\ref{assump3} to hold with $
\epsilon=\mathcal{O}(\epsilon_0).$
\end{corollary}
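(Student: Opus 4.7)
The plan is to deduce Corollary~\ref{corollary3} as a direct consequence of Theorem~\ref{mainfold} by absorbing all fixed quantities ($R$, $\delta$, and the structural constants) into a dimension-free prefactor, then balancing the two residual error contributions against the target accuracy $\epsilon_0$. Since this is a ``black-box'' tuning argument rather than a new analytic estimate, I expect no serious obstacle beyond careful bookkeeping.

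First, I would instantiate Theorem~\ref{mainfold} under the standing hypothesis $C=I_d$ and the bounded-support Assumption~\ref{Bounded support}. With $R$, $\delta$ held fixed, the multiplicative factor $\exp\!\bigl(\tfrac{3R^2}{2\delta^2}+\tfrac{1}{2\delta}+\tfrac{K_8}{2}\bigr)$ is a finite constant independent of $d$ and $\epsilon_0$; likewise $K_5^{*}$ and $K_9^{*}$ are dimension-free by Corollary~\ref{corollary2} and Table~\ref{tab:coemani}. Thus the theorem yields a bound of the schematic form
\begin{equation*}
\mathcal{W}_2(\overset{\rightarrow}{P}_\delta, \overset{\leftarrow}{Q}_{1-\delta}) \;\le\; C_{R,\delta}\bigl(\,C'\sqrt{M_0}\,h + 2\epsilon\bigr),
\end{equation*}
where $C_{R,\delta}$ and $C'$ depend only on $R$, $\delta$, and the structural constants of Assumption~\ref{assump2}.

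Next, I would bound $\sqrt{M_0}$ in terms of $d$. Recall $M_0=\max\{\operatorname{Tr}(C),M_2\}$; with $C=I_d$ we have $\operatorname{Tr}(C)=d$, while the bounded-support assumption gives $M_2\le R^2$. Hence $M_0\le\max\{d,R^2\}=\mathcal{O}(d)$ once $d$ is large compared to the fixed $R$, and $\sqrt{M_0}=\mathcal{O}(\sqrt{d})$.

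Finally, I would balance the two terms in the bound against $\epsilon_0$. To enforce $\mathcal{W}_2\le \mathcal{O}(\epsilon_0)$ it suffices to pick $h$ so that $C_{R,\delta}C'\sqrt{d}\,h=\mathcal{O}(\epsilon_0)$ and the learned-velocity tolerance $\epsilon$ so that $2C_{R,\delta}\epsilon=\mathcal{O}(\epsilon_0)$. The first gives $h=\mathcal{O}(\epsilon_0/\sqrt{d})$, hence
\begin{equation*}
N \;=\; \frac{1-\delta}{h} \;=\; \mathcal{O}\!\left(\frac{\sqrt{d}}{\epsilon_0}\right),
\end{equation*}
and the second gives $\epsilon=\mathcal{O}(\epsilon_0)$, as claimed. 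The only subtlety worth flagging is that this step count is valid because $\delta$ is treated as fixed; letting $\delta\to 0$ would reintroduce the $\exp(R^2/\delta^2)$ factor and break the $\mathcal{O}(\sqrt{d}/\epsilon_0)$ scaling, but that regime is outside the scope of the corollary.
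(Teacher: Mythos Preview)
Your proposal is correct and mirrors the paper's own (implicit) argument: the paper presents Corollary~\ref{corollary3} as a direct consequence of Theorem~\ref{mainfold}, following exactly the same pattern by which Corollary~\ref{corollary1} is read off from Theorem~\ref{thm4.2}. One cosmetic remark: the inequality $M_2\le R^2$ is not quite implied by $\mathrm{Diam}(\mathrm{Supp}(p_0))\le R$ alone (the diameter bound does not constrain the location of the support), but since $R$ and the support are fixed, $M_2$ is in any case a fixed finite constant and $\sqrt{M_0}=\mathcal{O}(\sqrt{d})$ follows, so your conclusion is unaffected.
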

Noticing that,
\begin{equation*}
 \mathcal{W}_2(\overset{\rightarrow}{P}_\delta, \overset{\rightarrow}{P}_{0}) \leq \sqrt{\mathbb{E}|\overset{\rightarrow}{X}_\delta- \overset{\rightarrow}{X}_0|^2} \leq \sqrt{2d \delta},
\end{equation*}
the complexity bound can also be derived with respect to $\overset{\rightarrow}{P}_0$. More precisely, we consider the following practical scenario. Now we assume $R^2 = \mathcal{O}(d)$,
then optimizing $\delta$ to achieve $
\mathcal{W}_2(\overset{\rightarrow}{P}_0, \overset{\leftarrow}{Q}_{1-\delta}) = \mathcal{O}(\epsilon_0)$ requires at most logarithmic complexity with $\log N=\mathcal{O}\left(\frac{d^3}{\epsilon_0^4}\right)$.

The conclusion and the discussion of future research directions are provided in Appendix~\ref{conclusion}.

\newpage
\bibliography{iclr2026_conference}
\bibliographystyle{iclr2026_conference}

\newpage
\appendix
\section*{Appendix}
\section{Statements of Referenced Theorems}\label{Reference} 
For a more comprehensive discussion and comparison, we provide the following related results from the literature.
\begin{proposition}[\citeauthor{caffarelli2000monotonicity}]\label{lemma_ref0}
Let $\mu = \exp(-Q(x))dx$ and $\nu = \exp(-(Q(x) + V(x)))dx$ denote two Borel probability measures on Euclidean space $(\mathbb{R}^n, |\cdot|)$, where $Q$ denotes a quadratic function, i.e.
\begin{equation*}
Q(x) = \langle Ax, x \rangle + \langle b, x \rangle + c,
\end{equation*}
with $A$ positive-definite, and $V$ is a convex function. Then the Brenier optimal-transport map $T = T_{\mathrm{opt}}$ pushing forward $\mu$ onto $\nu$ is a contraction:
\begin{equation*}
\forall x, y \in \mathbb{R}^n, \quad |T(x) - T(y)| \leq |x - y|.
\end{equation*}
\end{proposition}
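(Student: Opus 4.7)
The plan is to represent the Brenier map $T$ as the gradient of a convex potential, reduce the contraction inequality $|T(x)-T(y)|\leq|x-y|$ to a pointwise operator-norm bound on its Hessian, and then establish that bound via a maximum-principle argument applied to the logarithm of the Monge-Amp\`ere equation, using the quadratic form of $Q$ and the convexity of $V$ in the essential way. By Brenier's theorem one writes $T=\nabla\phi$ for some convex potential $\phi:\mathbb{R}^n\to\mathbb{R}$ solving the Monge-Amp\`ere equation, which after taking logarithms reads
\[
\log\det D^2\phi(x) \;=\; Q(x)-Q(\nabla\phi(x))-V(\nabla\phi(x)).
\]
Since $T=\nabla\phi$ and $\phi$ is convex, the claim is equivalent to the pointwise bound $\|D^2\phi(x)\|_{\mathrm{op}}\leq 1$ on all of $\mathbb{R}^n$, because then $|T(x)-T(y)|=|\nabla\phi(x)-\nabla\phi(y)|\leq|x-y|$ follows from the mean-value theorem along the segment joining $x$ and $y$.

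First I would introduce the scalar $u(x):=\log\lambda_{\max}(D^2\phi(x))$ and attempt to show $u\leq 0$ via a maximum principle. Assuming $u$ attains an interior maximum at some $x_0$ with unit top eigenvector $e$, differentiating the log-Monge-Amp\`ere identity twice in direction $e$ yields a relation of the schematic form
\[
\Phi^{ij}\partial_{ij}\bigl(\partial^2_{ee}\phi\bigr) \;=\; \partial^2_{ee}Q(x) \;-\; \partial^2_{ee}\bigl((Q+V)\circ\nabla\phi\bigr) \;+\; \text{lower order},
\]
where $\Phi^{ij}:=\bigl((D^2\phi)^{-1}\bigr)^{ij}$ is the linearized Monge-Amp\`ere operator, which is elliptic because $\phi$ is strictly convex. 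At the interior maximum the left-hand side is $\leq 0$; the first term on the right equals the constant $2\langle Ae,e\rangle$ since $Q$ is quadratic; the convexity of $V$ gives $\partial^2_{ee}(V\circ\nabla\phi)\geq 0$ by the chain rule combined with $D^2\phi\succeq 0$; and the remaining term $\partial^2_{ee}(Q\circ\nabla\phi)$ reduces to $2\langle Ae,e\rangle\,\lambda_{\max}(D^2\phi(x_0))^2$ once one uses that $e$ is an eigenvector of $D^2\phi(x_0)$. Comparing signs and using $\langle Ae,e\rangle>0$ forces $\lambda_{\max}(D^2\phi(x_0))\leq 1$, and hence the bound propagates everywhere.

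The main obstacle will be handling the non-compactness of $\mathbb{R}^n$: the function $u$ need not attain its supremum, so the maximum-principle step does not apply directly. I would address this by an approximation scheme that regularizes $(Q,V)$ to produce either compactly supported targets or sufficiently rapidly decaying densities for which $u$ tends to $-\infty$ at infinity, applies the interior argument to each approximation, and then passes to the limit using stability of Brenier maps in $W_2$ together with Caffarelli's $C^{2,\alpha}$ regularity theory for Monge-Amp\`ere (which is what legitimises the second differentiation of $\phi$ in the first place). Alternatively, and closer in spirit to the F\"ollmer-flow analysis developed later in this manuscript, one could bypass Monge-Amp\`ere entirely via a stochastic construction of $T$ as the terminal value of a reverse heat-flow SDE with drift $\nabla\log\rho_t$, and bound its spatial Jacobian by a Gr\"onwall estimate that uses the log-concavity of the perturbation $e^{-V}$ in a single line; this is the route I would try first, since it also makes transparent why the Lipschitz constant is exactly $1$ regardless of the size of $A$.
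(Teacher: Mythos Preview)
The paper does not prove this proposition at all: it appears in Appendix~\ref{Reference} under ``Statements of Referenced Theorems'' and is simply quoted from \citet{caffarelli2000monotonicity} for context, with no argument given. So there is no ``paper's own proof'' to compare against.

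As for your outline itself: the Monge--Amp\`ere/maximum-principle route is indeed Caffarelli's original argument. Two remarks. First, your displayed Monge--Amp\`ere identity has the wrong sign: from $\mu(x)=\nu(T(x))\det D^2\phi(x)$ one gets $\log\det D^2\phi(x)=Q(\nabla\phi(x))+V(\nabla\phi(x))-Q(x)$, not the opposite. With the correct sign the inequality at the maximum reads $0\ge 2\langle Ae,e\rangle\bigl(\lambda_{\max}^2-1\bigr)+(\text{nonnegative})$, which is what forces $\lambda_{\max}\le 1$; your version would give the reversed inequality. Second, the twice-differentiation step also produces a term $-\Phi^{ik}\phi_{kl,e}\Phi^{lj}\phi_{ij,e}$ from differentiating $(D^2\phi)^{-1}$ and a third-derivative term from $\langle\nabla Q(\nabla\phi),\partial_{ee}\nabla\phi\rangle$; these must be handled together (Caffarelli does this by working with $\partial_{ee}\phi$ rather than $\lambda_{\max}$ and exploiting that $D^2Q$ is constant), so your ``lower order'' dismissal hides real work.

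More seriously, your proposed alternative via a reverse heat-flow/F\"ollmer construction would \emph{not} prove the proposition as stated. The F\"ollmer map and the Brenier optimal map are different transport maps in general; the heat-flow argument yields a $1$-Lipschitz push-forward of $\mu$ onto $\nu$, but says nothing about $T_{\mathrm{opt}}$ specifically. The statement you are asked to prove is precisely that the \emph{optimal} map is a contraction, and for that the Monge--Amp\`ere route (or Kim--Milman's symmetry-based reduction in Proposition~\ref{lemma_ref2}) is required; the stochastic shortcut proves a weaker existence result, cf.\ the distinction between Propositions~\ref{lemma_ref1} and~\ref{lemma_ref2} in the appendix.
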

\begin{proposition}[\citeauthor{kim2012generalization} Thm.1.1]\label{lemma_ref1}
Let $\mu = \exp(-U(x))dx$ and $\nu = \exp(-(U(x) + V(x)))dx$ denote two Borel probability measures on Euclidean space $(\mathbb{R}^n, |\cdot|)$. Assume that $U \in C_{\textup{loc}}^{3,\alpha}(\mathbb{R}^n)$ ($\alpha > 0$) is a convex function of the form:
\begin{equation*}
U(x) = Q(\textup{Proj}_{E_0}x) + \sum_{i=1}^k \rho_i (|\textup{Proj}_{E_i}x|), \quad \forall i = 1,\dots,k,\  \rho_i''' \leq 0 \ \text{on}\ \mathbb{R}_+,
\end{equation*}
where $Q : E_0 \to \mathbb{R}$ is a quadratic function, i.e.
\begin{equation*}
Q(x) = \langle Ax,x\rangle +\langle b,x\rangle+c.
\end{equation*}
And that $V : \mathbb{R}^n \to \mathbb{R}$ is convex and satisfies our symmetry assumptions~\ref{sym}. Then there exists a map $T : \mathbb{R}^n \to \mathbb{R}^n$ pushing forward $\mu$ onto $\nu$ and satisfying our symmetry assumptions, which is a contraction.
\end{proposition}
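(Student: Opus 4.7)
The plan is to extend Caffarelli's maximum-principle argument on the Monge-Ampère equation, using the specific decomposition of $U$ and the symmetry of $V$ to control the Hessian of the Brenier potential along each invariant subspace. Write $T = \nabla\phi$ for the Brenier map with $\phi$ convex; pushing $\mu$ forward to $\nu$ yields the Monge-Ampère equation
\[
\log\det D^2\phi(x) \;=\; U(\nabla\phi(x)) + V(\nabla\phi(x)) - U(x).
\]
The contraction property is equivalent to $\|D^2\phi\|_{op}\leq 1$, so for each unit vector $e$ I would introduce $G_e(x) := \langle D^2\phi(x) e, e\rangle$ and aim to prove $\sup_x G_e(x) \leq 1$.

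Next, differentiating the Monge-Ampère equation twice along the direction $e$ produces a linear elliptic equation $\mathcal{L} G_e = F_e$, where $\mathcal{L}$ is built from the cofactor matrix of $D^2\phi$ and the source $F_e$ collects the second and third derivatives of $U$ evaluated at $\nabla\phi$ together with the second derivatives of $V$. At an interior maximum $x_0$ of $G_e$ the maximum principle forces $F_e(x_0) \geq 0$, and the task becomes showing that combined with $G_e(x_0) > 1$ this produces a contradiction. Here the structure of $U$ is essential: the quadratic part $Q$ yields no third-derivative term; each radial piece $\rho_i(|\mathrm{Proj}_{E_i} x|)$ contributes a term whose sign is governed by $\rho_i'''$, so the assumption $\rho_i''' \leq 0$ supplies exactly the sign needed; and convexity of $V$ provides the remaining nonnegative control. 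The symmetry hypothesis on $V$ guarantees that $T$ commutes with the reflections and rotations preserving the decomposition $E_0 \oplus \bigoplus_i E_i$, so the second-variation analysis respects the radial structure and the derivative bookkeeping decouples across subspaces.

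The main obstacle is twofold. First, the maximum principle must be applied on the non-compact domain $\mathbb{R}^n$, which typically forces an approximation scheme: truncate or mollify $V$ so it becomes a bounded compactly-supported perturbation, establish the contraction for the regularized problem (where $\phi$ has sufficient regularity at infinity so the maximum of $G_e$ is attained), and then pass to the limit using stability of optimal transport under weak convergence. Second, the third-derivative bookkeeping across the different radial blocks has to be orchestrated so that each $\rho_i''' \leq 0$ contribution is correctly aggregated inside $F_e$; without care, cross-terms between different projected subspaces can spoil the inequality. It is precisely the symmetry reduction that collapses the $n$-dimensional calculation into one-dimensional scalar inequalities on each projected component $|\mathrm{Proj}_{E_i} x|$, matching the scalar hypothesis $\rho_i''' \leq 0$ and letting the Caffarelli-type contradiction close cleanly.
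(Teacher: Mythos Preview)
This proposition is not proved in the paper at all: it sits in Appendix~A as a quoted reference result from \citet{kim2012generalization}, listed alongside other literature results for comparison (see Table~\ref{tab:lip_HF}). So there is no ``paper's own proof'' to compare against; the relevant benchmark is Kim--Milman's original argument.

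That said, your route diverges from theirs in a way worth flagging. Kim--Milman's Theorem~1.1 is proved by a \emph{parabolic} construction: they build the transport map as the flow map of an advection equation driven by the (reverse) heat semigroup interpolating $\mu$ and $\nu$, and use a parabolic maximum principle to show that the flow's Jacobian stays below the identity. The map $T$ produced this way is \emph{not} the Brenier map; it is a heat-flow transport. Your proposal instead takes $T=\nabla\phi$ to be the Brenier map from the outset and runs Caffarelli's elliptic Monge--Amp\`ere maximum principle on $\langle D^2\phi\,e,e\rangle$. That is the strategy for Kim--Milman's \emph{Theorem~1.3} (stated here as Proposition~\ref{lemma_ref2}), which asserts the same contraction specifically for $T_{\mathrm{opt}}$. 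So you are targeting the stronger statement~\ref{lemma_ref2} rather than~\ref{lemma_ref1}, and by the elliptic rather than the parabolic route. The heat-flow approach buys flexibility (the parabolic maximum principle handles the third-derivative condition $\rho_i'''\le 0$ more directly and avoids the delicate Monge--Amp\`ere regularity issues at infinity that you yourself identify as an obstacle), while your approach, if it goes through, yields the sharper conclusion that the \emph{optimal} map is already a contraction.

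One genuine gap in your sketch: the symmetry assumption on $V$ does not by itself force the Brenier potential $\phi$ to respect the block decomposition in the way you need for the ``decoupling'' of the second-variation computation. You need to argue first that $\phi$ inherits the $O(E_1)\times\cdots\times O(E_k)$ invariance (this follows from uniqueness of the Brenier map and equivariance of the cost), and only then can the cross-block third-derivative terms be shown to vanish. Without that step, the claim that the calculation ``collapses into one-dimensional scalar inequalities'' is unjustified.
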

\begin{definition}[\citeauthor{kim2012generalization} symmetry assumptions]\label{sym}
We will say that a function $ F : \mathbb{R}^n \to \mathbb{R} $ satisfies our symmetry assumptions if it is invariant under the action of the subgroup $ O(E_1, \ldots, E_k) := 1 \times O(E_1) \times \ldots \times O(E_k) $ of the orthogonal group $ O(n) $, or equivalently, if:
\begin{equation*}
\exists \Phi : \mathbb{R}^{\dim E_0 + k} \to \mathbb{R} \text{ so that } F(x) = \Phi(\operatorname{Proj}_{E_0}x, |\operatorname{Proj}_{E_1}x|, \ldots, |\operatorname{Proj}_{E_k}x|).
\end{equation*}
We will similarly say that a map $ T : \mathbb{R}^n \to \mathbb{R}^n $ satisfies our symmetry assumptions if it commutes with the action of the latter subgroup.
\end{definition}

\begin{proposition}[\citeauthor{kim2012generalization} Thm.1.3]\label{lemma_ref2}
Proposition~\ref{lemma_ref1} is also valid when replacing $T$ with the Brenier optimal transport map $T_{opt}$ pushing forward $\mu$ onto $\nu$.
\end{proposition}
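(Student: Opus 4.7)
The statement strengthens Proposition~\ref{lemma_ref1} by asserting that the \emph{Brenier} optimal map itself (and not merely some cleverly constructed transport) is a contraction. The plan is to first force the Brenier map to inherit the symmetry structure of the problem, then adapt Caffarelli's Monge-Amp\`ere maximum-principle argument (which proves Proposition~\ref{lemma_ref0}) to the product potential of Kim--Milman type, using the conditions $\rho_i''' \leq 0$ precisely where Caffarelli's purely quadratic potential is no longer available.

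First I would establish \textbf{equivariance of the Brenier map}. By Brenier's theorem, $T_{opt} = \nabla \phi$ for an (essentially unique) convex potential $\phi$. Because $\mu, \nu$ are both invariant under the product group $G := O(E_1) \times \cdots \times O(E_k)$ and the quadratic transport cost is $G$-invariant, uniqueness forces $T_{opt}$ to commute with every $g \in G$; equivalently $T_{opt}$ satisfies the symmetry assumptions of Definition~\ref{sym}. This is the analogue of what Proposition~\ref{lemma_ref1} builds into its map by construction, but here we get it from uniqueness.

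Second, I would regularize by convolving $\mu$ and $\nu$ with a smooth, $G$-equivariant, compactly supported mollifier and, if needed, truncating to enlarging balls and renormalizing, so the approximating densities are smooth, strictly positive, and of the same structural form. Caffarelli's interior regularity gives a $C^{2,\alpha}$ potential $\phi_\varepsilon$ satisfying
$$
\det D^2 \phi_\varepsilon(x) = \exp\bigl(U_\varepsilon(x) - (U_\varepsilon + V_\varepsilon)(\nabla \phi_\varepsilon(x))\bigr),
$$
on which one can differentiate twice in a direction $\xi$ and run a maximum-principle argument on $\lambda_{\max}(D^2\phi_\varepsilon)$. The convexity of $V$ contributes a sign-definite term pushing $\lambda_{\max}$ down; the quadratic $Q$ on $E_0$ contributes nothing because its Hessian is constant; the radial pieces $\rho_i(|\text{Proj}_{E_i}x|)$ produce terms involving $\rho_i''$ and $\rho_i'''$, and here equivariance of $\phi_\varepsilon$ reduces the relevant quantity on each $E_i$-factor to a one-dimensional expression, so the hypothesis $\rho_i''' \leq 0$ is exactly what keeps the linearized Monge-Amp\`ere inequality pointing the right way. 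A comparison argument, together with the growth of $\phi_\varepsilon$ forced by the transport cost between $\mu_\varepsilon$ and $\nu_\varepsilon$, then yields $\lambda_{\max}(D^2 \phi_\varepsilon) \leq 1$ globally. Passing to the limit $\varepsilon \to 0$ via stability of Brenier maps under weak convergence gives $\mathrm{Lip}(T_{opt}) \leq 1$.

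The main obstacle is the radial sector. In Caffarelli's original setting, a globally quadratic $U$ disappears from the linearized equation and the maximum principle is almost immediate; under the Kim--Milman hypotheses one is forced to carry the $\rho_i$ terms through the differentiation, and the delicate point is to verify that the combination of $G$-equivariance of $\phi$ with $\rho_i''' \leq 0$ preserves the sign needed for the elliptic maximum principle on each invariant factor. Once this sign check is done cleanly, the rest of the argument follows Caffarelli's scheme essentially verbatim and passes back to the original non-smooth $\mu, \nu$ by the standard stability of optimal transport.
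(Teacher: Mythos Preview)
This proposition is not proved in the paper at all. It appears in Appendix~A, ``Statements of Referenced Theorems,'' where the authors merely \emph{quote} Theorem~1.3 of \citet{kim2012generalization} for comparison purposes alongside several other Lipschitz-transport results (Caffarelli, Colombo--Fattorini, Neeman, Mikulincer--Shenfeld, Brigati et al.). The paper provides no argument whatsoever for this statement; its role is purely bibliographic, to situate the authors' own Lipschitz bound (Corollary~\ref{thm333}) among existing results in Table~\ref{tab:lip_HF}. Consequently there is no ``paper's own proof'' against which to compare your proposal.

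That said, your sketch is a reasonable outline of the actual Kim--Milman argument: the equivariance of $T_{\mathrm{opt}}$ via uniqueness of the Brenier potential, the smoothing and Caffarelli-regularity step, and the maximum-principle computation on the linearized Monge--Amp\`ere equation with the $\rho_i''' \le 0$ hypothesis controlling the radial sectors are all the right ingredients. If you want feedback on the mathematics itself you should consult \citet{kim2012generalization} directly, since nothing in the present paper bears on it.
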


\begin{proposition}[\citeauthor{colombo2015lipschitz} Thm.1.1]\label{lemma_ref3}
Let $ V \in C^{1,1}_{\text{loc}}(\mathbb{R}^n) $ be such that $ e^{-V(x)}  dx \in \mathcal{P}(\mathbb{R}^n) $. Suppose that $ V(0) = \inf_{\mathbb{R}^n} V $ and there exist constants $ 0 < \lambda, \Lambda < \infty $ for which $ \lambda  \mathrm{I_d} \leq D^2V(x) \leq \Lambda  \mathrm{I_d} $ for $\text{a.e. } x \in \mathbb{R}^n$. Moreover, let $ R > 0 $, $ q \in C_c^0(B_R) $, and $ c_q \in \mathbb{R} $ be such that $ e^{-V(x)+c_q-q(x)}  dx \in \mathcal{P}(\mathbb{R}^n) $. Assume that $ -\lambda_q  \mathrm{I_d} \leq D^2 q $ in the sense of distributions for some constant $ \lambda_q \geq 0 $. Then, there exists a constant $ C_1 = C_1(R, \lambda, \Lambda, \lambda_q) > 0 $, independent of $ n $, such that the optimal transport map $ T $ that takes $ e^{-V(x)}  dx $ to $ e^{-V(x)+c_q-q(x)}  dx $ satisfies
\begin{equation*}
\|\nabla T\|_{L^\infty(\mathbb{R}^n)} \leq C_1.
\end{equation*}
\end{proposition}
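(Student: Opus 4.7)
The plan is to exploit the Brenier representation $T = \nabla \phi$ with $\phi$ convex, and derive a pointwise bound on the Hessian $D^2 \phi$ via the Monge--Amp\`ere equation combined with a maximum principle that yields a dimension-free constant. The starting point is the identity
\begin{equation*}
\log \det D^2 \phi(x) \;=\; -V(x) + V(\nabla \phi(x)) - c_q + q(\nabla \phi(x)),
\end{equation*}
which arises from the change-of-variables formula for the push-forward $(\nabla \phi)_\# (e^{-V}dx) = e^{-V+c_q-q}dx$.

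Next, I would fix an arbitrary unit vector $e$ and differentiate this identity twice in the direction $e$ to obtain a PDE for $w := \phi_{ee}$. Using the Jacobi formula $\partial_e \log \det D^2 \phi = \operatorname{tr}((D^2\phi)^{-1} D^2 \phi_e)$, the second derivative produces a linear elliptic operator $L w := \operatorname{tr}((D^2\phi)^{-1} D^2 w)$ on the left, together with a nonpositive ``concavity'' term $-\|(D^2\phi)^{-1/2} D(\phi_e)\|^2$. The right-hand side, after a careful chain rule, splits into three pieces: $-V_{ee}(x) \le -\lambda$; the pull-back of $V$ at $\nabla\phi(x)$, which can be bounded above using $D^2 V \le \Lambda I$ combined with the trivial fact that derivatives of $\nabla \phi$ are encoded in $D^2 \phi$; and the contribution of $q$, which by $D^2 q \ge -\lambda_q I$ in the sense of distributions, has a lower bound of the form $-\lambda_q \|D\phi_e\|^2$. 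Collecting terms yields a differential inequality $L w \ge -C_0(\lambda, \Lambda, \lambda_q)\, w^2 \mathbf{1}_{B'} + \lambda - \Lambda$, where $B'$ is the set where $q$ is active.

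Then I would apply the maximum principle to $w$. Outside the preimage $(\nabla\phi)^{-1}(B_R)$, the measure is honestly log-concave (Caffarelli's setting), where a contraction estimate is known and dimension-free. Inside, the localization of $q$ to $B_R$ bounds the ``bad'' contribution uniformly in $n$: intuitively, because only $\nabla\phi(x) \in B_R$ can activate $q$, strong convexity of $V$ forces such $x$ to lie in a bounded region as well, and one derives a uniform bound on $w$ at the maximum in terms of $R, \lambda, \Lambda, \lambda_q$. The dimension independence is the central point: all estimates should use operator norms and the scalar quantity $w$, never traces of $D^2 \phi$, so that no factor of $n$ is picked up.

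The main obstacle is making this argument rigorous given the low regularity: $\phi$ is only $C^{1,1}_{\mathrm{loc}}$ and $q$ is merely continuous with a distributional semi-convexity bound, so the pointwise differentiation and the maximum principle are formal. I would address this by mollifying $V$ to $V_\varepsilon$ and $q$ to $q_\varepsilon$, obtaining smooth Brenier maps $T_\varepsilon = \nabla\phi_\varepsilon$ to which the above estimate applies, and then passing to the limit using stability of optimal transport. A secondary difficulty is that $w$ need not attain its supremum on $\mathbb{R}^n$; to handle this I would perturb to $w - \varepsilon |x|^2$, apply the maximum principle where the supremum is attained, and send $\varepsilon \downarrow 0$. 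The final constant $C_1(R,\lambda,\Lambda,\lambda_q)$ is then read off from the algebraic relation satisfied at the maximum point.
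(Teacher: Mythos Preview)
The paper does not prove this proposition at all: it is listed in Appendix~A (``Statements of Referenced Theorems'') purely as a quotation of Theorem~1.1 from \citeauthor{colombo2015lipschitz}, alongside other cited results from the literature, with no accompanying argument. There is therefore no ``paper's own proof'' to compare against.

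That said, your sketch is a faithful outline of the original Colombo--Figalli--Jhaveri argument: Brenier representation $T=\nabla\phi$, twice-differentiating the Monge--Amp\`ere identity in a fixed direction $e$ to obtain a linearized elliptic inequality for $w=\phi_{ee}$, exploiting the concavity of $\log\det$ to discard a favorable term, using $\lambda I \le D^2V \le \Lambda I$ and $D^2 q \ge -\lambda_q I$ to bound the right-hand side, and then a maximum principle combined with the compact support of $q$ in $B_R$ to localize the ``bad'' contribution. The regularization step (mollify, pass to the limit via stability of optimal transport) and the $w-\varepsilon|x|^2$ trick to force an interior maximum are also standard ingredients of that proof. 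One small inaccuracy: the term coming from $V(\nabla\phi(x))$ after two differentiations is $D^2V(\nabla\phi)[D\phi_e,D\phi_e] + DV(\nabla\phi)\cdot D\phi_{ee}$, so the bound $D^2V\le\Lambda I$ contributes $\Lambda|D\phi_e|^2$ rather than a bare $\Lambda$; this is what produces the quadratic-in-$w$ structure you allude to, and the algebra at the maximum point must be handled accordingly. But the overall strategy is correct and matches the source.
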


\begin{proposition}[\citeauthor{neeman2022lipschitz} Thm.1.3]\label{lemma_ref4}
Suppose that $ d\mu = e^{-V(x)}  d\gamma $ is a probability measure on $\mathbb{R}^n$, 
where $ D^2 V \geq -\kappa$ (for $\kappa \geq 1$), 
and $ \sup V - \inf V \leq c $. Then $\mu$ is an $L$-Lipschitz push-forward of $\gamma$, for
\begin{equation*}
L = 2(2\kappa)^{e^c}.
\end{equation*}
\end{proposition}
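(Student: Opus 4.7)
The plan is to realize the Lipschitz transport map via a Brownian/heat-flow construction in the style of Kim--Milman and Mikulincer--Shenfeld, and then reduce the Lipschitz estimate to an upper bound on the Hessian of $\log P_{s}f$ along the Ornstein--Uhlenbeck semigroup. Concretely, set $f = d\mu/d\gamma = e^{-V}/Z$, let $P_{s}$ denote the OU semigroup with invariant measure $\gamma$, and consider the SDE
\begin{equation*}
dX_{s} = \nabla \log P_{1-s} f(X_{s})\, ds + dB_{s}, \qquad X_{0}\sim \gamma,\ s\in[0,1],
\end{equation*}
whose endpoint satisfies $X_{1}\sim\mu$. The map $T: X_{0}\mapsto X_{1}$ is the transport one wishes to control.

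Next I would differentiate with respect to the initial condition. Writing $J_{s} = \partial X_{s}/\partial X_{0}$, the Jacobian obeys the linear ODE (pathwise)
\begin{equation*}
\frac{d}{ds} J_{s} = \nabla^{2}\log P_{1-s}f\,(X_{s})\, J_{s}, \qquad J_{0} = I,
\end{equation*}
which gives the Grönwall-type bound
\begin{equation*}
\|J_{1}\|_{\mathrm{op}} \;\le\; \exp\!\left(\int_{0}^{1} \bigl\| \bigl(\nabla^{2}\log P_{1-s}f\bigr)_{+}\bigr\|_{\infty}\, ds\right).
\end{equation*}
Thus everything reduces to a uniform upper bound on $\nabla^{2}\log P_{s} f$ for $s\in(0,1]$ expressed in terms of the two hypotheses $\nabla^{2}V \le \kappa I$ and $\sup V-\inf V\le c$.

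The crux, and the step I expect to be the main obstacle, is controlling $\nabla^{2}\log P_{s}f$ from above when $f$ is only semi-log-concave. A direct Bakry--Émery / Brascamp--Lieb estimate fails in the absence of genuine log-concavity, so the oscillation bound must enter. The strategy is to use a representation of $\log P_{s}f$ as the free energy of a tilted Gaussian and show that the log-density ratio $V_{s}$ of $P_{s}f$ against $\gamma$ satisfies both (i) an oscillation estimate $\operatorname{osc}(V_{s})\le c$ uniformly in $s$ (since convolution with a Gaussian does not increase oscillation of the log-density), and (ii) an improved semi-convexity $\nabla^{2}V_{s} \ge -\kappa'(s)$ where the smoothing gain $\kappa'(s)$ can be quantified by comparing $V_{s}$ to its Gaussian regularisation. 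Combining (i) and (ii) via a covariance/Brascamp--Lieb bound and then bootstrapping (iterating the improvement over a geometric sequence of time scales whose length is $\asymp e^{c}$) should yield an upper bound of the form $\|\nabla^{2}\log P_{s}f\|_{\mathrm{op}} \lesssim \log\bigl(2(2\kappa)^{e^{c}}\bigr)$ after integration in $s$.

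Finally I would assemble these pieces: integrating the Hessian bound over $s\in[0,1]$ and exponentiating yields the claimed $\mathrm{Lip}(T) \le 2(2\kappa)^{e^{c}}$, and pushing $\gamma$ by $T$ delivers $\mu$ as required. Two secondary technical points to handle are the regularisation of $V$ (to make the SDE classical; resolved by smoothing and passing to the limit using stability of Lipschitz constants) and verifying that $X_{1}\sim\mu$ (which follows from Girsanov and the martingale $M_{s} = P_{1-s}f(X_{s})$ being a density). The genuinely novel content sits in the Hessian bootstrap, and everything else is standard stochastic-flow bookkeeping.
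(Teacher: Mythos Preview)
The paper does not prove this proposition. It appears in Appendix~A (``Statements of Referenced Theorems''), which collects results from the literature for comparison purposes; Proposition~\ref{lemma_ref4} is simply quoted from \citet{neeman2022lipschitz} without proof, just as the surrounding Propositions~\ref{lemma_ref0}--\ref{lemma_ref5} and \ref{lemma_ref10}--\ref{lemma_ref17} are quoted from their respective sources. There is therefore no ``paper's own proof'' to compare your proposal against.

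That said, your sketch contains a structural slip worth flagging. You write the transport as the stochastic flow
\begin{equation*}
dX_{s} = \nabla\log P_{1-s}f(X_{s})\,ds + dB_{s},\qquad X_{0}\sim\gamma,
\end{equation*}
and then speak of ``the map $T:X_{0}\mapsto X_{1}$''. With the Brownian term present this is the F\"ollmer \emph{process}, not a deterministic map: $X_{1}$ depends on the entire path of $B$, so there is no single $T$ pushing $\gamma$ to $\mu$. The Kim--Milman / Mikulincer--Shenfeld heat-flow map that your Jacobian argument is really targeting is the \emph{deterministic} probability-flow ODE (drop the $dB_{s}$); only then does $J_{s}=\partial X_{s}/\partial X_{0}$ define the differential of a genuine transport map and your Gr\"onwall bound on $\|J_{1}\|_{\mathrm{op}}$ yields a Lipschitz constant for $T$. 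The remainder of your plan---reducing to an upper bound on $\nabla^{2}\log P_{s}f$ and exploiting the oscillation hypothesis to bootstrap the semi-log-concavity along the semigroup---is the right shape for Neeman's argument, but the ``iterate over $\asymp e^{c}$ time scales'' step is where all the work lives and your description of it is too vague to assess.
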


\begin{proposition}[\citeauthor{mikulincer2023lipschitz} Thm.1]\label{lemma_ref5}
Let $\mu$ be a $\kappa$-log-concave probability measure on $\mathbb{R}^d$, and set $D := diam(supp(\mu))$. Then, for the map $\varphi^{\mathrm{flow}} : \mathbb{R}^d \to \mathbb{R}^d$, which satisfies $\varphi^{\mathrm{flow}}_* \gamma_d = \mu$, the following holds:
\begin{enumerate}
\item If $\kappa > 0$ then
\begin{equation*}
\| \nabla \varphi^{\mathrm{flow}}(x) \|_{\mathrm{op}} \leq \frac{1}{\sqrt{\kappa}},
\end{equation*}
for $\mu$-almost every $x$.
\item If $\kappa D^2 < 1$ then
\begin{equation*}
\| \nabla \varphi^{\mathrm{flow}}(x) \|_{\mathrm{op}} \leq e^{\frac{1 - \kappa D^2}{2}} D,
\end{equation*}
for $\mu$-almost every $x$.
\end{enumerate}
\end{proposition}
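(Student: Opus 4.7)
The plan is to realize $\varphi^{\mathrm{flow}}$ as the time-one map of a Brownian-transport (F\"ollmer-type) stochastic flow and control its Jacobian via a matrix-valued ODE. I would first represent $\varphi^{\mathrm{flow}}(x) = X_1^x$, where $(X_t^x)_{t \in [0,1]}$ solves $dX_t = b_t(X_t)\,dt + dB_t$ with $X_0 = x$ and drift $b_t$ obtained from the heat semigroup applied to $d\mu/d\gamma$; standard arguments then give $(\varphi^{\mathrm{flow}})_* \gamma_d = \mu$. Differentiating the flow in the spatial variable and applying It\^o's formula shows that the Jacobian $J_t := \nabla_x X_t^x$ satisfies
\begin{equation*}
\dot J_t = \nabla b_t(X_t)\,J_t, \qquad J_0 = I_d,
\end{equation*}
so everything reduces to a quantitative bound on $\nabla b_t$ along the flow.

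The crucial ingredient is the variational representation of the F\"ollmer drift as a conditional mean, which gives an identity of the form
\begin{equation*}
\nabla b_t(y) \;=\; -\,\alpha(t)\,\mathrm{Cov}\bigl(X_1 \,\big|\, X_t = y\bigr)\;+\;\beta(t)\,I_d,
\end{equation*}
for explicit time factors $\alpha(t),\beta(t)$ coming from the Gaussian interpolation. Thus the problem reduces to controlling the conditional covariance of the endpoint given the current location of the flow.

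For part (1), the $\kappa$-log-concavity of $\mu$ combined with the Gaussian smoothing of the heat semigroup yields, by the Brascamp-Lieb inequality,
\begin{equation*}
\mathrm{Cov}(X_1 \mid X_t = y) \;\preceq\; \Bigl(\kappa + \tfrac{1}{1-t}\Bigr)^{-1} I_d .
\end{equation*}
Substituting into the Jacobian ODE gives a scalar matrix differential inequality whose integration on $[0,1]$ produces the uniform bound $\|J_1\|_{\mathrm{op}} \leq 1/\sqrt{\kappa}$. For part (2), I would combine the trivial diameter bound $\mathrm{Cov}(X_1\mid X_t) \preceq D^2 I_d$ (from the support of $\mu$) with the Brascamp-Lieb bound and choose a crossover time $t^\star$ where the two estimates are balanced. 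Integrating the Jacobian ODE piecewise on $[0,t^\star]$ and $[t^\star,1]$, and optimizing over $t^\star$ under the hypothesis $\kappa D^2 < 1$, reproduces the announced constant $D\exp\!\bigl(\tfrac{1-\kappa D^2}{2}\bigr)$.

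The main obstacle I expect is extracting the sharp exponential constant in part (2). Both the log-concavity contribution, which strengthens contraction, and the diameter-driven variance bound, which relaxes it, must be balanced against the Gaussian precision supplied by the heat semigroup along the entire interval, and any slack at the crossover time, e.g.\ from a loose Cauchy-Schwarz step or from not using the exact form of $\alpha(t),\beta(t)$, would blow up the constant beyond $\exp\!\bigl(\tfrac{1-\kappa D^2}{2}\bigr)$. The remainder of the argument is routine once the drift-covariance identity and the Jacobian ODE are in place.
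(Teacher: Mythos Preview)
The paper does not contain a proof of this proposition. It appears in Appendix~A (``Statements of Referenced Theorems'') as one of several results quoted from the literature without proof, and is invoked only as a black box in the proof of Corollary~\ref{thm333}. There is therefore no ``paper's own proof'' to compare against.

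For what it is worth, your outline is essentially the argument of the original source, with one correction and one caveat. The correction: the map $\varphi^{\mathrm{flow}}$ is the time-one map of the \emph{deterministic} ODE $\dot X_t = b_t(X_t)$ (the probability-flow/heat-flow transport), not of the SDE $dX_t = b_t(X_t)\,dt + dB_t$; the stochastic F\"ollmer process is used only to identify the drift with a conditional expectation and hence obtain the covariance formula for $\nabla b_t$. No It\^o calculus is needed---the Jacobian equation is simply the variational ODE. The caveat: in part~(2) the original proof does not introduce a crossover time and optimize piecewise. One takes the pointwise minimum of the Brascamp--Lieb and diameter bounds on the conditional covariance and integrates the resulting scalar inequality for $\tfrac{d}{dt}\log\|J_t\|_{\mathrm{op}}$ explicitly over $[0,1]$; the integral evaluates exactly to $\log D + \tfrac{1-\kappa D^2}{2}$. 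A piecewise argument with a free crossover parameter would generically lose a constant, which is the obstacle you yourself anticipated.
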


\begin{proposition}[\citeauthor{brigati2024heat} Thm.1.4]\label{lemma_ref10}
Let $ \mu = e^{-(W+H)} \in L_{+}^1(\mathbb{R}^d) $ be a probability density on $\mathbb{R}^d$ such that $ W $ is $\kappa$-convex for some $ \kappa> 0 $ and $ H $ is $ L $-Lipschitz for some $ L \geq 0 $.
Then, there exists a map $ T^{\mathrm{flow}}: \mathbb{R}^d \to \mathbb{R}^d $ such that $(T^{\mathrm{flow}})_\# \gamma_d = \mu$
and $ T^{\mathrm{flow}} $ is $ \frac {1}{\sqrt{\kappa}}\exp\left(\frac{L^2}{2\kappa}+2\frac{L}{\sqrt{\kappa}}\right)$-Lipschitz.
\end{proposition}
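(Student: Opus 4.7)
The plan is to realize $T^{\mathrm{flow}}$ as the time-one map of a deterministic (reverse heat) flow that transports $\gamma_d$ to $\mu$, in the spirit of the Kim--Milman construction and the Föllmer flow of Definition~\ref{def1} with $C = I_d$. First I would fix an interpolating density $(\rho_t)_{t \in [0,1]}$ with $\rho_0 = \gamma_d$ and $\rho_1 = \mu$, realized as a time-rescaled heat semigroup applied to $\mu$, and write the associated velocity field in the Föllmer form $v(t,x) = \alpha(t) x + \beta(t)\, \nabla \log \rho_t(x)$ for explicit scalar weights $\alpha(t), \beta(t)$. Under the hypotheses on $W$ and $H$, a Cauchy--Lipschitz argument (as in Lemma~\ref{thm2}) yields existence of the flow and of $T^{\mathrm{flow}}(x) := X_1(x)$, with $(T^{\mathrm{flow}})_\# \gamma_d = \mu$.

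Second, I would convert the Lipschitz bound into a Grönwall estimate on the Jacobian. Differentiating the flow in space gives $\dot J_t = \nabla v(t, X_t) J_t$ with $J_0 = I$, so
\begin{equation*}
\|\nabla T^{\mathrm{flow}}(x)\|_{\mathrm{op}} \;\le\; \exp\!\Bigl(\int_0^1 \lambda_{\max}\bigl(\nabla v(t, X_t)\bigr)\, dt\Bigr).
\end{equation*}
The task reduces to uniformly controlling the top eigenvalue of $\nabla v(t, \cdot)$, which by the structure of $v$ amounts to bounding $-\nabla^2 \log \rho_t$ from above. Writing $\rho_t = e^{-V} \ast \gamma_{s(t)}$ for $V = W + H$ and a suitable variance $s(t)$, a direct computation gives
\begin{equation*}
-\nabla^2 \log \rho_t(x) \;=\; s(t)^{-1} I \;-\; s(t)^{-2}\,\mathrm{Cov}_{Q_{t,x}}(Y),
\end{equation*}
where $Q_{t,x}(dy) \propto \exp\!\bigl(-V(y) - \tfrac{1}{2 s(t)}|x-y|^2\bigr)\, dy$ is the posterior associated with $(t,x)$.

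Third, I would lower-bound $\mathrm{Cov}_{Q_{t,x}}(Y)$ in order to upper-bound $-\nabla^2 \log \rho_t$. For the purely log-concave reference $\widetilde Q_{t,x}\propto \exp\!\bigl(-W(y) - |x-y|^2/(2 s(t))\bigr)\,dy$, the Brascamp--Lieb inequality delivers $\mathrm{Cov}_{\widetilde Q_{t,x}}(Y) \preceq (\kappa + s(t)^{-1})^{-1} I$, which already yields the bare $\kappa^{-1/2}$ factor. The $L$-Lipschitz perturbation is incorporated by comparing $Q_{t,x}$ and $\widetilde Q_{t,x}$ through the Radon--Nikodym derivative $e^{-H}$: a deterministic shift of $\widetilde Q_{t,x}$ by the mean drift $-s(t)\nabla H$ together with Gaussian concentration of $\widetilde Q_{t,x}$ (which is strongly log-concave with constant $\kappa + s(t)^{-1}$) converts the Lipschitz bound on $H$ into a covariance perturbation of size $L^2/\kappa$ plus a cross term of order $L/\sqrt{\kappa}$. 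Integrating the resulting bound on $\lambda_{\max}(\nabla v(t,\cdot))$ over $t \in [0,1]$ and exponentiating produces the stated constant $\kappa^{-1/2}\exp\!\bigl(L^2/(2\kappa) + 2L/\sqrt{\kappa}\bigr)$.

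The main obstacle is the third step: a naive Holley--Stroock correction for $H$ produces a factor $e^{\mathrm{osc}(H)}$, which is infinite on $\mathbb{R}^d$ for a generic Lipschitz $H$. The refinement needed is precisely to absorb $\nabla H$ into a Gaussian shift and to use the strong log-concavity constant $\kappa + s(t)^{-1} \geq \kappa$ of $\widetilde Q_{t,x}$ to control the residual quadratic fluctuations — this is where the Gaussian tail of the reference posterior turns a uniform Lipschitz bound on $H$ into an $L^2/\kappa$-type perturbation rather than a supremum-norm one, and where the cross term $2L/\sqrt{\kappa}$ arises from a Cauchy--Schwarz coupling of the drift shift with the mean displacement.
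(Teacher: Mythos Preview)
The paper does not prove this proposition: it appears in Appendix~A purely as a \emph{referenced} statement from \citet{brigati2024heat}, with no proof supplied. There is therefore no ``paper's own proof'' to compare your proposal against. The nearest thing the paper itself proves is Corollary~\ref{lemma_ref10_1} (i.e.\ Corollary~\ref{thm333} specialized to $A=(\kappa I_d)^{-1}$, $C=I_d$), which requires the \emph{extra} hypothesis $\|\nabla^2 H\|_\infty \le L_1$ and yields the different constant $\kappa^{-1/2}\exp\bigl((L^2+L_1)/(2\kappa)\bigr)$. That argument (via Theorem~\ref{thm1}) passes through an integration-by-parts identity for the modified score $\widetilde S$, and this is exactly why the second-derivative bound on $H$ is needed there; the paper's own machinery cannot reach the Brigati et al.\ statement, which assumes only that $H$ is Lipschitz.

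On the substance of your proposal: the first two steps (heat-flow transport map, Gr\"onwall on the Jacobian, covariance formula $-\nabla^2\log\rho_t = s^{-1}I - s^{-2}\mathrm{Cov}_{Q_{t,x}}(Y)$) are the standard scaffolding and are fine. You also correctly identify the real obstacle: Holley--Stroock is useless here because $\mathrm{osc}(H)=\infty$, and one must instead exploit that $\widetilde Q_{t,x}$ is $(\kappa+s^{-1})$-strongly log-concave to turn a Lipschitz bound on $H$ into a covariance perturbation. However, step~3 as written is still a heuristic rather than a proof: ``absorb $\nabla H$ into a Gaussian shift and use concentration'' does not by itself deliver a \emph{lower} bound on $\mathrm{Cov}_{Q_{t,x}}(Y)$ uniform in $x$, which is what you need (Brascamp--Lieb goes the wrong way). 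You would need to make precise which inequality replaces Brascamp--Lieb on the perturbed measure and how exactly the $L^2/(2\kappa)$ and $2L/\sqrt{\kappa}$ terms emerge from integrating in $t$ --- that is the genuine content of the Brigati et al.\ argument, and it is not yet visible in your sketch.
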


\begin{corollary}[This work Cor.~\ref{thm333} with $A={(\kappa I_d)}^{-1}$ and $C=I_d$]\label{lemma_ref10_1}
Let $\mu=e^{-\left(W(x)-H(x)\right)}$ be a probability density on $\mathbb{R}^d$ such that $W(x)=\frac{\kappa|x|^2}{2}$ while $H(x)$ being $ L $-Lipschitz and $\|\nabla^2H\|_\infty < L_1$ for some $ L, L_1 \geq 0 $. 
Then, there exists a map $ T^{\mathrm{flow}}: \mathbb{R}^d \to \mathbb{R}^d $ such that $(T^{\mathrm{flow}})_\# \gamma_d = \mu$
and $ T^{\mathrm{flow}} $ is $\frac {1}{\sqrt{\kappa}}\exp\left(\frac{L^2+L_1}{2\kappa}\right)$-Lipschitz.
\end{corollary}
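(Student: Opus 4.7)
The plan is to apply Corollary~\ref{thm333} to the target $\mu$ after verifying that $\mu$ satisfies the Gaussian tail Assumption~\ref{assump2} with the parameter choice $A=(\kappa I_d)^{-1}$ and $C=I_d$. First I would rewrite the density: with this $A$ one has $|x|_A^2=\kappa|x|^2=2W(x)$, so $\mu=\exp(-|x|_A^2/2)\exp(H(x))$ matches the assumption with remainder $h\equiv H$. Checking item (i): $A=\kappa^{-1}I_d$ and $C=I_d$ are positive definite and simultaneously diagonalized (they are both scalar multiples of $I_d$), and the operator norms $\|A\|=1/\kappa$, $\|C\|=1$, $\|AC^{-1}\|=1/\kappa$, $\|CA^{-1}\|=\kappa$ are all finite. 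For item (ii), since $\sqrt{C}=I_d$, the $L$-Lipschitz hypothesis on $H$ gives $|\sqrt{C}\nabla h|_\infty=|\nabla H|_\infty\le L$, and the Hessian bound gives $\|C\nabla^2 h\|_\infty=\|\nabla^2 H\|_\infty< L_1$.

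Next, since $C=I_d$ we have $\gamma_C=\gamma_d$, so Lemma~\ref{thm2} furnishes a well-posed F\"ollmer flow $(\overset{\leftarrow}{X}_t)_{t\in[0,1]}$ whose terminal map $T^{\mathrm{flow}}:=\overset{\leftarrow}{X}_1$ satisfies $(T^{\mathrm{flow}})_\#\gamma_d=\mu$. Corollary~\ref{thm333} then immediately yields $\text{Lip}(T^{\mathrm{flow}})\le\exp((K_1+K_2)/2)$, where $K_1,K_2$ are the dimension-free constants from the spatial-Lipschitz bound on $\nabla V$ in Theorem~\ref{thm1}.

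The remaining task---and the main obstacle---is to substitute the explicit forms of $K_1$ and $K_2$ from Table~\ref{tab:coe} under the present parameter choice and verify that they combine to $K_1+K_2=-\log\kappa+(L^2+L_1)/\kappa$, producing the advertised constant $\kappa^{-1/2}\exp((L^2+L_1)/(2\kappa))$. A clean sanity check is the degenerate case $L=L_1=0$: then $\mu$ is exactly the Gaussian $\mathcal{N}(0,\kappa^{-1}I_d)$, the F\"ollmer flow collapses to the linear map $x\mapsto x/\sqrt{\kappa}$, and the Lipschitz constant is $1/\sqrt{\kappa}$, which pins down the $\kappa^{-1/2}$ prefactor; the $(L^2+L_1)/\kappa$ correction should arise from the Brascamp-Lieb-type estimates used inside the proof of Theorem~\ref{thm1} to control $\|\nabla V\|_\infty$ in the presence of the perturbation $h$, together with the inequality $L^2/\kappa$ that naturally tracks a squared-gradient term and $L_1/\kappa$ that tracks the Hessian contribution.
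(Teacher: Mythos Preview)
Your setup is fine: verifying Assumption~\ref{assump2} with $A=(\kappa I_d)^{-1}$, $C=I_d$, $h=H$ and invoking Lemma~\ref{thm2} and Corollary~\ref{thm333} is exactly the route the paper takes. The gap is in the final step. Substituting the tabulated $K_1,K_2$ \emph{cannot} give $K_1+K_2=-\log\kappa+(L^2+L_1)/\kappa$, because both constants are defined as suprema in $t$. Concretely, for $\kappa<1$ one has $K=\sup_t\|A\widebar{A}_t^{-1}\|=1/\kappa$, hence $K_1=(L^2+L_1)/\kappa^{2}$, and $K_2=\sup_t\|(A-C)\widebar{A}_t^{-1}\|=1/\kappa-1$; so $\exp((K_1+K_2)/2)=\exp\bigl((L^2+L_1)/(2\kappa^{2})+(1/\kappa-1)/2\bigr)$, which is strictly larger than the claimed $\kappa^{-1/2}\exp((L^2+L_1)/(2\kappa))$. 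Your own sanity check already fails here: with $L=L_1=0$ you get $\exp((1/\kappa-1)/2)$, not $1/\sqrt{\kappa}$.

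What the paper does instead is to keep the \emph{time-dependent} coefficients in the Lipschitz bound on $\nabla V$ (before taking sup) and integrate them exactly. From the proof of Theorem~\ref{thm1} one has, in this scalar case,
\[
\|\nabla V(t,\cdot)\|_\infty\le \frac{\|A\|^{2}(L^{2}+L_1)t}{(\|A\|t^{2}+\|C\|(1-t^{2}))^{2}}+\frac{\|A-C\|\,t}{\|A\|t^{2}+\|C\|(1-t^{2})},
\]
and the Lipschitz constant of $\overset{\leftarrow}{X}_1$ is bounded by $\exp\bigl(\int_0^1\|\nabla V(t,\cdot)\|_\infty\,dt\bigr)$. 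Both integrals evaluate in closed form via the substitution $u=\|A\|t^{2}+\|C\|(1-t^{2})$: the first gives $\|A\|(L^{2}+L_1)/(2\|C\|)=(L^{2}+L_1)/(2\kappa)$ and the second gives $\tfrac12\log(\|A\|/\|C\|)=-\tfrac12\log\kappa$, producing the stated constant. So the missing idea is to retreat from Corollary~\ref{thm333} back to its proof and replace the crude $(K_1+K_2)t$ bound by the sharper pointwise estimate before integrating.
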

\begin{proof}[Sketch of calculation]
Applying Cor.~\ref{thm333} with $A=(\kappa I_d)^{-1}$ and $C=I_d$, Both 
$A$ and $C$ are diagonalizable, which leads to a more refined Lip upper bound:
\begin{equation*}
\begin{aligned}
&\exp\left(\int_0^1 \frac{\|A\|^2(L^2+L_1)t}{(\|A\|t^2 + \|C\|(1-t^2))^2}\,dt+\int_0^1 \frac{\|A-C\|t}{\|A\|t^2 + \|C\|(1-t^2)}\,dt \right)\\
\leq& \exp\left(\frac{\|A\|(L^2+L_1)}{2\|C\|}+\frac{1}{2}\ln\frac{\|A\|}{\|C\|} \right)\\
\leq& \frac {1}{\sqrt{\kappa}}\exp\left(\frac{L^2+L_1}{2\kappa}\right).
\end{aligned}
\end{equation*}
\end{proof}

\begin{table}[t]
\caption{Lip changes of variables via Heat flow}
\label{tab:lip_HF}
\begin{center}
\begin{tabular}{lccc}
\multicolumn{1}{c}{\bf Target $P_0$}  &\multicolumn{1}{c}{\bf Lip-constant}  &\multicolumn{1}{c}{\bf Result}
\\ \hline \\
log-concave+sym.Ass. &1+sym.Ass.&\citet{kim2012generalization} Prop.${\ref{lemma_ref1}}$\\
          $\kappa$-log-concave+$\mathrm{osc}\leq c$&$2(2\kappa)^{e^c}$&\citet{neeman2022lipschitz} Prop.~{\ref{lemma_ref4}}\\
         $\kappa$-log-concave& $e^\frac{1-\kappa D^2}{2}D$& \citet{mikulincer2023lipschitz} Prop.~{\ref{lemma_ref5}}\\
         L-log-Lipschitz& $\frac{1}{\sqrt{\kappa}}e^{\left(\frac{L^2}{2\kappa}+2\frac{L}{\sqrt{\kappa}}\right)}$   & \citet{brigati2024heat} Prop.~{\ref{lemma_ref10}}\\
         \textbf{G-tail} Ass.~\ref{assump2}         & $\frac {1}{\sqrt{\kappa}}e^{\left(\frac{L^2+L_1}{2\kappa}\right)}$   & This work Cor.~{\ref{lemma_ref10_1}}\\
\end{tabular}
\end{center}
\vspace{-0.3cm}
\begin{flushleft}
\footnotesize where $\kappa, c, D, L, L_1, K_1, K_2$ are dimension-independent constant.
\end{flushleft}
\end{table}

\begin{proposition}[\citeauthor{albergo2022building} Pro.3] \label{lemma_ref11}
Let $\rho_t(x)$ be the exact interpolant density and given a velocity field $\hat{v}_t(x)$, let us define $\hat{\rho}_t(x)$ as the solution of the initial value problem
\begin{equation*}
    \partial_t \hat{\rho}_t + \nabla \cdot (\hat{v}_t \hat{\rho}_t) = 0, \quad \hat{\rho}_{t=0} = \rho_0.
\end{equation*}
Assume that $\hat{v}_t(x)$ is continuously differentiable in $(t,x)$ and Lipschitz in $x$ uniformly on $(t,x) \in [0,1] \times \mathbb{R}^d$ with Lipschitz constant $K$. Then the square of the $W_2$ distance between $\rho_1$ and $\hat{\rho}_1$ is bounded by
\begin{equation}\label{simi1}
    W_2^2(\rho_1, \hat{\rho}_1) \leq e^{1+2K} H(\hat{v})
\end{equation}
where $H(\hat{v})$ is the objective function defined as
\begin{equation*}
H(\hat{v}) = \int_0^1 \int_{\mathbb{R}^d} \left|\hat{v}_t(x) - v_t(x)\right|^2 \rho_t(x) \, dx \, dt.
\end{equation*}
\end{proposition}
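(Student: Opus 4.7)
The plan is to use a synchronous coupling of the two flows and a Grönwall-type estimate, which is the standard route to Wasserstein bounds when the velocity fields are Lipschitz. Let $X_0 \sim \rho_0$ be a common initial sample, and define the characteristic curves
\begin{equation*}
\dot{X}_t = v_t(X_t), \qquad \dot{\hat{X}}_t = \hat{v}_t(\hat{X}_t), \qquad X_0 = \hat{X}_0.
\end{equation*}
Under the regularity assumed for $\hat v_t$ (Lipschitz in $x$, $C^1$ in $(t,x)$), the second ODE has a well-defined flow; for the first, the construction of $\rho_t$ as the exact interpolant density already supplies characteristics pushing forward $\rho_0$ to $\rho_t$. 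The pair $(X_1, \hat{X}_1)$ is then an admissible coupling of $\rho_1$ and $\hat\rho_1$, so $W_2^2(\rho_1, \hat\rho_1) \le \mathbb{E}\,|X_1 - \hat{X}_1|^2$, and it suffices to bound the right-hand side.

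Next I would differentiate the squared distance along the coupled trajectories and split the velocity mismatch into a ``model error'' term and a ``stability'' term:
\begin{equation*}
\tfrac{d}{dt}\,|X_t - \hat{X}_t|^2
= 2\,(X_t - \hat{X}_t)\!\cdot\!(v_t(X_t) - \hat{v}_t(X_t))
 + 2\,(X_t - \hat{X}_t)\!\cdot\!(\hat{v}_t(X_t) - \hat{v}_t(\hat{X}_t)).
\end{equation*}
The second inner product is controlled by the Lipschitz constant $K$ of $\hat v_t$, giving the bound $2K\,|X_t - \hat{X}_t|^2$. The first inner product I would handle with Young's inequality $2ab \le a^2 + b^2$, producing $|X_t - \hat X_t|^2 + |v_t(X_t) - \hat v_t(X_t)|^2$. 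Taking expectations over $X_0 \sim \rho_0$ and using that $X_t \sim \rho_t$, the error term becomes $\int_{\mathbb{R}^d} |v_t - \hat v_t|^2 \rho_t\,dx$, which integrated in time is exactly $H(\hat v)$.

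Setting $f(t) := \mathbb{E}\,|X_t - \hat{X}_t|^2$ and $g(t) := \int |v_t - \hat v_t|^2 \rho_t\,dx$, the previous step yields the differential inequality $f'(t) \le (1+2K)\,f(t) + g(t)$ with $f(0)=0$. Grönwall's lemma gives $f(1) \le \int_0^1 e^{(1+2K)(1-s)} g(s)\,ds \le e^{1+2K}\int_0^1 g(s)\,ds = e^{1+2K}\,H(\hat v)$, since $e^{-(1+2K)s} \le 1$. Combined with the Wasserstein estimate from the coupling, this gives \eqref{simi1}.

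The main delicate point is the existence and uniqueness of the characteristic flow for the \emph{true} velocity $v_t$, which need not be Lipschitz; however, for the $W_2$ bound it is enough to have \emph{some} measurable flow transporting $\rho_0$ to $\rho_t$, and this is precisely what the continuity equation $\partial_t \rho_t + \nabla\cdot(v_t \rho_t) = 0$ provides under mild integrability (e.g.\ via the DiPerna--Lions or Ambrosio theory of regular Lagrangian flows). Beyond that, the argument is a clean Grönwall estimate and I do not anticipate further technical obstacles; the constant $1+2K$ (rather than $2K$) traces directly back to the Young's-inequality split, which is essentially optimal for this kind of coupling argument.
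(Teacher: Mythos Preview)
The paper does not supply its own proof of this proposition: it is quoted verbatim in Appendix~\ref{Reference} (``Statements of Referenced Theorems'') as a result from \citet{albergo2022building}, alongside several other literature results listed for comparison. Your argument is correct and is precisely the standard synchronous-coupling-plus-Gr\"onwall proof that the original reference uses; the constant $e^{1+2K}$ arises exactly as you trace it, from the Young split on the model-error term and the Lipschitz bound on the stability term.
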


\begin{proposition}[\citeauthor{albergo2023stochastic} Thm.2.23]\label{lemma_ref12}
Let $\rho$ denote the solution of the Fokker-Planck equation (2.20) with $\epsilon(t) = \epsilon > 0$. Given two velocity fields $\hat{b}, \hat{s} \in C^0([0,1], (C^1(\mathbb{R}^d))^d)$, define
\begin{equation*} 
\begin{aligned}
\hat{b}_{F}(t,x) = \hat{b}(t,x) + \epsilon\hat{s}(t,x), \quad \hat{v}(t,x) = \hat{b}(t,x) + \gamma(t)\dot\gamma(t)\hat{s}(t,x), 
\end{aligned}
\end{equation*} 
where the function $\gamma$ satisfies the properties listed in Definition~\ref{stocha}. Let $\hat{\rho}$ denote the solution to the Fokker-Planck equation
\begin{equation*} 
\partial_t \hat{\rho} + \nabla \cdot (\hat{b}_{F}\hat{\rho}) = \epsilon \Delta \hat{\rho}, \quad \hat{\rho}(0) = \rho_0.
\end{equation*}
Then,
\begin{equation*} 
\text{KL}(\rho_1 \| \hat{\rho}(1)) \le \frac{1}{2\epsilon} \left( \mathcal{L}_{\hat{b}}[\hat{b}] - \min_{\widetilde{b}} \mathcal{L}_{\hat{b}}[\widetilde{b}] \right) + \frac{\epsilon}{2} \left( \mathcal{L}_{\hat{s}}[\hat{s}] - \min_{\widetilde{s}} \mathcal{L}_{\hat{s}}[\widetilde{s}] \right),
\end{equation*}
where $\mathcal{L}_{\hat{b}}[\hat{b}]$ and $\mathcal{L}_{\hat{s}}[\hat{s}]$ are the objective functions defined in 
\begin{equation*} 
\mathcal{L}_{b}[\hat{b}] = \int_{0}^{1} \mathbb{E} \left( \frac{1}{2} |\hat{b}(t, x_t)|^2 - \left( \partial_t I(t, x_0, x_1) + \dot{\gamma}(t)z \right) \cdot \hat{b}(t, x_t) \right) dt
\end{equation*}
and
\begin{equation*} 
\mathcal{L}_{s}[\hat{s}] = \int_{0}^{1} \mathbb{E} \left( \frac{1}{2} |\hat{s}(t, x_t)|^2 + \gamma^{-1}(t)z\cdot \hat{s}(t, x_t) \right)  dt.
\end{equation*}
And
\begin{equation*} 
\text{KL}(\rho_1 \| \hat{\rho}(1)) \le \frac{1}{2\epsilon} \left( \mathcal{L}_{\hat{v}}[\hat{v}] - \min_{\widetilde{v}} \mathcal{L}_{\hat{v}}[\widetilde{v}] \right) + \frac{\sup_{t \in [0,1]} (\gamma(t)\dot\gamma(t) - \epsilon)^2}{2\epsilon} \left( \mathcal{L}_{\hat{s}}[\hat{s}] - \min_{\widetilde{s}} \mathcal{L}_{\hat{s}}[\widetilde{s}] \right).
\end{equation*}
where $\mathcal{L}_{\hat{v}}[\hat{v}]$ is the objective function defined in 
\begin{equation*} 
\mathcal{L}_{v}[\hat{v}] = \int_{0}^{1} \mathbb{E} \left( \frac{1}{2} |\hat{v}(t, x_t)|^2 - \partial_t I(t, x_0, x_1) \cdot \hat{v}(t, x_t) \right) dt.
\end{equation*}
\end{proposition}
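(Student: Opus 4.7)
The plan is to prove the bound via the entropy dissipation method, treating $\text{KL}(\rho_t \| \hat\rho_t)$ as a Lyapunov functional along the Fokker--Planck flow. Both $\rho_t$ and $\hat\rho_t$ satisfy FP equations with the same diffusion coefficient $\epsilon$ but different drifts $b_F = b + \epsilon s$ and $\hat b_F = \hat b + \epsilon \hat s$, so the relative entropy is controlled by the drift mismatch against the natural Fisher-information dissipation. The hand-in-hand interplay between Young's inequality applied to the drift mismatch and the Fisher-information term drives the argument, and the minimizers of $\mathcal{L}_b$ and $\mathcal{L}_s$ turn the resulting $\mathbb{L}_2$ errors into the stated loss gaps.

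Concretely, I would first differentiate the relative entropy. Using the chain rule $\partial_t \log(\rho_t/\hat\rho_t)$, the Fokker--Planck equations, and integration by parts (under the regularity on $\hat b, \hat s$ already imposed), one obtains the standard identity
\begin{equation*}
\frac{d}{dt}\text{KL}(\rho_t \| \hat\rho_t) = \mathbb{E}_{\rho_t}\bigl[(b_F - \hat b_F) \cdot \nabla \log(\rho_t/\hat\rho_t)\bigr] - \epsilon \, \mathbb{E}_{\rho_t}\bigl|\nabla\log(\rho_t/\hat\rho_t)\bigr|^2.
\end{equation*}
I would then decompose $b_F - \hat b_F = (b - \hat b) + \epsilon(s - \hat s)$ and apply Young's inequality $a\cdot b \le \tfrac{|a|^2}{2\lambda} + \tfrac{\lambda|b|^2}{2}$ to each piece \emph{separately} with weights $\lambda_1,\lambda_2$ chosen so that $\tfrac{\lambda_1+\lambda_2}{2}\le \epsilon$ exactly absorbs the Fisher dissipation. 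The symmetric choice $\lambda_1=\lambda_2=\epsilon$ yields a pointwise bound of the form $\tfrac{d}{dt}\text{KL} \le \tfrac{1}{2\epsilon}\mathbb{E}_{\rho_t}|b-\hat b|^2 + \tfrac{\epsilon}{2}\mathbb{E}_{\rho_t}|s-\hat s|^2$. Integrating on $[0,1]$ with the matched initial condition $\text{KL}(\rho_0\|\hat\rho_0)=0$ closes the dissipation estimate.

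The remaining step is to translate the $\mathbb{L}_2$ errors into the loss gaps. By Fubini and the tower property, the pointwise minimizer of $\mathcal L_b[\tilde b]$ is the conditional expectation $b^*(t,x) = \mathbb E[\partial_t I(t,x_0,x_1) + \dot\gamma(t)\,z \mid x_t = x]$, which coincides with the true velocity $b$ driving the FP. Similarly, a Tweedie-type computation using $x_t = I + \gamma z$ identifies $\mathbb E[\gamma^{-1}z \mid x_t = x] = -\nabla \log\rho_t$, so the minimizer of $\mathcal L_s[\tilde s]$ is the score $s$. Completing the square then gives $\mathcal L_b[\hat b] - \min_{\tilde b}\mathcal L_b[\tilde b] = \tfrac{1}{2}\int_0^1\mathbb E_{\rho_t}|\hat b - b|^2\,dt$ and analogously for $\mathcal L_s$; substituting yields the first claimed bound. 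The second bound, in terms of $\hat v$, follows from the same argument after applying the gauge identity $b_F - \hat b_F = (v-\hat v) + (\epsilon - \gamma\dot\gamma)(s - \hat s)$, so that repeating the Young split with this alternative decomposition naturally produces the coefficient $\sup_{t\in[0,1]}(\gamma\dot\gamma - \epsilon)^2$ in front of the score-loss term.

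The main obstacle I anticipate is the delicate calibration of the Young weights. Naively splitting $|(b-\hat b)+\epsilon(s-\hat s)|^2$ via $|a+b|^2 \le 2|a|^2+2|b|^2$ before invoking the Fisher dissipation loses factors of two, so recovering the sharp constants $\tfrac{1}{2\epsilon}$ and $\tfrac{\epsilon}{2}$ requires applying Young to the two cross terms separately and tightly budgeting the Fisher-information sink between them. A secondary technical point is justifying the pointwise time-differentiation of $\text{KL}$ and the integration by parts that produces the explicit dissipation; this needs enough a priori regularity and positivity of $\rho_t, \hat\rho_t$, which should follow from the $C^0([0,1],(C^1(\mathbb R^d))^d)$ regularity of $\hat b, \hat s$ combined with the smoothness of the stochastic interpolant $I(t,x_0,x_1)+\gamma(t)z$, possibly after a standard mollification-and-limit argument to handle the endpoints where $\gamma$ vanishes.
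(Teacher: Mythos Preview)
The paper does not prove this proposition: it appears in Appendix~A (``Statements of Referenced Theorems'') as a quotation of Theorem~2.23 from \citet{albergo2023stochastic}, with no accompanying argument. There is therefore no paper proof to compare against.

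That said, your plan is the standard route and is essentially how the result is established in the original source: differentiate $\mathrm{KL}(\rho_t\|\hat\rho_t)$ along the two Fokker--Planck flows, obtain the drift-mismatch--versus--Fisher-information identity, absorb the Fisher term via Young's inequality, integrate in time, and finally identify $\int_0^1 \mathbb{E}_{\rho_t}|b-\hat b|^2\,dt$ and $\int_0^1 \mathbb{E}_{\rho_t}|s-\hat s|^2\,dt$ with the loss gaps by completing the square and recognizing the conditional-expectation minimizers. The gauge rewriting $b_F-\hat b_F = (v-\hat v)+(\epsilon-\gamma\dot\gamma)(s-\hat s)$ for the second bound is also correct. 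One small caution on constants: when you run your two separate Young splits with $\lambda_1=\lambda_2=\epsilon$ you get $\tfrac{1}{2\epsilon}\int\mathbb{E}|b-\hat b|^2+\tfrac{\epsilon}{2}\int\mathbb{E}|s-\hat s|^2$, and since each loss gap equals $\tfrac{1}{2}$ of the corresponding $L^2$ error, the final coefficients come out as $\tfrac{1}{\epsilon}$ and $\epsilon$ rather than $\tfrac{1}{2\epsilon}$ and $\tfrac{\epsilon}{2}$ as stated. This is either a harmless factor of two in the transcription of the referenced theorem or a slightly different normalization of the losses in the original; it does not affect the method.
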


\begin{proposition}[\citeauthor{boffi2025flow} Prop.3.9: Lagrangian error bound]\label{lemma_ref13}
Let $X_{s,t}: \mathbb{R}^d \to \mathbb{R}^d$ denote the flow map for a pre-trained stochastic interpolant or diffusion model, and let $\hat{X}_{s,t}: \mathbb{R}^d \to \mathbb{R}^d$ denote an approximate flow map. Given $x_0 \sim \rho_0$, let $\hat{\rho}_1$ be the density of $\hat{X}_{0,1}(x_0)$ and let $\rho_1$ be the target density of $X_{0,1}(x_0)$. Then,
\begin{equation*} \label{eq:3.15}
W_2^2(\rho_1, \hat{\rho}_1) \le e^{1+2\int_0^1 |C_t| dt} \mathcal{L}_{\text{LMD}}(\hat{X}),
\end{equation*}
where $C_t$ is the Lipschitz constant of the drift term.
\end{proposition}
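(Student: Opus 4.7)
The plan is to prove the Lagrangian bound by building an explicit synchronous coupling of $\rho_1$ and $\hat{\rho}_1$ and then converting the pointwise map discrepancy into the LMD loss via a telescoping argument along time. The synchronous coupling obtained by drawing $x_0\sim\rho_0$ and pushing it through both $X_{0,1}$ and $\hat{X}_{0,1}$ is a valid coupling of $\rho_1$ and $\hat{\rho}_1$, so
$$W_2^2(\rho_1,\hat{\rho}_1)\;\leq\;\mathbb{E}_{x_0\sim\rho_0}\,\bigl|X_{0,1}(x_0)-\hat{X}_{0,1}(x_0)\bigr|^2,$$
and the task reduces to bounding the pointwise gap by the pathwise LMD residual.

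To make the LMD loss appear, I would introduce the teacher-corrected interpolation
$$e(s):=X_{s,1}\bigl(\hat{X}_{0,s}(x_0)\bigr)-\hat{X}_{0,1}(x_0),\qquad s\in[0,1],$$
which satisfies $e(0)=X_{0,1}(x_0)-\hat{X}_{0,1}(x_0)$ and $e(1)=0$. Differentiating $e(s)$ in $s$ and using the transport identity satisfied by the teacher flow map, namely $\partial_s X_{s,1}(y)+\nabla X_{s,1}(y)\,V(s,y)=0$, together with the definition of the student flow's infinitesimal velocity, gives a clean decomposition of the form
$$\partial_s e(s)\;=\;\nabla X_{s,1}\bigl(\hat{X}_{0,s}(x_0)\bigr)\cdot r\bigl(s,x_0\bigr),$$
where $r(s,x_0)$ is exactly the pointwise LMD integrand (the mismatch between the student's instantaneous velocity and the teacher drift $V$ evaluated at the student state). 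Integrating this identity from $s=0$ to $s=1$ and applying Cauchy--Schwarz in $s$ converts the $L^1$-in-time integral into an $L^2$ norm that matches $\mathcal{L}_{\mathrm{LMD}}(\hat{X})$.

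The Jacobian factor $\|\nabla X_{s,1}(\,\cdot\,)\|$ is then controlled by Grönwall's lemma applied to the variational equation of the teacher ODE: Lipschitz continuity of the drift with constant $C_t$ yields $\|\nabla X_{s,1}\|\leq\exp\!\int_s^1|C_t|\,dt$. Combining the Cauchy--Schwarz step (which contributes the factor $e^{1}$ in the standard way) with this Jacobian bound produces
$$|X_{0,1}(x_0)-\hat{X}_{0,1}(x_0)|^2\;\leq\;\exp\!\Bigl(1+2\int_0^1|C_t|\,dt\Bigr)\int_0^1|r(s,x_0)|^2\,ds,$$
and taking expectations over $x_0\sim\rho_0$ recognizes the right-hand side as $e^{1+2\int_0^1|C_t|dt}\,\mathcal{L}_{\mathrm{LMD}}(\hat{X})$, giving the stated bound.

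The main obstacle is that the Jacobian $\nabla X_{s,1}$ must be controlled at the perturbed argument $\hat{X}_{0,s}(x_0)$ rather than along the true characteristic $X_{0,s}(x_0)$; one cannot rely on estimates that hold only on the teacher trajectory. The uniform Lipschitz hypothesis $|C_t|$ on the drift is precisely what makes the Grönwall step pass through irrespective of which trajectory the Jacobian is evaluated on, and isolating this uniform-in-$x$ control is the only delicate point of the argument.
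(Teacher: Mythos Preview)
This proposition is not proved in the paper at all: it appears in Appendix~\ref{Reference} (``Statements of Referenced Theorems''), which is a list of results quoted verbatim from the cited literature for comparison purposes. The bound and its proof belong to \citet{boffi2025flow}; the present paper only states it and never supplies an argument. Consequently there is no ``paper's own proof'' against which your proposal can be compared.

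That said, your outline is in the right spirit for this family of results (coupling via common initial data, then a Gr\"onwall-type control of the discrepancy), and the interpolation $e(s)=X_{s,1}(\hat{X}_{0,s}(x_0))-\hat{X}_{0,1}(x_0)$ is a clean device. One point is imprecise: you attribute the factor $e^{1}$ to Cauchy--Schwarz ``in the standard way,'' but Cauchy--Schwarz in $s$ applied to $e(0)=-\int_0^1\partial_s e(s)\,ds$ produces no such factor; in the classical derivations (e.g.\ the proof behind Proposition~\ref{lemma_ref11}) the extra $e^{1}$ arises from Young's inequality $2|z||r|\le|z|^2+|r|^2$ inside a differential inequality for $|z(t)|^2$, followed by Gr\"onwall. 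If you pursue your telescoping route literally, you should either recover a bound without the $e^{1}$ (which would be sharper) or explain precisely where the additional exponential enters. Since the paper does not define $\mathcal{L}_{\mathrm{LMD}}$ explicitly, you would also need to check against \citet{boffi2025flow} that your residual $r(s,x_0)$ indeed coincides with their LMD integrand, particularly because in the flow-map-matching setting $\hat{X}_{s,t}$ is learned directly and need not be generated by an ODE with a well-defined instantaneous velocity.
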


\begin{proposition}[\citeauthor{boffi2025flow} Prop.3.10: Eulerian error bound]\label{lemma_ref14}
Let $X_{s,t}: \mathbb{R}^d \to \mathbb{R}^d$ denote the flow map for a pre-trained stochastic interpolant or diffusion model, and let $\hat{X}_{s,t}: \mathbb{R}^d \to \mathbb{R}^d$ denote an approximate flow map. Given $x_0 \sim \rho_0$, let $\hat{\rho}_1$ be the density of $\hat{X}_{0,1}(x_0)$ and $\rho_1$ be the target density of $X_{0,1}(x_0)$. Then,
\begin{equation} \label{simi2}
W_2^2(\rho_1, \hat{\rho}_1) \le e^1 \mathcal{L}_{\text{EMD}}(\hat{X}).
\end{equation}
\end{proposition}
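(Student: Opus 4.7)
The plan is to bound $W_2^2(\rho_1, \hat{\rho}_1)$ through a diagonal coupling exploiting the shared initial distribution, then express the trajectory error as a time-integrated Eulerian residual along the teacher's characteristics, and finally apply Cauchy--Schwarz on the time variable to extract the constant $e^1$. Since $X_{0,1}(x_0)$ has law $\rho_1$ and $\hat{X}_{0,1}(x_0)$ has law $\hat{\rho}_1$ when $x_0 \sim \rho_0$, the coupling yields
\[
W_2^2(\rho_1, \hat{\rho}_1) \;\le\; \mathbb{E}_{x_0 \sim \rho_0}\bigl|X_{0,1}(x_0) - \hat{X}_{0,1}(x_0)\bigr|^2.
\]

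Next I would exploit the teacher-flow semigroup identity $X_{s,1} \circ X_{0,s} = X_{0,1}$ (with $X_{1,1} = \mathrm{id}$) to write the pointwise error telescopically as
\[
X_{0,1}(x_0) - \hat{X}_{0,1}(x_0) \;=\; \int_0^1 \frac{d}{ds}\bigl[X_{s,1}\bigl(\hat{X}_{0,s}(x_0)\bigr)\bigr]\, ds .
\]
Differentiating via the chain rule, the teacher's backward Eulerian equation $\partial_s X_{s,1}(x) + v(s,x)\cdot \nabla_x X_{s,1}(x) = 0$ combines with the student's ODE $\partial_s \hat{X}_{0,s}(x_0) = \hat{v}\bigl(s, \hat{X}_{0,s}(x_0)\bigr)$, so that the teacher-velocity contributions cancel and the integrand reduces to $\nabla_x X_{s,1}\bigl(\hat{X}_{0,s}(x_0)\bigr)\cdot\bigl[\hat{v}(s,\cdot)-v(s,\cdot)\bigr]$ evaluated at $\hat{X}_{0,s}(x_0)$. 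This is the key Eulerian identity: only the velocity mismatch, propagated by the teacher Jacobian, survives.

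The final step is a Cauchy--Schwarz in $s$, which produces a factor of $\int_0^1 1\, ds = 1$ and an $L^2$ integral of the Jacobian-weighted residual. Taking expectation over $x_0 \sim \rho_0$ and recognizing that the resulting quantity is exactly the Eulerian distillation loss $\mathcal{L}_{\text{EMD}}(\hat{X})$ by its construction in the FMM framework (the $\|\nabla_x X_{s,1}\|^2$ weight is absorbed into the loss definition) gives $\mathbb{E}|X_{0,1} - \hat{X}_{0,1}|^2 \le e\cdot \mathcal{L}_{\text{EMD}}(\hat{X})$, where the prefactor $e = e^1$ originates from a Young-type inequality $|a|^2 \le e\int_0^1 |r(s)|^2\, ds$ combined with the $\int_0^1 1\, ds$ from Cauchy--Schwarz.

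The main obstacle is conceptual rather than computational: one must explain the absence of the Lipschitz factor $e^{2\int_0^1 C_t\, dt}$ that plagues the Lagrangian bound of Proposition~\ref{lemma_ref13}. The resolution is that the Eulerian formulation defers all Jacobian growth into $\mathcal{L}_{\text{EMD}}$ itself through the weight $\nabla_x X_{s,1}$, rather than letting it accumulate as a Gr\"onwall prefactor outside the integral. A secondary technical point is ensuring enough regularity of $X_{s,1}$ in $(s,x)$ to justify the chain-rule computation above, which follows from the assumed smoothness of the teacher velocity field.
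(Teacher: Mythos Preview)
The paper does not contain a proof of this proposition. It appears in Appendix~\ref{Reference} (``Statements of Referenced Theorems''), where the authors merely quote Proposition~3.10 of \citet{boffi2025flow} for comparison purposes, alongside other cited bounds such as Propositions~\ref{lemma_ref11}--\ref{lemma_ref17}. There is therefore no in-paper argument against which to check your proposal.

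As a standalone sketch, your interpolation $s\mapsto X_{s,1}\bigl(\hat X_{0,s}(x_0)\bigr)$ and the Eulerian cancellation are the right idea, and your explanation of why no Gr\"onwall prefactor appears (the Jacobian weight $\nabla_x X_{s,1}$ is absorbed into $\mathcal{L}_{\mathrm{EMD}}$) is the correct conceptual point. However, your account of the constant $e^1$ is muddled: a plain Cauchy--Schwarz in $s$ on $\int_0^1 r(s)\,ds$ produces a factor $1$, not $e$, and the ``Young-type inequality $|a|^2\le e\int_0^1 |r(s)|^2\,ds$'' you invoke is not a recognizable step. Whether the $e^1$ is genuinely needed, or is an artifact of how \citet{boffi2025flow} normalize $\mathcal{L}_{\mathrm{EMD}}$, cannot be settled without consulting the original definition there; you should either look up that definition and track the constant honestly, or acknowledge that the factor's origin is unresolved in your sketch.
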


\begin{proposition}[\citeauthor{benton2023error} Thm.4]\label{lemma_ref15}
Suppose that the following Assumptions hold,
\begin{itemize}
    \item The true and approximate drifts $v^X(\mathbf{x}, t)$ and $v_\theta(\mathbf{x}, t)$ satisfy $\int_0^1 \mathbb{E} \left[ \left\| v_\theta(X_t, t) - v^X(X_t, t) \right\|^2 \right] dt \le \epsilon^2$.
    \item For each $\mathbf{x} \in \mathbb{R}^d$ and $s \in [0, 1]$ there exist unique flows $(Y_{s,t}^X)_{t \in [s,1]}$ and $(Z_{s,t}^X)_{t \in [s,1]}$ starting in $Y_{s,s}^X = \mathbf{x}$ and $Z_{s,s}^X = \mathbf{x}$ with velocity fields $v_\theta(\mathbf{x}, t)$ and $v^X(\mathbf{x}, t)$ respectively. Moreover, $Y_{s,t}^X$ and $Z_{s,t}^X$ are continuously differentiable in $\mathbf{x}$, $s$ and $t$.
    \item The approximate flow $v_\theta(\mathbf{x}, t)$ is differentiable in both inputs. Also, for each $t \in (0, 1)$ there is a constant $L_t$ such that $v_\theta(\mathbf{x}, t)$ is $L_t$-Lipschitz in $\mathbf{x}$.
    \item For some $\lambda \ge 1$, $\alpha_t X_0 + \beta_t X_1$ is $\lambda$-regular for all $t \in [0,1]$.
\end{itemize}
$\gamma_t$ is a concave function on $[0,1]$ which determines the amount of Gaussian smoothing applied at time $t$, and that $\alpha_0 = \beta_1 = 1$ and $\gamma_0 = \gamma_1 = \gamma_{\min}$. Then, for any $v_\theta \in \mathcal{V}$, if $Y$ is a flow starting in $\hat{\pi}_0$ with velocity field $v_\theta$ and $\hat{\pi}_1$ is the law of $Y_1$,
\begin{equation*}
W_2(\hat{\pi}_1, \pi_1) \leq C^{\lambda^{1/2}}\varepsilon\left(\frac{\gamma_{\max}}{\gamma_{\min}}\right)^{2\lambda} + \sqrt{d}\gamma_{\min}
\end{equation*}
where $C = \exp\left\{R\left(\int_0^1 (|\dot{\alpha}_t|/\gamma_t)\ dt + \int_0^1 (|\dot{\beta}_t|/\gamma_t)\ dt\right)\right\}$ with $\gamma_{\min} = \inf_{t\in[0,1]} \gamma_t$, $\gamma_{\max} = \sup_{t\in[0,1]} \gamma_t$.
\end{proposition}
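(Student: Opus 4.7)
The plan is to derive the bound by coupling the true and approximate flow ODEs synchronously and applying a Gr\"onwall argument, while treating the $\sqrt{d}\gamma_{\min}$ term via a Gaussian smoothing decomposition. The factor $C^{\sqrt{\lambda}}(\gamma_{\max}/\gamma_{\min})^{2\lambda}$ must emerge from a quantitative Lipschitz estimate on the true velocity field that invokes the $\lambda$-regularity assumption.

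First, I would handle the smoothing gap. Introduce $\pi_1^{\gamma_{\min}} = \pi_1 \ast \mathcal{N}(0, \gamma_{\min}^2 I_d)$, which by a synchronous coupling $(X, X + \gamma_{\min} Z)$ with $Z \sim \mathcal{N}(0,I_d)$ satisfies $W_2(\pi_1^{\gamma_{\min}}, \pi_1) \leq \sqrt{d}\,\gamma_{\min}$. Because $\gamma_0 = \gamma_1 = \gamma_{\min}$, the endpoint of the interpolation naturally coincides with this smoothed measure, so by the triangle inequality it suffices to bound $W_2(\hat{\pi}_1, \pi_1^{\gamma_{\min}})$.

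Second, I would couple the two flows through their common initial point $\mathbf{x} \sim \hat{\pi}_0$ and write
\[
Y_{0,t}^X - Z_{0,t}^X = \int_0^t \bigl(v_\theta(Y_{0,s}^X, s) - v_\theta(Z_{0,s}^X, s)\bigr)\,ds + \int_0^t \bigl(v_\theta(Z_{0,s}^X, s) - v^X(Z_{0,s}^X, s)\bigr)\,ds.
\]
The Lipschitz assumption on $v_\theta$ controls the first integrand by $L_s |Y_{0,s}^X - Z_{0,s}^X|$, and Gr\"onwall's inequality in $L^2$ then yields
\[
\mathbb{E}\bigl[|Y_{0,1}^X - Z_{0,1}^X|^2\bigr] \leq \exp\!\Bigl(2\!\int_0^1 L_s\,ds\Bigr) \cdot \int_0^1 \mathbb{E}\bigl[|v_\theta(Z_{0,s}^X, s) - v^X(Z_{0,s}^X, s)|^2\bigr]\,ds,
\]
and the last factor is bounded by $\varepsilon^2$ by hypothesis. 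Since the coupling $(Z_{0,1}^X(x), Y_{0,1}^X(x))$ has marginals $\pi_1^{\gamma_{\min}}$ and $\hat{\pi}_1$ respectively, this controls $W_2^2(\hat{\pi}_1, \pi_1^{\gamma_{\min}})$.

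Third, the substantive work is converting the abstract exponent $\int_0^1 L_s\,ds$ into the explicit quantity $\sqrt{\lambda}\,R\int_0^1(|\dot{\alpha}_s|/\gamma_s + |\dot{\beta}_s|/\gamma_s)\,ds$. Using the identity $v^X(x,t) = \dot{\alpha}_t\,\mathbb{E}[X_0 \mid X_t] + \dot{\beta}_t\,\mathbb{E}[X_1 \mid X_t]$ and the fact that the conditional expectation is a linear functional of the score of the marginal law of $\alpha_t X_0 + \beta_t X_1 + \gamma_t \xi$, one can invoke a Brascamp--Lieb type covariance identity. For $\lambda$-regular inputs convolved with Gaussian noise of variance $\gamma_t^2$, the score is Lipschitz with constant of order $\sqrt{\lambda}/\gamma_t$, which propagates to the Lipschitz constant of $v^X$ with the claimed weights. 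Exponentiating gives the $C^{\sqrt{\lambda}}$ factor, and the additional $(\gamma_{\max}/\gamma_{\min})^{2\lambda}$ factor arises from the discrepancy between the effective smoothing level $\gamma_{\max}$ along intermediate times and the minimal smoothing $\gamma_{\min}$ used to match the endpoint.

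\textbf{Main obstacle.} The central difficulty is obtaining a sharp Lipschitz bound of order $\sqrt{\lambda}/\gamma_t$ for the conditional expectations driving $v^X$, rather than a naive $1/\gamma_t^2$ bound from treating the smoothed density as generic log-concave. This requires carefully exploiting the $\lambda$-regularity of $\alpha_t X_0 + \beta_t X_1$ through a covariance or Poincar\'e style inequality, and then tracking how the Lipschitz error at the intermediate smoothing scale $\gamma_s$ is transferred back to the endpoint scale $\gamma_{\min}$ via the flow, which is what generates the $(\gamma_{\max}/\gamma_{\min})^{2\lambda}$ dependence in the final estimate.
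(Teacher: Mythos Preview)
This proposition is not proved in the paper at all. It appears in Appendix~\ref{Reference} (``Statements of Referenced Theorems''), where the authors simply \emph{quote} Theorem~4 of \citet{benton2023error} for comparative purposes alongside other cited results (Propositions~\ref{lemma_ref0}--\ref{lemma_ref17}). There is no accompanying proof or even a proof sketch in the present paper; the statement is there only to situate the paper's own complexity bounds relative to prior work, as summarized in Table~\ref{tab:conver-bound}. So there is nothing to compare your proposal against within this paper.

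That said, your outline is broadly the right shape for how such results are obtained: a synchronous coupling plus Gr\"onwall gives the $e^{\int L_s\,ds}\varepsilon$ factor, and the $\sqrt{d}\,\gamma_{\min}$ term comes from the Gaussian smoothing at the endpoints. The part you flag as the main obstacle---extracting the precise $C^{\sqrt{\lambda}}(\gamma_{\max}/\gamma_{\min})^{2\lambda}$ dependence from the $\lambda$-regularity hypothesis---is indeed the substantive content, and it lives entirely in \citet{benton2023error}, not here. If you want to verify the constants you would need to consult that reference directly.
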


\begin{proposition}[\citeauthor{cheng2024convergence} Prop.5.4] \label{lemma_ref16}
Suppose $q_N=\widetilde{q}_N=q\in\mathcal{P}_2^\gamma$, and the computed transport maps $T_n$ and $S_n$ satisfy the following assumptions:
\begin{itemize}
    \item The learned transport $T_{n+1}$ is non-degenerate and in $L^2(p_n)$; it is invertible on $\mathbb{R}^d$ and $T_{n+1}^{-1}$ is also non-degenerate. In addition, for some $\varepsilon > 0$,
$\exists \xi_{n+1} \in \partial_{W_2} F_{n+1}(p_{n+1})$ s.t.
$\|\xi_{n+1}\|_{p_{n+1}} \leq \varepsilon$.
\item  For $n = N, \ldots, 1$, the computed reverse transport $S_n$ is non-degenerate, in $L^2(\widetilde{q}_n)$, and satisfies that
\begin{equation*}
    \|T_n \circ S_n - I{_d}\| _{\widetilde{q}_n}\leq \varepsilon_{\text{inv}}.
\end{equation*}
\item There is $K > 0$ s.t. $T_n^{-1}$ is Lipschitz on $\mathbb{R}^d$ with Lipschitz constant $e^{\gamma K}$ for all $n = N,\ldots,1$.
\end{itemize}
Then all $q_n$ and $\widetilde{q}_n$ are in $\mathcal{P}_2^\gamma$ and
\begin{equation*}
W_2(\widetilde{q}_0, q_0) \leq \frac{\varepsilon_{\text{inv}}}{\gamma K} e^{\gamma K(N+1)},
\end{equation*}
where $\varepsilon_{\text{inv}}$ denotes Inversion error and $e^{\gamma K}$ is the Lipschitz constant of inverse transport map.
\end{proposition}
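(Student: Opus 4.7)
The plan is to bound the $W_2$ error between the computed reverse trajectory $(\widetilde{q}_n)$ and the exact reverse trajectory $(q_n)$ by a backward induction on $n$, accumulating a one-step perturbation of size $O(\varepsilon_{\text{inv}})$ and propagating it through the Lipschitz map $T_n^{-1}$. First I would isolate the recursion: the exact backward pass satisfies $q_{n-1} = (T_n^{-1})_\# q_n$, whereas the algorithm produces $\widetilde{q}_{n-1} = (S_n)_\# \widetilde{q}_n$. Writing $e_n := W_2(\widetilde{q}_n, q_n)$, the base case $\widetilde{q}_N = q_N$ gives $e_N = 0$, so the task reduces to controlling a single backward step and summing.

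For the inductive step, I would apply the triangle inequality to separate the two sources of error:
\begin{equation*}
e_{n-1} \leq W_2\bigl((S_n)_\# \widetilde{q}_n,\, (T_n^{-1})_\# \widetilde{q}_n\bigr) + W_2\bigl((T_n^{-1})_\# \widetilde{q}_n,\, (T_n^{-1})_\# q_n\bigr).
\end{equation*}
The second term is immediately bounded by $e^{\gamma K} e_n$ via the Lipschitz hypothesis on $T_n^{-1}$ and the push-forward contraction property of $W_2$. For the first term, I would couple the two push-forwards via the identity map on $\widetilde{q}_n$ and use the algebraic identity $S_n - T_n^{-1} = T_n^{-1}(T_n \circ S_n) - T_n^{-1}(I_d)$; Lipschitzness of $T_n^{-1}$ then converts the inversion-error assumption $\|T_n \circ S_n - I_d\|_{\widetilde{q}_n} \leq \varepsilon_{\text{inv}}$ into $\|S_n - T_n^{-1}\|_{\widetilde{q}_n} \leq e^{\gamma K} \varepsilon_{\text{inv}}$, so the first term is at most $e^{\gamma K}\varepsilon_{\text{inv}}$.

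Combining the two bounds yields the affine recursion $e_{n-1} \leq e^{\gamma K}(e_n + \varepsilon_{\text{inv}})$ for $n = N, \ldots, 1$, with $e_N = 0$. Unwinding from $n=N$ down to $n=0$ and summing the resulting geometric series gives
\begin{equation*}
e_0 \leq \varepsilon_{\text{inv}} \sum_{k=1}^{N} e^{k\gamma K} = \varepsilon_{\text{inv}} \cdot \frac{e^{(N+1)\gamma K} - e^{\gamma K}}{e^{\gamma K} - 1} \leq \frac{\varepsilon_{\text{inv}}}{\gamma K}\, e^{(N+1)\gamma K},
\end{equation*}
where the final step uses $e^{\gamma K} - 1 \geq \gamma K$, matching the claimed bound. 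In parallel with this $e_n$-induction, I would verify $q_n, \widetilde{q}_n \in \mathcal{P}_2^\gamma$ at each step using the non-degeneracy and $L^2$-integrability hypotheses on $T_n^{-1}$ and $S_n$, which ensure the class is closed under the relevant push-forwards.

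The principal technical obstacle is the step that converts $\|T_n \circ S_n - I_d\|_{\widetilde{q}_n}$ into $\|S_n - T_n^{-1}\|_{\widetilde{q}_n}$: this requires the $e^{\gamma K}$-Lipschitz bound on $T_n^{-1}$ to hold globally on $\mathbb{R}^d$, so that it is applicable at the perturbed argument $T_n \circ S_n(x)$ and not merely on the support of $\widetilde{q}_n$; this is exactly why the third hypothesis imposes Lipschitzness on all of $\mathbb{R}^d$. A secondary subtlety is the $\mathcal{P}_2^\gamma$-closure check: one must ensure the $\gamma$-regularity constant is not amplified across iterations, which is compatible with the uniform constant $K$ in the hypotheses but deserves a careful verification at each level using the non-degeneracy and invertibility assumptions.
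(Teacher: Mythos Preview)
This proposition is not proved in the paper: it appears only in Appendix~A, ``Statements of Referenced Theorems,'' as a direct quotation of Proposition~5.4 from \citet{cheng2024convergence}. The paper provides no proof to compare against.

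That said, your proof sketch is sound and is essentially the standard argument for this kind of result. The triangle-inequality split into a propagation term and a one-step inversion term, the use of global Lipschitzness of $T_n^{-1}$ to convert $\|T_n\circ S_n - I_d\|_{\widetilde q_n}$ into $\|S_n - T_n^{-1}\|_{\widetilde q_n}$, and the resulting affine recursion $e_{n-1}\le e^{\gamma K}(e_n+\varepsilon_{\mathrm{inv}})$ with $e_N=0$ are all correct, and your geometric-sum computation and the inequality $e^{\gamma K}-1\ge \gamma K$ recover the stated bound exactly. Your two flagged subtleties are also the right ones: the global Lipschitz hypothesis on $\mathbb{R}^d$ is precisely what makes the conversion step legitimate, and the $\mathcal{P}_2^\gamma$-closure verification is a bookkeeping matter handled by the non-degeneracy and $L^2$ assumptions.
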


\begin{proposition}[\citeauthor{gao2024convergence} Thm.1.2]\label{lemma_ref17}
Suppose that the target distribution has a bounded support, or is strongly log-concave, or is a mixture of Gaussians. Let $n$ be the sample size and $0 < \underline{t} \ll 1$ satisfying $\underline{t} \approx n^{-1/(d+5)}$. By properly setting the deep ReLU network structure and the forward Euler discretization step sizes, the distribution estimation error of the CNFs learned with linear interpolation and flow matching is evaluated by
\begin{equation*}
\mathbb{E}W_2(\hat{p}_{1-\underline{t}}, p_1) = {\mathcal{O}}(n^{-\frac{1}{d+5}}),
\end{equation*}
where the expectation is taken with respect to all random samples, $\hat{p}_{1-\underline{t}}$ is the law of generated data, $p$ is the law of target data, $W_2(\cdot, \cdot)$ is the Wasserstein-2 distance, and a polylogarithmic prefactor in $n$ is omitted.
\end{proposition}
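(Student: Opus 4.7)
The plan is to split $\mathbb{E}W_2(\hat p_{1-\underline{t}}, p_1)$ into three sources and then jointly optimise the network size and the early-stopping level $\underline{t}$ against the sample size $n$ so that the exponent $1/(d+5)$ emerges. The three sources are: (a) the early-stopping bias $W_2(p_{1-\underline{t}}, p_1)$ from truncating the reverse flow short of $t=1$; (b) the forward-Euler discretisation error on the learned CNF; and (c) the statistical error $\int_0^{1-\underline{t}} \mathbb{E}\|\hat v - v\|^2\,dt$ between the learned and true velocity fields along the transport path. For (a), each of the three structural hypotheses on $p_1$ (bounded support, strong log-concavity, or a Gaussian mixture) yields $W_2(p_{1-\underline{t}}, p_1) = \mathcal{O}(\underline{t}^{1/2})$ by directly coupling the linear interpolant at time $1-\underline{t}$ with its endpoint. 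For (b), I would invoke the Lipschitz/Gr\"{o}nwall stability argument behind Theorem~\ref{thm4.2}: provided $v(t,\cdot)$ is Lipschitz on $[0,1-\underline{t}]$ with constant $L(t)$ integrable in $t$,
\begin{equation*}
W_2^2(\hat p_{1-\underline{t}}, p_{1-\underline{t}}) \;\lesssim\; \exp\!\left(2\!\int_0^{1-\underline{t}} L(t)\,dt\right)\!\int_0^{1-\underline{t}}\!\mathbb{E}\|\hat v(t,X_t)-v(t,X_t)\|^2\,dt \;+\; (\text{Euler term}),
\end{equation*}
so refining the Euler step makes the last term subdominant, and the theorem reduces to bounding the path-integrated velocity error.

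The main step is (c): nonparametric estimation of $v$ on $[0,1-\underline{t}]\times\R^d$ from $n$ i.i.d.\ samples via a deep ReLU network trained with the flow-matching loss. I would proceed by (i) rewriting the flow-matching loss as empirical risk minimisation whose population minimiser is the conditional-expectation velocity $v$, so a standard bias--variance decomposition applies; (ii) showing that under each of the three structural assumptions $v(t,x)$ belongs to a H\"{o}lder class on compact subsets of $[0,1-\underline{t}]\times\R^d$, with seminorms blowing up polynomially in $(1-t)^{-1}$; (iii) invoking the Yarotsky/Schmidt-Hieber deep ReLU approximation theory for H\"{o}lder classes of smoothness $\beta$, which yields approximation error of order $N^{-\beta/(d+1)}$ for a network with $N$ parameters, depth $\mathcal{O}(\log n)$, and width polynomial in $n$; and (iv) controlling the generalisation gap by a Rademacher/covering-number argument giving variance of order $N\log N /n$. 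Balancing approximation error, variance, and the early-stopping bias at the common nonparametric tradeoff yields the claimed $n^{-1/(d+5)}$ in $W_2$; the specific value $1/(d+5)$ reflects the effective regularity of $v$ together with the $(1-t)^{-1}$-type blow-up isolated in step (ii), and the choice $\underline{t}\asymp n^{-1/(d+5)}$ matches the early-stopping bias to this rate.

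The hardest part is step (ii): quantifying precisely how the H\"{o}lder seminorms of $v(t,\cdot)$ degenerate as $t\uparrow 1$, since the approximation constant, the localisation radius on $\R^d$, and the optimal choice of $\underline{t}$ are all driven by this dependence, and each of the three structural assumptions (bounded support, strong log-concavity, Gaussian mixture) enters through a different quantitative rate that must be tracked uniformly. A secondary technical obstacle is that the flow-matching loss lives on the unbounded domain $\R^d$, so a truncation-plus-tail-probability argument (exploiting the compact support or sub-Gaussian tails of $p_t$) is needed to port ReLU approximation estimates, which are customarily stated on a bounded cube, to the whole space; the extra logarithmic factors this generates are absorbed into the $\mathcal{O}$.
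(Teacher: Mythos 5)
The statement you are trying to prove is not a theorem of this paper; it is a verbatim restatement of Theorem~1.2 of Gao et al.\ (2024), placed in the appendix section \emph{Statements of Referenced Theorems} purely for comparison with the present work's complexity bounds. The paper supplies no proof of this proposition, so there is nothing in the document for your attempt to be checked against.

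That said, your outline is a plausible reconstruction of the style of argument used in the nonparametric CNF literature. The three-way decomposition into early-stopping bias, Gr\"onwall-stable forward-Euler discretization error, and path-integrated flow-matching estimation error is the standard split; invoking deep ReLU approximation theory for H\"older classes together with a covering-number generalization bound, and then balancing all terms against the early-stopping level $\underline{t}$, is the usual route to rates of the form $n^{-1/(d+c)}$. You have also correctly identified the technical heart---quantifying how the H\"older seminorms of $v(t,\cdot)$ degenerate as $t\uparrow 1$ and how the unbounded-domain issue is handled via truncation---since these drive whether $c=5$ emerges and whether $\underline{t}\asymp n^{-1/(d+5)}$ is the optimal balance. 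But verifying those calculations and confirming the precise exponent must be done against the original Gao et al.\ (2024) reference, not against anything in this paper, which merely cites the result.
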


\begin{definition}[\citeauthor{cattiaux2014semi}]
A probability measure $\mu(dx) = \exp(-U(x))dx$ is $\kappa$-semi-log-concave for some $\kappa \in \mathbb{R}$ if its support $\Omega \subseteq \mathbb{R}^d$ is convex and $U \in C^2(\Omega)$ satisfies
\begin{equation*}
    \nabla^2 U(x) \succeq \kappa I_d, \quad \forall x \in \Omega.
\end{equation*}
\end{definition}

\begin{definition}[\citeauthor{eldan2018regularization}]
A probability measure $\mu(dx) = \exp(-U(x))dx$ is $\beta$-semi-log-convex for some $\beta > 0$ if its support $\Omega \subseteq \mathbb{R}^d$ is convex and $U \in C^2(\Omega)$ satisfies
\begin{equation*}
    \nabla^2 U(x) \preceq \beta I_d, \quad \forall x \in \Omega.
\end{equation*}
\end{definition}

\begin{assumption}[\citeauthor{dai2023lipschitz} Semi-log-convexity]\label{Semi-log-convexity}
$\nu$ is $\beta$-semi-log-convex for some $\beta>0$. 
\end{assumption}
\begin{assumption}[\citeauthor{dai2023lipschitz} Structural condition]\label{ass:struct}
Set $D:=\tfrac{1}{\sqrt{2}}\operatorname{diam}(\operatorname{supp}(\nu))$. One of the following holds:
\begin{enumerate}
\item $\nu$ is $\kappa$-semi-log-concave for some $\kappa>0$ with $D\in(0,\infty]$;
\item $\nu$ is $\kappa$-semi-log-concave for some $\kappa\le 0$ with $D\in(0,\infty)$;
\item $\nu = \mathcal N(0,\sigma^2I_d)\ast \rho$ with a probability $\rho$ supported on a ball of radius $R$ in $R^d$.
\end{enumerate}
\end{assumption}

\section{Proofs}
In this part, we provide the detailed proofs of the theories in this paper.
\subsection{Proof of Theorem~\ref{444}}\label{AppenA.1}
\begin{proof}
Recall Monge's Optimal Transport (OT) problem~\citep{monge1781memoire}, which seeks a map $T$ pushing $\mu$ to $\nu$ that minimizes $\displaystyle \int c(x,T(x)) \, d\mu(x)$ subject to $T_\sharp \mu = \nu$, inducing the $p$-Wasserstein distance
\begin{equation*}
\mathcal{W}_p(\mu,\nu) := \Big( \inf_{\gamma \in \Gamma(\mu,\nu)} \mathbb{E}_{(x,y)\sim \gamma}[\|x-y\|^p] \Big)^{1/p},
\end{equation*}
where $\Gamma(\mu,\nu)$ denotes the set of couplings with marginals $\mu$ and $\nu$.

Applying this to the push-forward measures $\overset{\leftarrow}P_{t_{n+1}}$ and $\overset{\leftarrow}Q_{t_{n+1}}$ (cf. definition~\eqref{push}) at $[t_n,t_{n+1}]$ gives
\begin{equation*}
\begin{aligned}
&\mathcal{W}_2(\overset{\leftarrow}P_{t_{n+1}}, \overset{\leftarrow}Q_{t_{n+1}})\\
 \leq& \sqrt{\mathbb{E}_{(\overset{\leftarrow}{Y}_{t_{n}},\overset{\leftarrow}{X}_{t_{n}})~\sim \Gamma(\overset{\leftarrow}Q_{t_{n}}, \overset{\leftarrow}P_{t_{n}})}|\widetilde{T}_n(\overset{\leftarrow}{Y}_{t_{n}}) -T_n(\overset{\leftarrow}{X}_{t_{n}}) |^2}\\
 \leq& \sqrt{\mathbb{E}_{(\overset{\leftarrow}{Y}_{t_{n}},\overset{\leftarrow}{X}_{t_{n}})~\sim \Gamma(\overset{\leftarrow}Q_{t_{n}}, \overset{\leftarrow}P_{t_{n}})}|\widetilde{T}_{n}(\overset{\leftarrow}{Y}_{t_{n}})-\widetilde{T}_{n}(\overset{\leftarrow}{X}_{t_{n}})|^2}+\sqrt{\mathbb{E}_{\overset{\leftarrow}{X}_{t_{n}}~\sim \overset{\leftarrow}P_{t_{n}}}|\widetilde{T}_{n}(\overset{\leftarrow}{X}_{t_{n}})-T_{n}(\overset{\leftarrow}{X}_{t_{n}})|^2}\\
  \leq& \text{Lip}(\widetilde{T}_n)\sqrt{\mathbb{E}_{(\overset{\leftarrow}{Y}_{t_{n-1}},\overset{\leftarrow}{X}_{t_{n-1}})~\sim \Gamma(\overset{\leftarrow}Q_{t_{n-1}}, \overset{\leftarrow}P_{t_{n-1}})}|\widetilde{T}_{n-1}(\overset{\leftarrow}{Y}_{t_{n-1}})-\widetilde{T}_{n-1}(\overset{\leftarrow}{X}_{t_{n-1}})|^2}\\
  &+h\left(\Big(\widebar{K}\sqrt{M_0}+\frac{\widebar{K_1}}{\sqrt{1-t_n^2}}+\widebar{K_2}\Big)h+\epsilon\right),
\end{aligned}
\end{equation*}
where the last inequality uses Assumption~\ref{assump222} and Assumption~\ref{assump333}.
Then applying the discrete Gr\"{o}nwall inequality yields \eqref{eqn:statement of Lip theorem}.
\end{proof}

\subsection{Proof of Theorem~\ref{thm1}}\label{AppenA}
\begin{proof}
For simplicity, denote
\begin{equation}\label{Gg}
G(t,x,y):=\exp\left( -\frac{\left|K(t)(\sqrt{C})^{-1}x -y\right|^2_{B(t)}}{2}\right),\,g(t,x,y):=\partial_t \left(-\frac{\left|K(t)(\sqrt{C})^{-1}x -y\right|^2_{B(t)}}{2}\right).
\end{equation}
Under the Gaussian tail Assumption~\ref{assump2}, the score function of F\"{o}llmer flow can be calculated by
\begin{equation*}
\begin{aligned}
S(t, x):=C\nabla\log p_t 
=C\nabla \log \int_{\mathbb{R}^d} \Big( 2\pi\, \text{det} \big(B(t)\big) \Big)^{-\frac{d}{2}} G(t,x,y)\cdot \exp\left(-\frac{|x|^2_{\widebar{A}_{t}}}{2}\right)\exp \left( h(\sqrt{C}y) \right)\, \mathrm{d}y,
\end{aligned}\label{S_full}
\end{equation*}
where  $\widebar{A}_{t}=At^2+C(1-t^2)$, $K(t)=(A\widebar{A}_{t}^{-1})t$, $B(t)=(A\widebar{A}_{t}^{-1})(1-t^2)$.

First, we consider the modified score function over the time interval $(0,1]$, 
\begin{equation}\label{tildes}
\begin{aligned}
\widetilde{S}(t,x)&=S(t,x)+C\widebar{A}_{t}^{-1}x\\
&=C\nabla \log \int_{\mathbb{R}^d} \big( 2\pi B(t) \big)^{-\frac{d}{2}}G(t,x,y)\exp \left( h(\sqrt{C}y)\right) \, \mathrm{d}y\\
&=C\frac{\displaystyle\int_{\mathbb{R}^d}\nabla_xG(t,x,y)\exp \left( h(\sqrt{C}y)\right) \mathrm{d}y}{\displaystyle\int_{\mathbb{R}^d}G(t,x,y)\exp \left( h(\sqrt{C}y)\right) \mathrm{d}y}\\
&=-\frac{{K(t)\sqrt{C}}\displaystyle\int_{\mathbb{R}^d}\nabla_yG(t,x,y)\exp \left( h(\sqrt{C}y)\right) \mathrm{d}y}{\displaystyle\int_{\mathbb{R}^d}G(t,x,y)\exp \left( h(\sqrt{C}y)\right) \mathrm{d}y}\\
&=\frac{{K(t)\sqrt{C}}\displaystyle\int_{\mathbb{R}^d}G(t,x,y)\nabla_yh(\sqrt{C}y)\exp \left( h(\sqrt{C}y)\right)\mathrm{d}y}{\displaystyle\int_{\mathbb{R}^d}G(t,x,y)\exp \left( h(\sqrt{C}y)\right)\mathrm{d}y},
\end{aligned}
\end{equation}
Here, the last equal sign is derived from integration by parts.

Since $G(t,x,y)\exp \left( h(\sqrt{C}y)\right) \geq 0$, 
\begin{equation*}
|\widetilde{S}(t,x)|\leq |K(t)\sqrt{C}\nabla h(\sqrt{C}x)|.
\end{equation*}
Let $K=\sup_{0\leq t\leq 1}{|\frac{1}{t}K(t)|}=\sup_{0\leq t\leq 1}{|A \widebar{A}_{t}^{-1}|}\leq\max\{1, \|AC^{-1}\|\}$, we have
\begin{equation*}
|\widetilde{S}(t,\cdot)|\leq K\|C\|^{1/2}|\sqrt{C}\nabla h|_\infty t= K_0t.
\end{equation*}
Taking the derivative twice along that direction and using the same method as above, we get :
\begin{equation*}
\|\nabla\widetilde{S}(t,\cdot)\|_\infty \leq K(t)^2(\|C\nabla ^2 h\|_\infty+|\sqrt{C}\nabla h|^2_\infty)=K_1t^2.
\end{equation*}
where  $K_1:=K^2(\|C\nabla ^2 h\|_\infty+|\sqrt{C}\nabla h|^2_\infty)$. 

Define $K_2:=\sup_{0\leq t\leq1}\|\frac{1}{t^2}(I-C\widebar{A}_{t}^{-1})\|=\sup_{0\leq t\leq1}\|(A-C)(At^2+C(1-t^2))^{-1}\|$, then $\|(I-C\widebar{A}_{t}^{-1})\| \leq K_2t^2$. 

Recall definition of $V(t,x)$ in \eqref{follmer}, we have 
\begin{equation*}\label{lip_bound}
\begin{aligned}
    |V(t,x)|=\left|\frac{x+S(t,\cdot)}{t}\right|=\left|\frac{\widetilde{S}(t,\cdot)+(I-C\widebar{A}_{t}^{-1})x}{t}\right|\leq K_0+K_2t|x|,
\end{aligned} 
\end{equation*}
which implies the velocity field $|V(t,x)|$ remains locally uniformly bounded and grows at most linearly. 

Furthermore,
\begin{equation*}
\begin{aligned}
    \|\nabla V(t,x)\|_\infty=\left\|\nabla\left(\frac{\widetilde{S}(t,x)+(I-C\widebar{A}_{t}^{-1})x}{t}\right)\right\|_\infty\leq (K_1+K_2)t,
\end{aligned}
\end{equation*}

which yields a Lipschitz constant that is uniform over space and independent of the dimension, ensuring uniform equicontinuity.

Next, we give the properties of $V(t,x)$ with respect to time over $(0,1)$.
By taking \eqref{tildes}, we have
\begin{equation}\label{A1+A2}
\begin{aligned}
   \partial_tV(t,x)=&\partial_t\left(\frac{\widetilde{S}(t,\cdot)+(I-C\widebar{A}_{t}^{-1})x}{t}\right)\\    =&\partial_t\left(\frac{K(t)\sqrt{C}\displaystyle\int_{\mathbb{R}^d}G(t,x,y)\nabla_yh(\sqrt{C}y)\exp \left( h(\sqrt{C}y)\right)\mathrm{d}y}{t\displaystyle\int_{\mathbb{R}^d}G(t,x,y)\exp \left( h(\sqrt{C}y)\right)\mathrm{d}y}\right)+\partial_t\left(\frac{(I-C\widebar{A}_{t}^{-1})x}{t}\right)\\
    :=&A_1+A_2
\end{aligned} 
\end{equation}

We begin by calculating the first part,
\begin{equation}\label{A_1}
\begin{aligned}
A_1=&\partial_t\left(\frac{{{(A \widebar{A}_{t}^{-1})}\sqrt{C}}\displaystyle\int_{\mathbb{R}^d}G(t,x,y)\nabla_yh(\sqrt{C}y)\exp \left( h(\sqrt{C}y)\right)\mathrm{d}y}{\displaystyle\int_{\mathbb{R}^d}G(t,x,y)\exp \left( h(\sqrt{C}y)\right)\mathrm{d}y}\right)\\
=&\frac{\partial_t{(A \widebar{A}_{t}^{-1})}\sqrt{C}\displaystyle\int_{\mathbb{R}^d}G(t,x,y)\nabla_yh(\sqrt{C}y)\exp \left( h(\sqrt{C}y) \right)\mathrm{d}y}{\displaystyle\int_{\mathbb{R}^d}G(t,x,y)\exp \left( h(\sqrt{C}y)\right)\mathrm{d}y}\\
&+\frac{A \widebar{A}_{t}^{-1}\sqrt{C}\displaystyle\int_{\mathbb{R}^d} G(t,x,y)g(t,x,y)\exp \left( h(\sqrt{C}y)\right)\nabla_yh(\sqrt{C}y)dy}{\displaystyle\int_{\mathbb{R}^d}G(t,x,y)\exp \left( h(\sqrt{C}y)\right)\mathrm{d}y}\\
&-\frac{A \widebar{A}_{t}^{-1}\sqrt{C} \displaystyle\int_{\mathbb{R}^d} G(t,x,y)\exp \left( h(\sqrt{C}y)\right)\nabla_yh(\sqrt{C}y)dy}{\displaystyle\int_{\mathbb{R}^d}G(t,x,y)\exp \left( h(\sqrt{C}y)\right)\mathrm{d}y}\\
&\quad\cdot\frac{\displaystyle\int_{\mathbb{R}^d} G(t,x,y) g(t,x,y)\exp \left( h(\sqrt{C}y)\right)dy}{\displaystyle\int_{\mathbb{R}^d}G(t,x,y)\exp \left( h(\sqrt{C}y)\right)\mathrm{d}y}\\
\leq& \frac{{\partial_t{(A \widebar{A}_{t}^{-1})}\sqrt{C}}\displaystyle\int_{\mathbb{R}^d}G(t,x,y)\nabla_yh(\sqrt{C}y)\exp \left( h(\sqrt{C}y)\right)\mathrm{d}y}{\displaystyle\int_{\mathbb{R}^d}G(t,x,y)\exp \left( h(\sqrt{C}y)\right)\mathrm{d}y}\\
&+A \widebar{A}_{t}^{-1}\sqrt{C} \,\Cov_{p_t{(y|x)}}\left(g(t,x,y),\nabla_y h(\sqrt{C}y)\right)\\
\leq& \frac{{\partial_t{(A \widebar{A}_{t}^{-1})}\sqrt{C}}\displaystyle\int_{\mathbb{R}^d}G(t,x,y)\nabla_yh(\sqrt{C}y)\exp \left( h(\sqrt{C}y)\right)\mathrm{d}y}{\displaystyle\int_{\mathbb{R}^d}G(t,x,y)\exp \left( h(\sqrt{C}y)\right)\mathrm{d}y}\\
&+A \widebar{A}_{t}^{-1}\sqrt{C} \sqrt{\Var\left( g(t,x,y)\right)}\sqrt{\Var\left(\nabla_y h(\sqrt{C}y)\right)}\\
:= &  I_1+I_2,
\end{aligned} 
\end{equation}
where
\begin{equation*}
p_t{(y|x)=\frac{G(t,x,y)\exp \left( h(\sqrt{C}y)\right)}{\displaystyle\int_{\mathbb{R}^d}G(t,x,y)\exp \left( h(\sqrt{C}y)\right)\mathrm{d}y}}.
\end{equation*}
Define $K_3:=\sup_{0\leq t\leq 1}\|\frac{A\partial_t(\widebar{A}_{t}^{-1})}{t}\|=\sup_{0\leq t\leq 1}\|2A(A-C)((At^2+C(1-t^2))^2)^{-1}\|$, we obtain the first term of $A_1$, 
\begin{equation}\label{I_1}
\begin{aligned}
|I_1|&\leq \left\|2A(A-C)\left(\big(At^2+C(1-t^2)\big)^2\right)^{-1}t\right\||\sqrt{C}||\sqrt{C}\nabla h|_{\infty}\\
&\leq |\sqrt{C}||\sqrt{C}\nabla h|_{\infty}K_3t\\
&\leq K_0K_3K^{-1}.
\end{aligned} 
\end{equation}
For the second term $I_2$ in \eqref{A_1}, the analysis is carried out separately for the following two cases:
\paragraph{Case $\mathrm{I}$ - $\|B(t)\|\leq \frac{1}{2\|C\nabla^2 h\|_{\infty}}$:} Then
     $p_t{(y|x)}$ is a log-concave measure as $-\nabla^2\log p_t{(y|x)}=B(t)^{-1}-C\nabla^2 h\succ 0$. 
Then, by Brascamp-Lieb inequality, we have
\begin{equation}\label{BL}
\begin{aligned}
 \sqrt{\Var\left( g(t,x,y)\right)}\leq \sqrt{\mathbb{E}_{p_t{(y|x)}}\left|\nabla^T g(t,x,y)\Big(-\nabla^2\log p_t{(y|x)}\Big)^{-1} \nabla g(t,x,y)\right|},
 \end{aligned}
\end{equation}
here $g(t,x,y)$ is defined in \eqref{Gg} and its gradient is given by
\begin{equation}\label{g}
\begin{aligned}
\nabla g(t,x,y)= \partial_tK(t)B(t)^{-1}(\sqrt{C})^{-1}x-\partial_tB(t)B(t)^{-2}\left(K(t)(\sqrt{C})^{-1}x -y\right).
\end{aligned}
\end{equation}
Recall that $K(t)=(A\widebar{A}_{t}^{-1})t$, $B(t)=(A\widebar{A}_{t}^{-1})(1-t^2)$.

Substituting \eqref{g} into \eqref{BL} yields
\begin{equation*}
\begin{aligned}
&\text{RHS of \eqref{BL}} \\
=&\Bigg(\int_{\mathbb{R}^d}\Big|\left(\partial_tK(t)B(t)^{-1}(\sqrt{C})^{-1}x-\partial_tB(t)B(t)^{-2}\big(K(t)(\sqrt{C})^{-1}x -y\big)\right)^T\left(B(t)^{-1}-C\nabla^2 h\right)^{-1}\\
&\quad\cdot\left(\partial_tK(t)B(t)^{-1}(\sqrt{C})^{-1}x-\partial_tB(t)B(t)^{-2}\big(K(t)(\sqrt{C})^{-1}x -y\big)\right)p_t(y|x)\Big|\mathrm{d}y\Bigg)^{\frac12}\\
\leq&\Bigg(\int_{\mathbb{R}^d}\Big|\left(\partial_tK(t)B(t)^{-1}C^{-1}x\right)^T\left(I-B(t)C\nabla^2 h\right)^{-1}\partial_tK(t)xp_t(y|x)\Big|\mathrm{d}y\\
&+\int_{\mathbb{R}^d}\Bigg|-\left(2\partial_tK(t)B(t)^{-1}(\sqrt{C})^{-1}\partial_tB(t)x\right)^T\left(I-B(t)C\nabla^2 h\right)^{-1}\left(\big(K(t)(\sqrt{C})^{-1}x-y\big)B(t)^{-1}\right)p_t(y|x)\Bigg|\mathrm{d}y\\
&+\int_{\mathbb{R}^d}\Bigg|\left(K(t)(\sqrt{C})^{-1}x-y\right)^TB(t)^{-2}\big(\partial_tB(t)\big)^2(I-B(t)C\nabla^2 h)^{-1}B(t)^{-1} \left(K(t)(\sqrt{C})^{-1}x-y\right)p_t(y|x)\Bigg|\mathrm{d}y\Bigg)^{\frac12}\\
\end{aligned}
\end{equation*}
Applying the triangle inequality $|a+b+c|^\frac{1}{2}\leq |a|^\frac{1}{2}+|b|^\frac{1}{2}+|c|^\frac{1}{2}$, we obtain:
\begin{equation*}
\begin{aligned}
&\text{RHS of \eqref{BL}}\\
\leq & \Bigg(\int_{\mathbb{R}^d}\Big|\left(\partial_tK(t)B(t)^{-1}C^{-1}x\right)^T\left(I-B(t)C\nabla^2 h\right)^{-1}\partial_tK(t)xp_t(y|x)\Big|\mathrm{d}y\Bigg)^{\frac12}\\
&+\Bigg(\int_{\mathbb{R}^d}\Big|-\left(2\partial_tK(t)B(t)^{-1}(\sqrt{C})^{-1}\partial_tB(t)x\right)^T\left(I-B(t)C\nabla^2 h\right)^{-1}\left(\big(K(t)(\sqrt{C})^{-1}x-y\big)B(t)^{-1}\right)p_t(y|x)\Big|\mathrm{d}y\Bigg)^{\frac12}\\
&+\Bigg(\int_{\mathbb{R}^d}\Big|\left(K(t)(\sqrt{C})^{-1}x-y\right)^TB(t)^{-2}\big(\partial_tB(t)\big)^2(I-B(t)C\nabla^2 h)^{-1}B(t)^{-1} \left(K(t)(\sqrt{C})^{-1}x-y\right)p_t(y|x)\Big|\mathrm{d}y \Bigg)^{\frac12}\\
:=&R_1+R_2+R_3,
\end{aligned}
\end{equation*}
By a straightforward estimation,
\begin{equation*}
\begin{aligned}
R_1\leq |\partial_tK(t)|\|\sqrt{C}^{-1}\|\sqrt{\|B(t)\|^{-1}}\sqrt{(1-\|B(t)\|\|C\nabla^2 h\|)^{-1}}|x|.
\end{aligned}
\end{equation*}
It follows from integration by parts that
\begin{equation*}
\begin{aligned}
 -\int_{\mathbb{R}^d}\left(K(t)(\sqrt{C})^{-1}x-y)B(t)^{-1}\right)p_t(y|x)\mathrm{d}y=-\frac{ \int_{\mathbb{R}^d}\nabla_y G(t,x,y)\exp\left(h(\sqrt{C}y)\right)\mathrm{d}y}{\displaystyle\int_{\mathbb{R}^d}G(t,x,y)\exp \left( h(\sqrt{C}y)\right)\mathrm{d}y}\leq |\sqrt{C}\nabla h|_\infty,
\end{aligned}
\end{equation*}
which immediately implies
\begin{equation*}
\begin{aligned}
R_2&=\Bigg(\frac{\int_{\mathbb{R}^d}\Big|(-2\partial_tK(t)(\sqrt{C})^{-1}xB(t)^{-1}\partial_tB(t)\left(I-B(t)C\nabla^2 h\right)^{-1}\nabla_y G(t,x,y)\exp(h(\sqrt{C}y)\Big|\mathrm{d}y}{\Big|\displaystyle\int_{\mathbb{R}^d}G(t,x,y)\exp \left( h(\sqrt{C}y)\right)\mathrm{d}y\Big|}\Bigg)^\frac{1}{2}\\
&\leq \sqrt{2\|\partial_tK(t)\|\partial_tB(t)\||\nabla h|_\infty}\sqrt{\|B(t)\|^{-1}}\sqrt{(1-\|B(t)\|\|C\nabla^2 h\|)^{-1}}\sqrt{|x|}.
\end{aligned}
\end{equation*}
Similarly, applying integration by parts twice, we obtain
\begin{equation*}
\begin{aligned}
R_3\leq& \|\partial_tB(t)\| \sqrt{\|C(|\nabla h|_\infty^2+\|\nabla^2 h\|_\infty)\|} \sqrt{\|B(t)^{-1}\|}\sqrt{(1-\|B(t)\|\|C\nabla^2h\|_{\infty})^{-1}}\\
&+\|\partial_tB(t)\|\|B(t)^{-1}\|\sqrt{(1-\|B(t)\|\|C\nabla^2h\|_{\infty})^{-1}}.
\end{aligned}
\end{equation*}

Then we have
\begin{equation*}
\begin{aligned}
&\text{RHS of \eqref{BL}} \\
\leq&\sqrt{|\partial_tK(t)x|^2\|C\|^{-1}+2\|\partial_tK(t)\|\|\partial_tB(t)\||\nabla h|_{\infty}|x|+\|\partial_tB(t)\|^2(\|C\nabla^2 h\|_{\infty}+|\sqrt{C}\nabla h|_{\infty}^2)}\\
&\quad\cdot\sqrt{(\|B(t)\|\big(1-\|B(t)\|\|C\nabla^2h\|_{\infty})\big)^{-1}}+
\|\partial_tB(t)\|\sqrt{\big(\|B(t)\|^2(1-\|B(t)\|\|C\nabla^2h\|_{\infty})\big)^{-1}}.
\end{aligned}
\end{equation*}
Combining with
\begin{equation*}
\begin{aligned}
\sqrt{\Var\left(\nabla_y h(\sqrt{C}y)\right)}\leq \|C\nabla^2 h\|_{\infty}\|\|\sqrt{B(t)}\|\|\sqrt{(1-\|B(t)\|\|C\nabla^2h\|_{\infty})^{-1}},
\end{aligned}
\end{equation*}
we derive that
\begin{equation*}
\begin{aligned}
|I_2|\leq& K\|C\nabla^2 h\|_{\infty}(1-\|B(t)\|\|C\nabla^2h\|_{\infty})^{-1}\Big(|\partial_tK(t)x|+\sqrt{2\|\partial_tK(t)\|\| \partial_tB(t)\||C\nabla h|_{\infty}|x|}\\
&+\|\partial_tB(t)\|\sqrt{ \|C\nabla^2 h\|_{\infty}
+|\sqrt{C}\nabla h|_{\infty}^2}\Big)\\
&+ \|\sqrt{B(t)^{-1}}\|K\|\sqrt{C}\|\|\partial_tB(t)\|\|C\nabla^2h\|_{\infty}{(1-\|B(t)\|\|C\nabla^2h\|_{\infty})^{-1}}\\
\leq& 2K\|C\nabla^2 h\|_{\infty}\big(|\partial_tK(t)x|+\sqrt{2 \|\partial_tK(t)\|\|\partial_tB(t)\||\nabla h|_{\infty}|x|}\\
&+\|\partial_tB(t)\|\sqrt{\|C\nabla^2 h\|_{\infty}+|\sqrt{C}\nabla h|_{\infty}^2}+ 2K\|\sqrt{C}\| \|\partial_tB(t)\|\|C\nabla^2h\|_{\infty}\|\sqrt{B(t)^{-1}}\|\\
\leq&  2K_1C_1 |x|+2\sqrt{2}K_1\sqrt{K_0}\sqrt{C_1}\sqrt{C_2}\sqrt{|x|}+ 2K_1^{\frac32}C_2+ 2K_1\|\sqrt{C}\|C_2\sqrt{\|B(t)\|^{-1}}\\
\leq&  3K_1C_1|x|+\frac{2K^{-1}K_1\|\sqrt{C}\|C_2 }{\sqrt{1-t^2}}+C_3,
\end{aligned} 
\end{equation*}
where the second inequality use the fact $(1-\|B(t)\|\|C\nabla^2h\|_{\infty})^{-1}\leq 2$ under Case $\mathrm{I}$, while the last inequality is obtained using Young’s inequality, $C_1, C_2, C_3$ are dimension-free constants defined in table~\ref{tab:coe}.

\paragraph{Case $\mathrm{II}$ - $\|B(t)\|> \frac{1}{2\|C\nabla^2 h\|}$:} According to the definition of variance
\begin{equation*}
\begin{aligned}
\Var(g(t,x,y))=&\int_{\mathbb{R}^d}\big(g(t,x,y)\big)^2p_t{(y|x)}\mathrm{d}y-\left(\int_{\mathbb{R}^d}g(t,x,y)p_t{(y|x)}\mathrm{d}y\right)^2\\
\leq&\int_{\mathbb{R}^d}\big(g(t,x,y)\big)^2p_t{(y|x)}\mathrm{d}y,
\end{aligned}
\end{equation*}
We have the following estimate, analogous to the right-hand side of \eqref{BL}:
 \begin{equation*}
\begin{aligned}
 &\sqrt{\Var\left( g(t,x,y)\right)}\\
     \leq& \frac{\|\sqrt{B(t)^{-1}}\|}{2}\Big(2\|\sqrt{C}^{-1}\||\partial_tK(t)x|+2\sqrt{3}\sqrt{\|\partial_tK(t)\|\| \partial_tB(t)\||\nabla h|_{\infty}|x|}\\
     &+\sqrt{6}\|\partial_tB(t)\|\sqrt{\|C\nabla^2 h\|_{\infty}+|\sqrt{C}\nabla h|_{\infty}^2}\Big)\\
     &+|\partial_tK(t)x|\sqrt{\|\nabla^2 h\|_{\infty}+|\nabla h|_{\infty}^2}+\sqrt{\|\partial_tK(t)\|\|\partial_tB(t)\||\nabla h|_{\infty}(\|C\nabla^2 h\|_{\infty}+|\sqrt{C}\nabla h|_{\infty}^2)|x|}\\
 &+\frac{1}{2}\|\partial_tB(t)\|\sqrt{|\sqrt{C}\nabla h|_{\infty}(\|C\nabla^2 h\|_{\infty}+|\sqrt{C}\nabla h|_{\infty}^2)}+\frac{\sqrt{3}}{2}\|\partial_tB(t)\|\|B(t)^{-1}\|\\
 \leq&\sqrt{\frac{\|\nabla^2 h\|_{\infty}}{2}}\Big(2|\partial_tK(t)x|+2\sqrt{3}\sqrt{\|\partial_tK(t)\|\|\partial_tB(t)\||C\nabla h|_{\infty}|x|}\\
 &+\sqrt{6}\|\sqrt{C}\|\|\partial_tB(t)\|\sqrt{\|C\nabla^2 h\|_{\infty}+|\sqrt{C}\nabla h|_{\infty}^2}\Big)\\
&+|\partial_tK(t)x|\sqrt{\|\nabla^2 h\|_{\infty}+|\nabla h|_{\infty}^2}+\sqrt{\|\partial_tK(t)\|\|\partial_tB(t)\||\nabla h|_{\infty}(\|C\nabla^2 h\|_{\infty}+|\sqrt{C}\nabla h|_{\infty}^2)|x|}\\
&+ \frac{1}{2}\|\partial_tB(t)\|\sqrt{|\sqrt{C}\nabla h|_{\infty}(\|C\nabla^2 h\|_{\infty}+|\sqrt{C}\nabla h|_{\infty}^2)}+\sqrt{3}\|\partial_tB(t)\|\|C\nabla^2 h\|_{\infty}.
\end{aligned}
\end{equation*} 
Together with
\begin{equation*}
\begin{aligned}
\sqrt{\Var\left(\nabla_y h(\sqrt{C}y)\right)}\leq | \sqrt{C}\nabla h|_{\infty},
\end{aligned}
\end{equation*} 
we obtain
\begin{equation*}
\begin{aligned}
|I_2|\leq& K|\sqrt{C}\nabla h|_{\infty}
\Bigg(|\partial_tK(t)x|\big(\sqrt{2\|C\nabla^2 h\|_{\infty}}+\sqrt{(\|C\nabla^2 h\|_{\infty}+|\sqrt{C}\nabla h|_{\infty}^2)}\big)\\
&+\sqrt{6}\sqrt{\|C\nabla^2 h\|_{\infty}}\sqrt{\|\partial_tK(t)\|\|\partial_tB(t)\||C\nabla h|_{\infty}|x|}\\
&+\sqrt{\|\partial_tK(t)\|\|\partial_tB(t)\|\sqrt{C}\nabla h|_{\infty}(\|C\nabla^2 h\|_{\infty}+|\sqrt{C}\nabla h|_{\infty}^2)|x|}\\
&+\sqrt{3}\|\sqrt{C}\|\sqrt{\|C\nabla^2 h\|_{\infty}}\|\|\partial_tB(t)\|\|\sqrt{\|C\nabla^2 h\|_{\infty}+|\sqrt{C}\nabla h|_{\infty}^2}\\
&+\frac12\|\sqrt{C}\|\|\partial_tB(t)\|\sqrt{|\sqrt{C}\nabla h|_{\infty}(\|C\nabla^2 h\|_{\infty}+|\sqrt{C}\nabla h|_{\infty}^2)}+\sqrt{3}\|\sqrt{C}\|\|\partial_tB(t)\|\|C\nabla^2 h\|_{\infty}\Bigg)\\
\leq& (1+\sqrt{2})K_0\sqrt{K_1}C_1|x|+(1+\sqrt{6})K_0^{\frac32}\sqrt{K_1}\sqrt{C_1}\sqrt{C_2}|\sqrt{x}|\\
&+K_0C_2(2\sqrt{3}K_1+\frac12\sqrt{K_0}\sqrt{K_1})\\
\leq& 2(1+\sqrt{2})K_0\sqrt{K_1}C_1|x|+ C_4,
\end{aligned} 
\end{equation*}
where $C_4$ defined in table~\ref{tab:coe} is also constant indepent of dimension. 

Combining the above two cases, we obtain
\begin{equation}\label{I_2}
\begin{aligned}
|I_2|\leq\max\{3K_1C_1,2(1+\sqrt{2})K_0\sqrt{K_1}C_1\}|x|+\frac{2K^{-1}K_1\|\sqrt{C}\|C_2 }{\sqrt{1-t^2}}+ \max\{C_3, C_4\}.
\end{aligned} 
\end{equation}
Using \eqref{A_1}, \eqref{I_1} and \eqref{I_2},  we derive 
\begin{equation*}
\begin{aligned}
|A_1|\leq&|I_1|+|I_2|\\
\leq&K_0K_3K^{-1}+\max\{3K_1C_1,2(1+\sqrt{2})K_0\sqrt{K_1}C_1\}|x|+\frac{2K_1K^{-1}\|\sqrt{C}\|C_2 }{\sqrt{1-t^2}}+ \max\{C_3, C_4\}.
\end{aligned} 
\end{equation*}

The next step is to calculate the absolute value of the second term of \eqref {A1+A2}, i.e.
\begin{equation*}
\begin{aligned}
|A_2|&=\left|\frac{-C\partial_t(\widebar{A}_{t}^{-1})xt+(I-C\widebar{A}_{t}^{-1})x}{t^2}\right|\\
&\leq \|2C(A-C)\big((At^2+C(1-t^2))^2\big)^{-1}\||x|+K_2|x|\\
&\leq (K_4+K_2)|x|.
\end{aligned} 
\end{equation*}
where 
$ K_4:=\sup_{0\leq t\leq 1}\|\frac{C\partial_t(\widebar{A}_{t}^{-1})}{t}\|=\sup_{0\leq t\leq 1}\|2C(A-C)\big((At^2+C(1-t^2))^2\big)^{-1}\|$.

It then follows from \eqref{A1+A2} that
\begin{equation*}\label{proof12}
\begin{aligned}
|\partial_tV(t,x)|\leq |A_1|+|A_2|\leq  K_5|x|+\frac{K_6}{\sqrt{1-t^2}}+ K_7.
\end{aligned}
\end{equation*}
where  $K_5= \max\{3K_1C_1,2(1+\sqrt{2})K_0\sqrt{K_1}C_1\}+K_2+K_4$, $K_6=2K_1K^{-1}\|\sqrt{C}\|C_2$, $K_7=\max\{C_3, C_4\}+K_0K_3{K}^{-1}.$

When $t=0$, by \eqref{V_0} in the subsequently stated well-posedness Lemma~\ref{thm2}, we have
\begin{equation*}\label{V0}
     |V(0,x)|=\left| \sqrt{C}  \mathbb{E}_{\widebar{p}_0}[X] \right|
\lesssim
K \|C\|^{1/2} \left| \sqrt{C} \nabla h \right|_\infty=K_0.
\end{equation*}
Similarly, $\|\nabla V(0,\cdot)\|_\infty$ is also bounded by $K_1+K_2$. Then we conclude that the velocity field $V(t,x)$
satisfied the condition of Theorem~\ref{thm1} for all $t \in [0,1]$. 

For clarity, the coefficients are summarized in Table~\ref{tab:coe}.
\begin{table}[ht]
\caption{Explicit for coefficients in Thm.~\ref{thm1} and Thm.~\ref{thm4.2}.}
\label{tab:coe}
\begin{center}
\begin{tabular}{lcc}
\multicolumn{1}{c}{\bf Coefficient}  &\multicolumn{1}{c}{\bf Explicit expressions}  
\\ \hline \\
$\widebar{A}_{t}$ &  $At^2+C(1-t^2)$\\
$K(t)$ & $(A\widebar{A}_{t}^{-1})t$\\
$B(t)$ &  $(A\widebar{A}_{t}^{-1})(1-t^2)$\\
$K$ &$\sup_{0\leq t\leq 1}{|A \widebar{A}_{t}^{-1}|}\leq\max\{1, \|AC^{-1}\|\}$\\
$K_0$& $K\|C\|^{1/2}|\sqrt{C}\nabla h|_\infty$\\
$K_1$& $K^2(\|C\nabla ^2 h\|_\infty+|\sqrt{C}\nabla h|^2_\infty)$\\
$K_2$& $\sup_{0\leq t\leq 1}\|\frac{1}{t^2}(I-C\widebar{A}_{t}^{-1})\|=\sup_{0\leq t\leq 1}\|(A-C)(At^2+C(1-t^2))^{-1}\|$\\
$K_3$& $\sup_{0\leq t\leq 1}\|\frac{A\partial_t(\widebar{A}_{t}^{-1})}{t}\|=\sup_{0\leq t\leq 1}\|2A(A-C)\big((At^2+C(1-t^2))^2\big)^{-1}\|$\\
$K_4$& $\sup_{0\leq t\leq 1}\|\frac{C\partial_t(\widebar{A}_{t}^{-1})}{t}\|=\sup_{0\leq t\leq 1}\|2C(A-C)\big((At^2+C(1-t^2))^2\big)^{-1}\|$\\
$K_5$& $\max\{3K_1C_1,2(1+\sqrt{2})K_0\sqrt{K_1}C_1\}+K_2+K_4$\\
$K_6$& $2K_1{K}^{-1}\|C\|^{\frac12}C_2$\\
$K_7$& $\max\{C_3, C_4\}+K_0K_3{K}^{-1}$\\
$K_9$& $\frac{1}{4}K_6\pi+K_7$\\
$C_1$& $\max_{t\in [\sqrt{1-\frac{1}{2\|C\|\|\nabla^2 h\|_{\infty}}},1)}\{\partial_tK(t)\}=\max\{\frac{\|A\|(\|(A-C)(1-\frac{1}{2\|C\|\|\nabla^2 h\|_{\infty}})-C\|)}{(\|C\|+\|(A-C)\|(1-\frac{1}{2\|C\|\|\nabla^2 h\|_{\infty}}))^2}, \frac{\|A-2C\|}{\|A\|},\frac{\|A\|}{8\|C\|}\}$\\
$C_2$& $\max_{t\in [\sqrt{1-\frac{1}{2\|C\|\|\nabla^2 h\|_{\infty}}},1)}\{\partial_tB(t)\}=\max\{2,\frac{2\|A\|^2\sqrt{1-\frac{1}{2\|C\|\|\nabla^2 h\|_{\infty}}}}{(\|C\|+\|(A-C)\|(1-\frac{1}{2\|C\|\|\nabla^2 h\|_{\infty}}))^2}, \frac{9A^2}{8\|C\|^2}\sqrt{\frac{\|C\|}{3\|A-C\|}}\}$\\
$C_3$& $2K_1C_2(K_1^{\frac12}+K_0)$\\
$C_4$& $2\sqrt{3}K_0K_1C_2+\frac12K_0^{\frac32}K_1^{\frac12}C_2+\frac{(1+\sqrt{6})^2}{4(1+\sqrt{2})}K_0^2K_1^{\frac12}C_2$\\
\end{tabular}
\end{center}
\end{table}
In corollary~\ref{corollary2}, we take $C = I_d$, $A = (1 - (1-\delta)^2) I_d$, $\widebar{A}_t = (1 - t^2) I_d$. According to \eqref{mani}, we deduce the explicit for coefficients $K_0^{*}, K_1^{*}, K_2^{*}, K_3^{*}, K_4^{*}, K_5^{*}, K_6^{*}, K_7^{*}, K_9^*$ in Table~\ref{tab:coemani}.
\begin{table}[ht]
\caption{Explicit for coefficients in Cor.~\ref{corollary2}, and Thm.~\ref{mainfold}.}
\label{tab:coemani}
\begin{center}
\begin{tabular}{lcc}
\multicolumn{1}{c}{\bf Coefficient}  &\multicolumn{1}{c}{\bf Explicit expressions}  
\\ \hline \\
$\widebar{A}_{t}$ &  $(1 - t^2) I_d$\\
$K_0^{*}$& $\frac{R}{1 - (1-\delta)^2}$\\
$K_1^{*}$& $3 \left( \frac{R}{1 -(1-\delta)^2} \right)^2$\\
$K_2^{*}$& $\sup_{t\leq 1-\delta}\|\frac{1}{t^2}(I-\widebar{A}_{t}^{-1})\|=\frac{1}{1-(1-\delta)^2}$\\
$K_3^{*}$& $\frac{2}{1-(1-\delta)^2}$\\
$K_4^{*}$& $\frac{2}{(1-(1-\delta)^2)^2}$\\
$K_5^{*}$& $\frac{9C_1^*R^2}{(1 -(1-\delta)^2)^2}+\frac{2}{(1-(1-\delta)^2)^2}+\frac{1}{1-(1-\delta)^2}$\\
$K_6^{*}$& $\frac{6C_2^*R^2}{(1 -(1-\delta)^2)^{2}}$\\
$K_7^{*}$& $6 (\sqrt{3}+1)  \left( \frac{R}{1-(1-\delta)^2} \right)^3 C_2^*+\frac{2R}{(1 -(1-\delta)^2)^2}$\\
$K_9^{*}$& $\frac{3\pi}{2}\,\frac{C_{2}^{*}R^{2}}{(1-(1-\delta)^{2})^{2}}
+ 6(\sqrt{3}+1)\,\frac{C_{2}^{*}R^{3}}{(1-(1-\delta)^{2})^{3}}
+ \frac{2R}{(1-(1-\delta)^{2})^{2}}.$\\
$C_1^*$&  $\max\{\frac{(1 -(1-\delta)^2)\|(1-\delta)^2(1-\frac{1}{2\|\nabla^2 h\|_{\infty}})-1\|}{\|1+(1-\delta)^2(1-\frac{1}{2\|\nabla^2 h\|_{\infty}})\|^2}, \frac{\|-(1-\delta)^2-1\|}{\|1 -(1-\delta)^2\|},\frac{\|1 -(1-\delta)^2\|}{8}\}$,\\
$C_2^*$&  $\max\{2,\frac{2(1 -(1-\delta)^2)^2(1-\frac{1}{2\|\nabla^2 h\|_{\infty}})^{\frac12}}{\|1+(1-\delta)^2(1-\frac{1}{2\|\nabla^2 h\|_{\infty}})\|^2}, \frac{9(1 -(1-\delta)^2)^2}{8\sqrt{3}(1-\delta)}$,\\
\end{tabular}
\end{center}
\end{table}
\end{proof}

\subsection{Proof of Lemma~\ref{thm2}}\label{AppenC}
\begin{proof}
First, we prove the velocity field $V(t, x)$ is well-defined at $t=0$ (\eqref{V_0} in Lemma~\ref{thm2}), i.e.
\begin{equation*}
 V(0,x) :=\lim_{t \rightarrow 0} V(t, x) = \lim_{t \rightarrow 0} \frac{x + S(t, x)}{t} = \sqrt{C}\mathbb{E}_{\widebar{p}_0}[X].
\end{equation*}
Let $ t \to 0 $, then it yields
\begin{equation*}
\lim_{t \rightarrow 0} V(t, x) = \lim_{t \rightarrow 0} \partial_t S(t, x) = \lim_{t \rightarrow 0} \left\{ \frac{C\nabla (\partial_{t} p_{t}(x))}{p_{t}(x)}-\frac{\partial_t p_{t}(x)}{p_{t}(x)}S(t,x)\right\}.
\end{equation*}
By simple calculation, it holds that
\begin{equation*}
\begin{aligned}
&\frac{\nabla(\partial_tp_{t}(x))}{p_{t}(x)}\\
=&-\partial_t \,(\text{det} B(t))(2\,\text{det} B(t))^{-1}d\left(-{K(t)^TK(t)}\big(CB(t)\big)^{-1}-{\widebar{A}_{t}^{-1}}\right)(x^Tx)x\\
&-\partial_tB(t)\big(2B(t)\big)^{-1}K(t)(\sqrt{C}B(t))^{-1}\int_{\mathbb{R}^d} yp(1,y|t,x)\mathrm{d}y\\
&+\left(-K(t)\partial_tK(t)(CB(t))^{-1}-K(t)^TK(t)\partial_t\big(B(t)^{-1}\big)\big(2CB(t)\big)^{-1}-\frac{1}{2}\partial_t(\widebar{A}_{t}^{-1})\right)\\
&\quad\cdot\Bigg(2x+\Big(-K(t)^TK(t)(CB(t))^{-1}-{\widebar{A}_{t}^{-1}}\Big)(x^Tx)x\\
&\quad\quad+2K(t)\big(\sqrt{C}B(t)\big)^{-1}\int_{\mathbb{R}^d} (x^Ty)xp(1,y|t,x)\mathrm{d}y\Bigg)
\end{aligned}
\end{equation*}
\begin{equation*}
\begin{aligned}
&+\left(\partial_tK(t)\big(\sqrt{C}B(t)\big)^{-1}+K(t)\partial_t(B(t)^{-1}\big)(\sqrt{C})^{-1}\right)\\
&\quad\cdot\Bigg(\int_{\mathbb{R}^d}\left(y+K(t)(\sqrt{C}B(t))^{-1}(x^Ty)y\right)p(1,y|t,x)\mathrm{d}y\\
&\quad\quad+\int_{\mathbb{R}^d}\Big(-{K(t)^TK(t)}(CB(t))^{-1}-\widebar{A}_{t}^{-1}\Big)(x^Ty)xp(1,y|t,x)\mathrm{d}y\Bigg)\\
&-\frac{1}{2}\partial_t\big(B(t)^{-1}
\big)\int_{\mathbb{R}^d}\Bigg(\Big(-{K(t)^TK(t)}\big(CB(t)\big)^{-1}-\widebar{A}_{t}^{-1}\Big)(y^Ty)x\Bigg)p(1,y|t,x)\mathrm{d}y\\
&-\frac{1}{2}\partial_t\big(B(t)^{-1}\big)\int_{\mathbb{R}^d}K(t)\big(\sqrt{C}B(t)\big)^{-1}(y^Ty)yp(1,y|t,x)\mathrm{d}y,
\end{aligned}
\end{equation*}
while
\begin{equation*}
\begin{aligned}
\frac{\partial_tp_{t}(x)}{p_{t}(x)}=&-\partial_t \,\big(\text{det} B(t)\big)\big(2\,\text{det} B(t)\big)^{-1}d\\
&+x^T\left(-K(t)\partial_tK(t)\big(CB(t)\big)^{-1}-(2C)^{-1}K(t)K(t)^T\partial_t\big(B(t)^{-1}\big)-\frac{1}{2}\partial_t(\widebar{A}_{t}^{-1})\right)x\\
&+\int_{\mathbb{R}^d} x^T\left(\partial_tK(t)\big(\sqrt{C}B(t)\big)^{-1}+K(t)(\sqrt{C})^{-1}\partial_t\big(B(t)^{-1}\big)\right)yp(1,y|t,x) \, \mathrm{d}y\\
&-\int_{\mathbb{R}^d} y^T\frac{1}{2}\partial_t\big(B(t)^{-1}\big)yp(1,y|t,x) \, \mathrm{d}y.
\end{aligned}
\end{equation*}
From observing that
\begin{equation*}
\lim_{t \rightarrow 0} \partial_t \,(\text{det} B(t))=0,\, \lim_{t \rightarrow 0} \partial_t B(t)=0,\,\lim_{t \rightarrow 0} K(t)=0,\, \lim_{t \rightarrow 0} \partial_t (\widebar{A}_{t}^{-1})=0,
\end{equation*}
and Assumption~\ref{assump1}, which ensures
\begin{equation*}
\lim_{t \rightarrow 0}\int_{\mathbb{R}^d}|y|^3p(1,y|t,x)\mathrm{d}y=\int_{\mathbb{R}^d}|y|^3\lim_{t \rightarrow 0}p(1,y|t,x)\mathrm{d}y=\mathbb{E}_{\widebar{p}_0}[|X|^3]<+\infty,
\end{equation*}
we have
\begin{equation*}
\lim_{t \rightarrow 0} \frac{\partial_tp_{t}(x)}{p_{t}(x)} S(t, x) = -\frac{x^\top x}{\sqrt{C}} \mathbb{E}_{\widebar{p}_0}[X], \quad 
\lim_{t \rightarrow 0} \frac{C\nabla(\partial_tp_{t}(x))}{p_{t}(x)}= \sqrt{C}\mathbb{E}_{\widebar{p}_0}[X] -\frac{x^\top x}{\sqrt{C}}  \mathbb{E}_{\widebar{p}_0}[X].
\end{equation*}
Therefore, it yields $\lim_{t \rightarrow 0} V(t, x) = \sqrt{C}\mathbb{E}_{\widebar{p}_0}[X]$, which completes the proof of \eqref{V_0}.

Next, together with the regularity of the velocity field (Theorem~\ref{thm1}), the Arzelà-Ascoli theorem~\citep{arzela1895sulle} ensures the existence of a subsequence $\{{V(t,x)}_{n_k}\}_{k\in N}$ that converges locally uniformly to ${V(0,x)}$, thereby guaranteeing the well-posedness of the ODE~\eqref{follmer} on the entire time interval $t\in[0,1]$.
\end{proof}

\subsection{Proof of Corollary~\ref{thm333}}\label{AppenC.1}
\begin{proof}
Recall the F\"{o}llmer flow \eqref{follmer} with $\|\nabla V(t,\cdot)\|_\infty\leq (K_1+K_2)t$ in Theorem~\ref{thm1}, by following the Proposition \ref{lemma_ref5}~ \citep{mikulincer2023lipschitz}, we arrive at the following result,
\begin{equation*}
\text{Lip}(\overset{\leftarrow}{X}_1(x)) \leq\|\nabla \overset{\leftarrow}{X}_1(x)\|_{op} \leq\exp \left( \int_0^1 (K_1+K_2)s ds \right) \leq\exp \left( \frac{K_1+K_2}{2}\right).
\end{equation*}
\end{proof}

\subsection{Proof of Corollary~\ref{thm3}}\label{AppenC.2}
\begin{proof}
For any $x, y \in \mathbb{R}^d$, $ t \in [t_n, t_{n+1}]$ with $k = 0, 1, \ldots, N-1$,  It\^o's  formula gives
\begin{equation*}
\begin{aligned}
\frac{\mathrm{d}|\overset{\leftarrow}{Y}_t(x)-\overset{\leftarrow}{Y}_t(y)|^2}{dt}&= 2\langle \overset{\leftarrow}{Y}_t(x)-\overset{\leftarrow}{Y}_t(y), \widetilde{V}\big(t_n,\overset{\leftarrow}{Y}_{t_n}(x)\big)-\widetilde{V}\big(t_n,\overset{\leftarrow}{Y}_{t_n}(y)\big)\rangle.
\end{aligned}
\end{equation*}
Using the Cauchy-Schwarz inequality, we obtain
\begin{equation*}
\begin{aligned}
\frac{\mathrm{d}|\overset{\leftarrow}{Y}_t(x)-\overset{\leftarrow}{Y}_t(y)|^2}{dt}&\leq 2\sqrt{|\overset{\leftarrow}{Y}_t(x)-\overset{\leftarrow}{Y}_t(y)}|^2
\sqrt{|\widetilde{V}\big(t_n,\overset{\leftarrow}{Y}_{t_n}(x)\big)-\widetilde{V}\big(t_n,\overset{\leftarrow}{Y}_{t_n}(y)\big)|^2}.
\end{aligned}
\end{equation*}
Therefore,
\begin{equation*}
\begin{aligned}
 \frac{d|\overset{\leftarrow}{Y}_t(x)-\overset{\leftarrow}{Y}_t(y)|}{dt}&\leq|\widetilde{V}\big(t_n,\overset{\leftarrow}{Y}_{t_n}(x)\big)-\widetilde{V}\big(t_n,\overset{\leftarrow}{Y}_{t_n}(y)\big)|\\
 &\leq\nabla \widetilde{V}|\overset{\leftarrow}{Y}_{t_n}(x)-\overset{\leftarrow}{Y}_{t_n}(y)|\\
 &\leq(K_1+K_2+K_8)t_n|\overset{\leftarrow}{Y}_{t_n}(x)-\overset{\leftarrow}{Y}_{t_n}(y)|,
 \end{aligned}
\end{equation*}
where the last inequality uses Lipschitzness of $\widetilde{V}$ in Assumption~\ref{assump4}.
Integration over time yields
\begin{equation*}
\begin{aligned}
   |\overset{\leftarrow}{Y}_{t_{n+1}}(x)-\overset{\leftarrow}{Y}_{t_{n+1}}(y)|
    &\leq  |\overset{\leftarrow}{Y}_{t_n}(x)-\overset{\leftarrow}{Y}_{t_n}(y)|+(t_{n+1}-t_n)(K_1+K_2+K_8)t_n|\overset{\leftarrow}{Y}_{t_n}(x)-\overset{\leftarrow}{Y}_{t_n}(y)|.
 \end{aligned}
\end{equation*}
Then, it follows that the Lipschitz constant of the discrete flow satisfies $\text{Lip}(\widetilde{T}_n)\leq1+(t_{n+1}-t_n)(K_1+K_2+K_8)t_n$. Iterating this bound over all $n = 0, 1, \ldots, N-1$, we obtain the following estimate over the full interval $[0,1]$,
\begin{equation*}
\begin{aligned}
     |\overset{\leftarrow}{Y}_{1}(x)-\overset{\leftarrow}{Y}_{1}(y)|&\leq \left(\prod_{n=0}^{N-1}\text{Lip}(\widetilde{T}_n)\right)|\overset{\leftarrow}{Y}_{t_0}(x)-\overset{\leftarrow}{Y}_{t_0}(y)|\\
    &\leq \exp\left(\frac{K_1+K_2+K_8}{2}\right)|x-y|.
\end{aligned}
\end{equation*}
Then we complete the proof.
\end{proof}

\subsection{Proof of  Theorem~\ref{thm4.2}}\label{AppenDD.1}
\begin{proof}
Recall that Assumption~\ref{assump1}, \ref{assump2} ensure the well-podeness and Lipschitzness of F\"{o}llmer flow $(\overset{\leftarrow}{X}_t)_{t_\in [0,1]}$ in \eqref{follmer}, with 
\begin{equation*}
\text{Lip}({T}_n)\leq\exp\left(\int_{t_n}^{t_{n+1}}(K_1+K_2)t dt\right) \quad\text{and}\quad\prod_{j=0}^{N-1} \text{Lip}({T}_j)\leq\exp\left( \frac{K_1+K_2}{2}\right),
\end{equation*}
as established in Lemma~\ref{thm2} and Corollary~\ref{thm333}. 

Furthermore, Assumption~\ref{assump4} guarantees the Lipschitzness of learned discret F\"{o}llmer flow  $(\overset{\leftarrow}{Y}_t)_{t_\in [0,1]}$ in \eqref{discret-follmer}, with 
\begin{equation*}
\text{Lip}(\widetilde{T}_n)\leq1+(t_{n+1}-t_n)(K_1+K_2+K_8)t_n\quad\text{and}\quad \prod_{j=0}^{N-1} \text{Lip}(\widetilde{T}_j)\leq\exp\left(\frac{K_1+K_2+K_8}{2}\right),
\end{equation*}
as shown in Corollary~\ref{thm3}. 

Therefore, it only remains to verify that the stepwise approximation error satisfies \textbf{Assumption~\ref{assump333}}. 
To analyze the discretization error at each step, we recall the expression in ~\eqref{T_n}:
\begin{equation*}
T_n(\overset{\leftarrow}{X}_{t_{n}}) = \overset{\leftarrow}{X}_{t_{n+1}}.
\end{equation*}
Applying the vector-valued Taylor expansion of $\overset{\leftarrow}{X}_{t_{n+1}}$ over $[t_n, t_{n+1}]$, the remainder  is defined by
\begin{equation*}
\begin{aligned}
R(t):=T_n(\overset{\leftarrow}{X}_{t_{n}})-\overset{\leftarrow}{X}_{t_{n}}-hV(t_n,\overset{\leftarrow}{X}_{t_{n}}).
\end{aligned}
\end{equation*}
Under Assumption~\ref{assump1}, we can derive the second moment bound by the forward diffusion process \eqref{forward}
\begin{equation*}
\begin{aligned}
\mathbb{E}_{\widebar{p}_t}|\overset{\rightarrow}{X}_{t}|^2&=\mathbb{E}_{\widebar{p}_t}|\overset{\rightarrow}{X}_{t}-(1-t)\overset{\rightarrow}{X}_{0}|^2+\mathbb{E}_{\widebar{p}_t}|(1-t)\overset{\rightarrow}{X}_{0}|^2\\
&\leq t(2-t)Tr(C)+(1-t)^2M_2\\
&\leq M_0.
\end{aligned}
\end{equation*}
Then the expectation of the $R(t)$ is controlled by 
\begin{equation*}
\begin{aligned}
\mathbb{E}|R(t)|^2&=\mathbb{E}_{\overset{\leftarrow}{X}_{t_n}\sim \overset{\leftarrow}{P}_{t_n}}|T_n(\overset{\leftarrow}{X}_{t_n})-\overset{\leftarrow}{X}_{t_n}-hV(t_n,\overset{\leftarrow}{X}_{t_n})|^2\\
&\leq\frac{h^4}{4}\sup_{\tau\in(t_n,t_{n+1})}\mathbb{E}_{\overset{\leftarrow}{X}_{\tau}\sim \overset{\leftarrow}{P}_{\tau}}|\partial_\tau V(\tau,\overset{\leftarrow}{X}_{\tau})|^2\\
&\leq \frac{3h^4}{4}\left(K_5^2{M_0}+ \frac{K_6^2}{1-t^2}+ K_7^2\right),\quad \forall t \in [0,1),
\end{aligned}
\end{equation*}
where the last inequality follows from the bound on $|\partial_tV(t,x)|$ in Theorem~\ref{thm1}, which gives
\begin{equation*}
    \mathbb{E}_{\overset{\leftarrow}{X}_{\tau}\sim \overset{\leftarrow}{P}_{\tau}}|\partial_\tau V(\tau,\overset{\leftarrow}{X}_{\tau}))|^2\leq 3\left(K_5^2{M_0}+\frac{K_6^2}{1-\tau^2}+ K_7^2\right).
\end{equation*}

Consequently, the local truncation error is bounded by
\begin{equation}\label{I2_11}
\begin{aligned}
&\sqrt{\mathbb{E}_{\overset{\leftarrow}{X}_{t_n}\sim \overset{\leftarrow}{P}_{t_n}}|T_n(\overset{\leftarrow}{X}_{t_{n}})-\widetilde{T}_n(\overset{\leftarrow}{X}_{t_{n}})|^2}\\
=&\sqrt{\mathbb{E}_{\overset{\leftarrow}{X}_{t_n}\sim \overset{\leftarrow}{P}_{t_n}}|\overset{\leftarrow}{X}_{t_n}+hV(t_n,\overset{\leftarrow}{X}_{t_n})+R(t_n)-\overset{\leftarrow}{X}_{t_n}-h\widetilde{V}(t_n,\overset{\leftarrow}{X}_{t_n})|^2 }\\
\leq&\sqrt{h^2\mathbb{E}_{\overset{\leftarrow}{X}_{t_n}\sim \overset{\leftarrow}{P}_{t_n}}|V(t_n,\overset{\leftarrow}{X}_{t_n})-\widetilde{V}(t_n,\overset{\leftarrow}{X}_{t_n})|^2}+\sqrt{\mathbb{E}|R(t_n)|^2}\\
\leq&h\left(\frac{\sqrt{3}}{2}\Big(K_5\sqrt{M_0}+ \frac{K_6}{\sqrt{1-t_n^2}}h+ K_7\Big)h+\epsilon\right).
\end{aligned}
\end{equation}
The second inequality holds by the error between $\widetilde{V}(t_n,x)$ and $V(t_n,x)$ stated in Assumption~\ref{assump3}. This completes the verification of Assumption~\ref{assump333}.

 Now in Theorem~\ref{444}, we employ coupling between $\overset{\leftarrow}{X}_0\sim \overset{\rightarrow}{P}_{1}=\gamma_C$ and $\overset{\leftarrow}{Y}_0 \sim \overset{\leftarrow}{Q}_0 = \gamma_C$, 
\begin{equation}\label{w}
\begin{aligned}  
 \mathcal{W}_2(\overset{\leftarrow}{P}_{1}, \overset{\leftarrow}{Q}_{1})
 \leq& \exp\left(\frac{K_1+K_2+K_8}{2}\right)\mathcal{W}_2(\overset{\leftarrow}{P}_{0}, \overset{\leftarrow}{Q}_{0})\\
 &+\frac{\exp\big( \frac{K_1+K_2+K_8}{2}\big)-1}{((K_1+K_2+K_8)h)/2}\cdot h \Bigg(\frac{\sqrt{3}}{2}\left(K_5\sqrt{M_0}+ K_7\right)h+\epsilon\Bigg)\\
&+\exp\left(\frac{K_1+K_2+K_8}{2}\right)\frac{\sqrt{3}K_6\pi}{4}h\\
\leq& \exp\big( \frac{K_1+K_2+K_8}{2}\big)\Bigg(\sqrt{3}\left(K_5\sqrt{M_0}+ K_9\right)h+2\epsilon\Bigg).
\end{aligned}
\end{equation}
where
a straightforward calculation shows $\displaystyle\sum_{n=0}^{N-1}\frac{K_6}{\sqrt{1-t_n^2}}=\frac{K_6\pi}{2}$; Accordingly, set $\displaystyle K_9:=\frac{K_6\pi}{4}+K_7$.
Noting that $\overset{\leftarrow}{P}_{1}=\overset{\rightarrow}{P}_{0}$, we obtain the conclusion in~\eqref{con}.
\end{proof}

\subsection{Result under the relaxation of Assumption~\ref{assump4}} \label{re}
\begin{remark}[Relaxation of Assumption~\ref{assump4}]\label{relax}
Assumption 3.13 can be relaxed as follows. We only require that for all $k \in \{0,1,\dots,N-1\}$,
\begin{equation}\label{B}
\sum_{n=0}^{N-1} (t_{n+1}-t_n) \|\nabla \widetilde{V}(t_n, \cdot) \|_\infty \le B,
\end{equation}
for some constant $B>0$. Under this condition, the Lipschitz bound stated in Corollary~\ref{thm3} is refined to $\exp(B)$. Consequently, Theorem~\ref{thm4.2} remains valid with the product of Lipschitz coefficient $\prod_{j=0}^{N-1}\text{Lip}(\widetilde{T}_j)$, where the factor $\exp\big( \frac{K_1+K_2+K_8}{2}\big)$ is replaced throughout by $\exp(B)$.
\end{remark} 
\begin{proof}
Denote $\beta_n=\|\nabla \widetilde{V}(t_n, \cdot)\|$, then
\begin{equation*}
\begin{aligned}
 \frac{d|\overset{\leftarrow}{Y}_t(x)-\overset{\leftarrow}{Y}_t(y)|}{dt}\leq|\widetilde{V}\big(t_n,\overset{\leftarrow}{Y}_{t_n}(x)\big)-\widetilde{V}\big(t_n,\overset{\leftarrow}{Y}_{t_n}(y)\big)|
 \leq\beta_n|\overset{\leftarrow}{Y}_{t_n}(x)-\overset{\leftarrow}{Y}_{t_n}(y)|.
 \end{aligned}
\end{equation*}
Integration over time yields
\begin{equation*}
\begin{aligned}
   |\overset{\leftarrow}{Y}_{t_{n+1}}(x)-\overset{\leftarrow}{Y}_{t_{n+1}}(y)|
    &\leq  |\overset{\leftarrow}{Y}_{t_n}(x)-\overset{\leftarrow}{Y}_{t_n}(y)|+(t_{n+1}-t_n)\beta_n|\overset{\leftarrow}{Y}_{t_n}(x)-\overset{\leftarrow}{Y}_{t_n}(y)|,
 \end{aligned}
\end{equation*}
yielding $\text{Lip}(\widetilde{T}_n)\leq1+(t_{n+1}-t_n)\beta_n$. Recall \eqref{B}, we obtain
\begin{equation}
\text{Lip}(\overset{\leftarrow}{Y}_1(x)) \leq\prod_{j=0}^{N-1} \text{Lip}(\widetilde{T}_j)\leq \exp\left(\sum_{n=0}^{N-1}(t_{n+1}-t_n)\beta_n\right)=\exp(B).
\end{equation}
Subsequently, we analyze the impact of relaxing Assumption~\ref{assump4} on convergence.
The preliminary estimate \eqref{I2_11} still hold. 
Under the new Lipschitz coefficient $\text{Lip}(\widetilde{T}_n)\leq1+(t_{n+1}-t_n)\beta_n$, equation \eqref{w} takes the following form,
\begin{equation}
\begin{aligned}  
 &\mathcal{W}_2(\overset{\leftarrow}{P}_{1}, \overset{\leftarrow}{Q}_{1})\\
 \leq&\exp\left(B\right)\mathcal{W}_2(\overset{\leftarrow}{P}_{0}, \overset{\leftarrow}{Q}_{0})+\frac{\exp\big(B\big)}{h}\cdot h \Bigg(\frac{\sqrt{3}}{2}\left(K_5\sqrt{M_0}+ K_7\right)h+\epsilon\Bigg)+\exp\left(B\right)\frac{\sqrt{3}K_6\pi}{4}h\\
\leq& \exp\big( B\big)\Bigg(\frac{\sqrt{3}}{2}\left(K_5\sqrt{M_0}+ K_9\right)h+\epsilon\Bigg).
\end{aligned}
\end{equation}
where $K_5$ and $K_9$ are same with Theorem~\ref{thm4.2}.
\end{proof}

\subsection{Proof of convergence theory of 1-rectified flow}\label{Interpolation}
\begin{proof}
The forward process in the first-step rectification~\citep{liu2022flow, rout2024semantic}, constructed
by independently coupling the data with a standard Gaussian reference
distribution, is defined by the interpolation $\hat{X}_t =\hat{\alpha}_t \hat{X}_1 + \hat{\beta}_t \mathcal{N}$ with $\hat{\alpha}_t = 1-t, \hat{\beta}_t = t$. 
Then the transition probability distribution from $\hat{X}_0$ to $\hat{X}_t$ is given by 
\begin{equation}
\hat{X}_t |\hat{X}_1 = x_1 \sim \mathcal{N}\big((1 - t)x_1, tI_d\big).
\end{equation}
Under the Gaussian tail Assumption~\ref{assump2}, the score function can be calculated by
\begin{equation*}
\begin{aligned}
\hat{S}(t, x):=\nabla\log \hat{p}_t 
=\nabla \log \int_{\mathbb{R}^d} \left( 2\pi\, \text{det} (\hat{B}(t)) \right)^{-\frac{d}{2}} G(t,x,y)\cdot \exp\left(-\frac{|x|_{\hat{A}_{t}}}{2}\right)\exp \left( h(y) \right)\, \mathrm{d}y,
\end{aligned}
\end{equation*}
where $G(t,x,y):=\exp\left( -\frac{|\hat{K}(t)x -y|^2_{\hat{B}(t)}}{2}\right)$, $\hat{A}_{t}=At^2+(1-t)^2I_d$, $\hat{K}(t)=(A\hat{A}_{t}^{-1})t$, $\hat{B}(t)=(A\hat{A}_{t}^{-1})(1-t)^2$.\\
First, we consider the modified score function: 
\begin{equation*}
\begin{aligned}
\widetilde{\hat{S}}(t,x)&:=\hat{S}(t,x)+\hat{A}_{t}^{-1}x\leq |\hat{K}(t)\nabla h(x)|.
\end{aligned}
\end{equation*}
Let $\hat{K}=\sup_{0\leq t\leq 1}{|\frac{1}{t}\hat{K}(t)|}=\sup_{0\leq t\leq 1}{|A \hat{A}_{t}^{-1}|}\leq1+ \|A\|$, then we have
\begin{equation*}
|\widetilde{\hat{S}}(t,\cdot)|_\infty\leq \hat{K}|\nabla h|_\infty t= \hat{K}_0t
\end{equation*}
with $\hat{K}_0:=\hat{K}|\nabla h|_\infty$.\\
Taking the derivative twice along that direction and using the same method as above, it yields
\begin{equation*}
\|\nabla\widetilde{\hat{S}}(t,\cdot)\|_\infty \leq \hat{K}(t)^2(\|\nabla ^2 h\|_\infty+|\nabla h|^2_\infty)=\hat{K}_1t^2,
\end{equation*}
where  $\hat{K}_1:=\hat{K}^2(\|\nabla ^2 h\|_\infty+|\nabla h|^2_\infty)$. 
Define $\hat{K}_2:=\sup_{0\leq t\leq1}\|\frac{1}{t}\big(I_d-(1-t)\hat{A}_{t}^{-1})\big\|=\sup_{0\leq t\leq1}\|(A+C-I_d)\big(At^2+(1-t)^2I_d\big)^{-1}\|$, we obtain 
\begin{equation*}
\begin{aligned}
    |\hat{V}(t,x)|:=\left|\frac{x+(1-t)\hat{S}(t,\cdot)}{t}\right|=\left|\frac{(1-t)\widetilde{\hat{S}}(t,\cdot)+\big(I_d-(1-t)\hat{A}_{t}^{-1}\big)x}{t}\right|\leq \hat{K}_0+\hat{K}_2|x|,
\end{aligned} 
\end{equation*}
and 
\begin{equation*}
\begin{aligned}
    \|\nabla \hat{V}(t,x)\|_\infty=\left\|\nabla\left(\frac{(1-t)\widetilde{\hat{S}}(t,\cdot)+(I_d-(1-t)\hat{A}_{t}^{-1})x}{t}\right)\right\|_\infty\leq \hat{K}_1t+\hat{K}_2.
\end{aligned}
\end{equation*}
Similar to the proof of $|\partial_t V|$ of F\"{o}llmer flow in Appendix B.2, we derive
\begin{equation*}
\begin{aligned}
|\partial_t\hat{V}(t,x)|\leq  \hat{K}_5|x|+\frac{\hat{K}_6}{\sqrt{1-t^2}}+ \hat{K}_7.
\end{aligned}
\end{equation*}
where  $\hat{K}_5,\hat{K}_6$ and $\hat{K}_7$ are dimension-free constants.
This completes the argument that the trajectory $\hat{X}_t$ of $1$-rectified flow possesses the similar regularity properties for its velocity field as those established in Theorem~\ref{thm1}. Consequently, by following the proof
steps of Theorem~\ref{thm4.2} in Appendix~\ref{AppenDD.1}, the desired result directly follows.
\end{proof}

\section{Relation to Prob ODE}\label{sec:Prob_ODE}
The probabilistic ODE (Prob ODE)~\citep{song2020score,gao2024convergence1} 
\begin{equation}\label{Prob ODE}
\frac{d \widehat{X}_s}{ds}=-(\widehat{X}_s-\nabla\log \widehat{p}_s(\widehat{X}_s)), \quad s \in[T,0],
\end{equation}
can be viewed as a time-rescaled F\"{o}llmer flow, via $s \mapsto \ln\left(\frac{1}{t}\right)$, where $T$ is finite time. Since Lipschitzness of the transport maps are invariant under time rescaling, the results of Corollary~\ref{corollary1} apply directly to the \eqref{Prob ODE}. The discretization can be chosen as
\begin{equation*}
 s_n = \ln\left(\frac{1}{t_n}\right), \quad n=1,\dots,N. 
\end{equation*}
In the forward Prob ODE, the distribution approaches Gaussian only as $s \to +\infty$. To realize this limit in practice, we set $s_0 = +\infty$ and initialize the dynamics with $\widehat{X}_{s_0} \sim \mathcal{N}(0,C)$, and take $T=s_1=\ln\left(\frac{1}{t_1}\right)$ with $\widehat{X}_{s_1} =\widehat{X}_{s_0}+t_1\sqrt{C}\mathbb{E}_{\widebar{p}_0}[X]$,  
which corresponds to a single-step first-order Euler method.
For $n$-step ($n\geq2$), the update $\displaystyle\widehat{X}_{s_n}=\widehat{X}_{s_{n-1}}+(e^{s_{n-1}-s_n}-1)\Big(S_\theta(e^{-s_{n-1}},\widehat{X}_{s_{n-1}})+\widehat{X}_{s_{n-1}}\Big)$ follows the exponential Euler scheme.
The result of Corollary~\ref{corollary1} indicates that our method improves the computational complexity of the Prob ODE, whereas \citet{wang2024wasserstein} shows $N = \mathcal{O}\left(\frac{M_0}{\epsilon_0^2}\left(\log\frac{M_2 + \mathrm{Tr}(C)}{\epsilon_0^2}\right)^3\right)$ under the same setting.

\section{Convergence in the Bayesian Inverse problems}\label{Bayesian_part}
We are aware of several posterior analyses, such as Bayesian inverse problems~\citep{van2021bayesian}, used in uncertainty quantification to infer model parameters $x$ from observations $y \in \mathbb{R}^m$. The posterior typically takes the form of
\begin{equation}\label{Bay_p0}
\widebar{p}_0(x)
= D_0 \exp\!\left(-\frac{|x|^2_C}{2}\right)
      \exp\!\left(-\frac{|G(x)-y|^2_\Sigma}{2}\right),
\end{equation}
where $D_0$ is a normalizing constant, $C$ denotes the covariance matrix of the Gaussian prior, $\Sigma$ represents the covariance of the observational noise and $G \in C^b_2(\mathbb{R}^d, \mathbb{R}^m)$ is a nonlinear forward operator.
In our training framework, we adopt the Gaussian prior with covariance 
$C$ from \eqref{Bay_p0} as the invariant measure of the forward diffusion process \eqref{forward}. The conditioned score~\citep{batzolis2021conditional} in the score matching is trained by minimizing
\begin{equation*}
\mathbb{E}_{\widebar{p}_t(x;y)} \big| \widetilde{s}(1-t,x;y) - C\nabla_x \log \widebar{p}_{t}(x;y)\big|^2,
\end{equation*}
where $\widebar{p}_{t}$ denotes the joint law of $(X_t,Y)$ with $Y=G(X_0)+\mathcal{N}(0,\Sigma)$.
For ODE-based generation of the posterior distribution with observation $y$, we impose the following assumption on the approximation error of the velocity field $V(t,x;y)$ given in \eqref{follmer}.
\begin{assumption}\label{Accuracy_Bay}
Fixing observation $y$, for each time discretization point $t_n$,
\begin{equation*}
\mathbb{E}_{\overset{\rightarrow}{P}_{1-t_n};y}  |V(t_n,x;y) - \widetilde{V}(t_n,x;y)|^2 \leq \epsilon^2.
\end{equation*}
\end{assumption}

\begin{theorem}\label{thmBayesian}
Suppose third moment Assumption~\ref{assump1}, accuracy Assumption~\ref{Accuracy_Bay} and regularity Assumption~\ref{assump4} hold. Using the Euler method to the F\"{o}llmer flow with uniform step size $h = t_{n+1} - t_n \leq 1$ yields,
\begin{equation}\label{Bay}
\mathcal{W}_2(\overset{\rightarrow}{P}_0(\cdot,y), \overset{\leftarrow}{Q}_{1}(\cdot,y))  \leq \exp\big( \frac{\widetilde{K_1}+K_8}{2}\big)\Bigg(\frac{\sqrt{3}}{N}\left(\widetilde{K_5}\sqrt{M_0}+ \widetilde{K_9}\right)+2\epsilon\Bigg).
\end{equation}
where $\widetilde{K_1},\widetilde{K_5},\widetilde{K_9}$ are dimension-free constants depending on $(\|C\|, \|\Sigma\|, G, y)$, see Table~\ref{tab:coe_bay}, and the constant $K_8$ is defined in Assumption~\ref{assump4}.
\end{theorem}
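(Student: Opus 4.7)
The plan is to deduce Theorem~\ref{thmBayesian} from Theorem~\ref{thm4.2} by recognising the Bayesian posterior \eqref{Bay_p0} as a particular instance of the Gaussian tail Assumption~\ref{assump2} with $A=C$. Once this identification is made, the convergence machinery already developed for the F\"ollmer flow applies, and only the explicit constants require modification.

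First I would write
\[
\widebar{p}_0(x) = \exp\!\left(-\tfrac{1}{2}|x|^2_C\right)\exp(h(x)), \qquad h(x) := \log D_0 - \tfrac{1}{2}|G(x)-y|^2_\Sigma,
\]
and match $A = C$ in Assumption~\ref{assump2}. Item (i) is then immediate since $A$ and $C$ share their eigenbasis and $\|AC^{-1}\|=\|CA^{-1}\|=1$. For item (ii), direct differentiation yields
\[
\nabla h(x) = -G'(x)^T \Sigma^{-1}(G(x)-y), \qquad \nabla^2 h(x) = -G'(x)^T \Sigma^{-1} G'(x) - \sum_{i=1}^m \bigl(\Sigma^{-1}(G(x)-y)\bigr)_i \nabla^2 G_i(x),
\]
so the hypothesis $G \in C^b_2(\mathbb{R}^d,\mathbb{R}^m)$ together with the finiteness of $\|C\|$, $\|\Sigma^{-1}\|$ and $|y|$ delivers $|\sqrt{C}\nabla h|_\infty < \infty$ and $\|C\nabla^2 h\|_\infty < \infty$. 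The third-moment Assumption~\ref{assump1} is then automatic: the likelihood factor $\exp(-\tfrac12|G(x)-y|^2_\Sigma)$ is uniformly bounded above, so the posterior has sub-Gaussian tails inherited from the prior $\mathcal{N}(0,C)$, ensuring moments of every order.

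Next I would specialise the coefficient Table~\ref{tab:coe} to $A = C$. This choice forces $K = 1$ and, crucially, $K_2 = K_3 = K_4 = 0$, so the velocity-field regularity of Theorem~\ref{thm1} collapses to the purely posterior-driven quantities
\[
\widetilde{K_1} := \|C\nabla^2 h\|_\infty + |\sqrt{C}\nabla h|_\infty^2,
\]
and analogous simplifications of $\widetilde{K_5}, \widetilde{K_6}, \widetilde{K_7}, \widetilde{K_9}$ obtained by reading off Table~\ref{tab:coe}. Corollary~\ref{thm333} then upgrades to a Lipschitz bound $\exp(\widetilde{K_1}/2)$ for the continuous flow, and combining with Assumption~\ref{assump4} via Corollary~\ref{thm3} yields $\exp((\widetilde{K_1}+K_8)/2)$ for the discrete flow. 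Running the accumulation argument of Appendix~\ref{AppenDD.1} verbatim, with Assumption~\ref{Accuracy_Bay} in place of Assumption~\ref{assump3} to control the conditional per-step score error, and substituting uniform step size $h = 1/N$, reproduces the bound \eqref{Bay}.

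The main obstacle I anticipate is the explicit bookkeeping of how the observation $y$ enters the specialised constants: $|\nabla h|$ grows linearly in $\|G\|_\infty + |y|$ (with $\|G\|_\infty$ interpreted from $G \in C^b_2$), and through Table~\ref{tab:coe} this propagates into $\widetilde{K_5}$ and $\widetilde{K_9}$ via products with $C_1, C_2, C_3, C_4$, so the derivation of compact closed forms for these constants in terms of $(\|C\|, \|\Sigma\|, G, y)$ is the only genuinely technical step. A secondary, but benign, point is that these constants remain dimension-free provided $\|G\|_\infty, \|G'\|_\infty, \|G''\|_\infty, \|\Sigma^{-1}\|, \|C\|$ are dimension-free, consistent with the infinite-dimensional extension envisaged in Section~\ref{Main results}.
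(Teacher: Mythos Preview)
Your proposal is correct and follows essentially the same route as the paper: set $A=C$, verify Assumption~\ref{assump2} by bounding $|\sqrt{C}\nabla h|_\infty$ and $\|C\nabla^2 h\|_\infty$ from the $C^b_2$ hypothesis on $G$, then invoke Theorem~\ref{thm4.2} with the specialised constants. Your observation that $K_2=K_3=K_4=0$ under $A=C$ is a helpful simplification that the paper leaves implicit, and your remark that the third moment is automatic from the sub-Gaussian tail is a nice strengthening (the paper simply retains Assumption~\ref{assump1} as a hypothesis).
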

\begin{proof}
Take $A = C$, and $h(x) = -\frac{|G(x)-y|^2_{\Sigma}}{2}$, then $h(x)$ satisfies,
\begin{equation*}
\begin{aligned}
|\sqrt{C} \nabla h(x)| &= |\sqrt{C} \nabla G(x) \Sigma^{-1} (G(x) - y)| 
\le \|C\|^{\frac12} \left(|G|_\infty + |y|\right) \|\Sigma^{-1}\| \|\nabla G\|_\infty, \\
\|C \nabla^2 h(x)\| &= \|C \nabla^2 G(x) \Sigma^{-1} (G(x) - y) 
+ C \nabla G(x) \Sigma^{-1} \nabla G(x)^T \| \nonumber \\
&\le \|C\| \|\Sigma\|^{-1}  \|\nabla^2 G\|_\infty (|G|_\infty 
+ |y| \|\nabla G\|_\infty^2 ).
\end{aligned}
\end{equation*}
Then by Theorem~\ref{thm4.2}, we obtain the bound~\eqref{Bay} with the constants replaced as specified in Table~\ref{tab:coe_bay}.
\begin{table}[ht]
\caption{Explicit for coefficients in Thm.~\ref{thmBayesian}.}
\label{tab:coe_bay}
\begin{center}
\begin{tabular}{lcc}
\multicolumn{1}{c}{\bf Coefficient}  &\multicolumn{1}{c}{\bf Explicit expressions}  
\\ \hline \\
$\widetilde{K_0}$& $\| C \| \| \Sigma^{-1} \| \, \| \nabla G \|_{\infty} \, ( | G |_{\infty} + |y| )$\\
$\widetilde{K_1}$& $\| C \| \Big( \| \Sigma^{-1} \| \big( \| \nabla G \|_{\infty}^2 + ( | G |_{\infty} + |y| ) \| \nabla^2 G \|_{\infty} \big) + \| \Sigma^{-2} \| \, \| \nabla G \|_{\infty}^2 \, ( \| G \|_{\infty} + |y| )^2 \Big)$\\
$\widetilde{K_5}$& $\max\{3\widetilde{K_1},2(1+\sqrt{2})\widetilde{K_0}\widetilde{K_1}^{\frac12}\}$\\
$\widetilde{K_6}$ &$4\|C\|^{\frac12}\widetilde{K_1}$\\
$\widetilde{K_7}$ & $\max\{\widetilde{C_3}, \widetilde{C_4}\}$\\
$\widetilde{K_9}$ &$\|C\|^{\frac12}\widetilde{K_1}\pi+\widetilde{K_7}$\\
$\widetilde{C_3}$& $4\widetilde{K_1}(\widetilde{K_1}^{\frac12}+\widetilde{K_0})$\\
$\widetilde{C_4}$ &$4\sqrt{3}\widetilde{K_0}\widetilde{K_1}+\widetilde{K_0}^{\frac32}\widetilde{K_1}^{\frac12}+\frac{(1+\sqrt{6})^2}{2(1+\sqrt{2})}\widetilde{K_0}^2\widetilde{K_1}^{\frac12}$
\end{tabular}
\end{center}
\end{table}
\end{proof}

\begin{remark}
With fixed $\widetilde{K}_1, \,\widetilde{K}_5$ and $\widetilde{K}_9$, for $\epsilon_0$ accuracy in $W_2$ distance for~\ref{Bay}, one requires at most:,
\begin{equation*}
N = \mathcal{O}\left(\frac{\sqrt{M_0}}{\epsilon_0}\right), \quad  \epsilon = \mathcal{O}(\epsilon_0).
\end{equation*}
\end{remark}

\section{Conclusion and future directions}\label{conclusion}
We have established a rigorous pathway toward the optimal $\sqrt{d}$ complexity bound for flow-based generative models under the Wasserstein metric. Our approach quantifies the temporal scaling of truncation errors and controls their accumulation through dimension-free Lipschitz estimates of the backward flow. We further verify this framework in the special case of the F\"{o}llmer flow and $1$-rectified flow, where well-posedness and convergence hold under the Gaussian tail assumption. Such an assumption accommodates both regular and singular targets (with early stopping) and extends naturally to infinite-dimensional settings, with implications for Bayesian inverse problems.\\
Several directions merit further investigation:
\begin{itemize}
\item Generalization of Gaussian tail. Weak log-concavity~\cite{bruno2025wasserstein,gentiloni2025beyond} generalizes the Gaussian tail assumption, yet current results under this broader condition yield only $\mathcal{O}(d)$ complexity. Determining whether the 
$\mathcal{O}(\sqrt{d})$ bound can persist in this broader assumption remains a central theoretical challenge.
\item Higher-order integrators. Extending the error analysis to higher-order numerical schemes requires refined regularity assumptions on the velocity field, including higher-order time derivatives. Determining the minimal regularity required for higher-order Wasserstein convergence will be pursued in our subsequent work.
\item Designing of learning objectives. Alternative training objectives, temporal reweighting strategies, and adaptive sampling schemes may significantly influence both the theoretical error bounds and the practical sampling performance. Understanding these effects in a principled manner will be an important direction of our future work.
\item Step-size optimization. While the present analysis employs a uniform step size, designing optimal or adaptive step-size schedules-particularly in view of the singular behavior induced by the $\frac{1}{\sqrt{1-t^2}}$ factor-will be a central direction of our forthcoming work.
\item Data-driven Lipschitz estimation. Our Lipschitz bounds are derived as analytic upper bounds and are therefore conservative. Constructing posterior or data-driven estimators for the effective Lipschitz constants of the transport maps may yield significantly sharper Wasserstein error bounds and more realistic complexity estimates.
\end{itemize}
Addressing these questions will further clarify the structural requirements for optimal-complexity sampling and broaden the applicability of flow-based methods to high-dimensional and infinite-dimensional inference.

\end{document}